\newcommand{\theHalgorithm}{\arabic{algorithm}}
\theoremstyle{plain}
\newtheorem{theorem}{Theorem}[section]
\newtheorem{proposition}[theorem]{Proposition}
\newtheorem{lemma}[theorem]{Lemma}
\theoremstyle{definition}
\newtheorem{definition}[theorem]{Definition}
\newtheorem{assumption}[theorem]{Assumption}
\theoremstyle{remark}
\newtheorem{remark}[theorem]{Remark}
\icmltitlerunning{DIFF2: Differential Private Optimization via Gradient Differences}
\begin{document}

\twocolumn[
\icmltitle{DIFF2: Differential Private Optimization via Gradient Differences \\ for Nonconvex Distributed Learning}



\icmlsetsymbol{equal}{*}

\begin{icmlauthorlist}
\icmlauthor{Tomoya Murata}{msi,tokyo}
\icmlauthor{Taiji Suzuki}{tokyo,riken}
\end{icmlauthorlist}

\icmlaffiliation{msi}{NTT DATA Mathematical Systems Inc., Tokyo, Japan}
\icmlaffiliation{tokyo}{Graduate School of Information Science and Technology, The University of Tokyo, Tokyo, Japan}
\icmlaffiliation{riken}{Center for Advanced Intelligence Project, RIKEN, Tokyo, Japan}

\icmlcorrespondingauthor{Tomoya Murata}{murata@msi.co.jp}
\icmlcorrespondingauthor{Taiji Suzuki}{taiji@mist.i.u-tokyo.ac.jp}

\icmlkeywords{Differential Privacy, Nonconvex Optimization, Distributed Learning, Utility Improvement}

\vskip 0.3in
]



\printAffiliationsAndNotice{\icmlEqualContribution} 

\begin{abstract}
Differential private optimization for nonconvex smooth objective is considered. In the previous work, the best known utility bound is $\widetilde O(\sqrt{d}/(n\varepsilon_\mathrm{DP}))$ in terms of the squared full gradient norm, which is achieved by Differential Private Gradient Descent (DP-GD) as an instance, where $n$ is the sample size, $d$ is the problem dimensionality and $\varepsilon_\mathrm{DP}$ is the differential privacy parameter. To improve the best known utility bound, we propose a new differential private optimization framework called \emph{DIFF2 (DIFFerential private optimization via gradient DIFFerences)} that constructs a differential private global gradient estimator with possibly quite small variance based on communicated \emph{gradient differences} rather than gradients themselves. It is shown that DIFF2 with a gradient descent subroutine achieves the utility of $\widetilde O(d^{2/3}/(n\varepsilon_\mathrm{DP})^{4/3})$, which can be significantly better than the previous one in terms of the dependence on the sample size $n$. To the best of our knowledge, this is the first fundamental result to improve the standard utility $\widetilde O(\sqrt{d}/(n\varepsilon_\mathrm{DP}))$ for nonconvex objectives. Additionally, a more computational and communication efficient subroutine is combined with DIFF2 and its theoretical analysis is also given. Numerical experiments are conducted to validate the superiority of DIFF2 framework. 
\end{abstract}

\section{Introduction}\label{sec: intro}
Privacy protection has become one of the important components in modern artificial intelligence technologies. Datasets consisted of browsing history, check-in data, medical records, financial records and so on, often contain private information, and then publishing the statistics like trained models or computed gradients on the private datasets causes a leak of the private information. In the recently proposed federated learning framework \citep{konevcny2016federated, mcmahan2017communication}, where each client possesses a private local dataset and iteratively exchanges the own models or gradients with the central server, it is crucial to prevent the malicious clients from stealing the private information contained in the local datasets. 

Actually, many attacks to steal the private information from the trained models or gradients are known. Major attacks include membership inference attack \citep{shokri2017membership, yeom2018privacy, nasr2019comprehensive} and reconstruction attack \citep{fredrikson2015model, zhu2019deep, yang2019neural, wang2019beyond, geiping2020inverting, zhang2020secret}. Membership inference attack aims to infer the presence of the known individual or record in the dataset from the shared statistics of the dataset. Reconstruction attack aims to reconstruct the data samples of a victim user or client from the shared information like gradients. Due to the empirical success of these attacks, privacy protection technology is essentially required.

To preserve the privacy of the datasets, the simplest approach is data anonymization. Roughly speaking, data anonymization attempts to directly remove personally identifiable information from the datasets. The most major data anonymization technique is $k$-anonymization \citep{samarati1998protecting} and its variants \citep{li2006t, machanavajjhala2007diversity}. $k$-anonymization suppresses/generalizes attributes and/or adds dummy records to ensure that each record is similar to at least $k-1$ other records on the potentially identifying attributes called quasi-identifiers. However, $k$-anonymization is insufficient for privacy protection due to the existence of de-anonymization algorithms using auxiliary information. On the other side, the advanced variants of $k$-anonymization often destroy the utility of the dataset and makes the learned model degraded, although it improves the security of $k$-anonymization.

Another approach for privacy protection is the use of differential privacy technique. Differential privacy is a general concept to quantify the degree of privacy protection \citep{dwork2006calibrating, dwork2010difficulties, dwork2010boosting, dwork2011firm, dwork2014algorithmic}.
To guarantee the differential privacy of the machine learning algorithms, several approaches have been proposed. The output perturbation and objective perturbation are general differential private approaches for non-distributed environments. In distributed environments, a nice way to guarantee differential privacy is gradient perturbation, where some noise is added to the gradients before sharing them \citep{song2013stochastic, abadi2016deep, mcmahan2017learning, geyer2017differentially, triastcyn2019federated}. For example, in Differential Private Gradient Descent (DP-GD) algorithm, independent mean zero Gaussian noise is added to the gradient and the resulting noisy gradient is used for the parameter update at each iteration. 

In differential private optimization, it is crucial to compare the optimization accuracy called \emph{utility} of optimization algorithms satisfying pre-defined differential privacy level, since there is generally a trade-off relationship between the utility and the privacy level. Recent DP optimization studies have not only provided differential privacy guarantees, but also theoretically investigated the utility of the optimization algorithms. 

In the previous work, the best known utility bound for general nonconvex differential private optimization is $\widetilde O(\sqrt{d}/(n\varepsilon_\mathrm{DP}))$\footnotemark in terms of the squared global gradient norm, which is achieved by DP-GD as an instance, where $n$ is the sample size, $d$ is the problem dimensionality and $\varepsilon_\mathrm{DP}$ is the differential privacy parameter \citep{zhang2017efficient, wang2017differentially}. It is an important open problem whether it is possible or not to improve the best known utility bound in terms of the dependence on the sample size $n$, i.e., sample efficiency.
\footnotetext{$\widetilde O$, $\widetilde \Theta$ and $\widetilde \Omega$ symbols hide an extra poly-logarithmic factor depending on $1/\delta_\mathrm{DP}$ for simple presentation.}

\subsection*{Main contributions}
We develop \emph{DIFFerential optimizatrion via gradient DIFFerences (DIFF2)} framework to improve the previous utility bound for nonconvex DP optimization. The main features of DIFF2 are described as follows:\par

{\bf{Algorithmic Features. }}Main algorithmic features of DIFF2 framework are: (i) sharing \emph{local gradient differences} rather than gradients themselves to reduce the DP noise size; and (ii) constructing a DP global gradient estimator using \emph{the sum of the aggregated gradient difference and the previous DP global gradient estimator}. The obtained global gradient estimator is fed to a general sub-routine that optimizes the objective based on it. 
\par
{\bf{Theoretical Features. }}DIFF2 with a gradient descent subroutine achieves the utility of $\widetilde O(d^{2/3}/(n\varepsilon_\mathrm{DP})^{4/3})$, that can be \emph{significantly better than the best known utility} $\widetilde O(\sqrt{d}/(n\varepsilon_\mathrm{DP}))$ of DP-GD for nonconvex objectives in terms of the dependence on the sample size $n$. Our DP analysis relies on the fact that \emph{a gradient difference has possibly much smaller sensitivity than a gradient itself for smooth loss}. To derive minimum DP noise levels, R\'enyi differential privacy (RDP) technique is essentially used. In our utility analysis, it is crucial to carefully \emph{evaluate both the bias and variance of the DP global gradient estimator} generated by DIFF2 and \emph{determine the optimal choice of the restart interval} (see Subsection \ref{subsec: utility_analysis_diff2_gd} for the details). We further consider a more efficient subroutine called Bias-Variance Reduced Local SGD (BVR-L-SGD) routine and show that DIFF2 with BVR-L-SGD routine achieves better computational and communication efficiency than DIFF2 with gradient descent routine.  

From a theoretical point of view, we compare the best achievable utility, the stochastic gradient complexity and communication complexity to achieve the utility for several DP algorithms on nonconvex cases in Table \ref{tab: comp_to_sr_nsgd_and_dp_srm}. $N$ is defined as $\min_{p \in [P]}{n_p}P$. From this table, we can see that DIFF2-BVR-L-SGD achieves the best utility with the best gradient complexity and the best communication complexity in terms of the dependence on $N$. \footnotemark \footnotetext{Although the gradient complexity of DIFF2-BVR-L-SGD is a bit messy, we can check that the dependence on $N$ is strictly less than $N^2$. } \footnotemark \footnotetext{Note that achieving better utility $\varepsilon_\mathrm{opt}$ generally requires higher gradient complexity and communication complexity. Thus, comparing SRN-SGD, DP-SRM, and DIFF2-GD with DP-GD and DP-SGD is not fair and is biased in favor of DP-GD and DP-SGD. Notably, however, DIFF2-BVR-L-SGD still achieves better gradient complexity and communication complexity than these methods including DP-SGD. }

\begin{table*}[t]
    \centering
    \scalebox{0.95}{
    \begin{tabular}{c c c c}
        \hline
        Algorithm & Utility & Gradient Complexity & Communication Complexity\\ \hline \hline
        DP-GD & $\frac{\sqrt{d}}{N\varepsilon_\mathrm{DP}}$ &  $N + \frac{N^2\varepsilon_\mathrm{DP}}{\sqrt{d}}$ 
        & $\frac{\text{Gradient Complexity}}{N}$ \\
        DP-SGD & $\frac{\sqrt{d}}{N\varepsilon_\mathrm{DP}}$ & $\frac{N^2\varepsilon_\mathrm{DP}^2}{d}$ &  $1 + \frac{N^2\varepsilon_\mathrm{DP}^2}{b d}$\\
        \begin{tabular}{c}
        SRN-SGD  \\
        \citep{tran2022momentum}
        \end{tabular} & $\frac{d^\frac{2}{3}}{(N\varepsilon_\mathrm{DP})^\frac{4}{3}} + \frac{1}{N}$ & $\frac{N^\frac{7}{3}\varepsilon_\mathrm{DP}^\frac{4}{3}}{d^\frac{2}{3}}$ 
        & $\frac{N^\frac{7}{3}\varepsilon_\mathrm{DP}^\frac{4}{3}}{d^\frac{2}{3}}$\footnotemark\\
        \begin{tabular}{c}
        DP-SRM  \\
        \citep{wang2019efficient}
        \end{tabular}  & $\frac{d^\frac{2}{3}}{(N\varepsilon_\mathrm{DP})^\frac{4}{3}}$ & $\frac{N^2\varepsilon_\mathrm{DP}^2}{d}$
        & $1+ \frac{(N\varepsilon_\mathrm{DP})^\frac{4}{3}}{d^\frac{2}{3}}$ \\
        DIFF2-GD ({\color{red}this study}) & $\frac{d^\frac{2}{3}}{(N\varepsilon_\mathrm{DP})^\frac{4}{3}}$ & $N + \frac{N^\frac{7}{3}\varepsilon_\mathrm{DP}^\frac{4}{3}}{d^\frac{2}{3}}$ 
        & $\frac{\text{Gradient Complexity}}{N}$\\
        \begin{tabular}{c}
             DIFF2-BVR-L-SGD ({\color{red}this study})  \\
             ($K=b=\sqrt{N}$)
        \end{tabular} 
        & $\frac{d^\frac{2}{3}}{(N\varepsilon_\mathrm{DP})^\frac{4}{3}}$ & $N + \frac{N^\frac{11}{6}\varepsilon_\mathrm{DP}^\frac{4}{3}}{d^\frac{2}{3}}+ \frac{N^\frac{19}{12}\varepsilon_\mathrm{DP}^\frac{5}{6}}{d^\frac{1}{6}} + \frac{N^\frac{5}{3}P\varepsilon_\mathrm{DP}^\frac{2}{3}}{d^\frac{1}{3}}$
        & $\frac{\text{Gradient Complexity}}{N} + \frac{\zeta_2(N\varepsilon_\mathrm{DP})^\frac{4}{3}}{d^\frac{2}{3}}$\\
   \hline
   \end{tabular}
   }
   \caption{Comparison of the order of the utility and the stochastic gradient complexity of $(\varepsilon_\mathrm{DP}, \delta_\mathrm{DP})$-DP guaranteed optimization algorithms for nonconvex objectives. ``Utility" means the best achievable optimization error in terms of the squared norm of the gradient $\mathbb{E}\|\nabla f(x_\mathrm{out})\|^2$. ``Gradient Complexity" means the necessary number of single stochastic gradient evaluations to achieve the best utility. ``Communication Complexity" indicates the necessary number of communication rounds to achieve the best utility. $\zeta_2 = O(1)$ is the Hessian heterogeneity of the local datasets. $N$ is defined as $\min_{p \in [P]}{n_p}P$. $d$ is the dimensionality of the parameter space.  \citet{tran2022momentum} and \citet{wang2019efficient} (updated version in 2023) are independent works to this study. }  
   \label{tab: comp_to_sr_nsgd_and_dp_srm}
\end{table*}

\subsection*{Related work}
Here, we briefly review the related work to this paper. \par
{\bf{Convex Optimization. }}
\citet{chaudhuri2011differentially} has studied output perturbation and objective perturbation approaches, and investigated their utility for convex problems. Later, gradient perturbation has been proposed \citep{song2013stochastic} and its utility 
has been studied   \citep{bassily2014private, wang2017differentially, bassily2019private}. The obtained utility bounds in terms of the objective gap are $\widetilde O(\sqrt{d}/(n\varepsilon_\mathrm{DP}))$ for non-strongly convex objectives and $\widetilde O(d/(n^2\varepsilon_\mathrm{DP}^2))$ for strongly convex objectives to guarantee  $(\varepsilon_\mathrm{DP}, \delta_\mathrm{DP})$-DP. Several papers have derived the lower bounds that matches these upper bounds and shown the optimality of DP-(S)GD for $\ell_2$ bounded domains \citep{bassily2014private}, $\ell_1$ bounded domains \citep{asi2021private} and unconstrained domains \citep{liu2021tight}.
\par
{\bf{Nonconvex Optimization.}}
\citet{zhang2017efficient} has shown the utility $\widetilde O(\sqrt{d}/(n\varepsilon_\mathrm{DP}))$ of DP-SGD in terms of the squared gradient norm, i.e., first-order optimality, for noncovnex smooth objectives. DP-RMSprop and DP-Adam have been studied and the essentially same utility bound as DP-GD has been shown \citep{zhou2020private}. \citet{wang2017differentially} has given unified analysis of DP-GD and DP-SVRG for both convex and nonconvex cases and in particular utility $\widetilde O(d/(n^2\varepsilon_\mathrm{DP}^2))$ of DP-GD in terms of the objective gap has been shown for objectives satisfying Polyak-Łojasiewicz (PL) condition. \citet{wang2019differentially} has provided a new utility $\widetilde O(1/(n\varepsilon_\mathrm{DP}^2) + d/n^{\Omega(1)})$ of DP-GD in terms of the objective gap without PL condition based on the theory of gradient Langevin dynamics. Also, \citet{wang2019differentially} has provided a variant of DP-GD for finding second-order optimal points with the same utility as the first-order optimality cases. 
\par
{\bf{Computation and Communication Efficiency.}}
While computation and communication efficiency are not main focuses of this study, many works have considered these efficiencies for practical applications. DP-SGD is a nice candidate to reduce the expensive per update computational cost of DP-GD. For further reducing computational cost, applying variance reduction technique \citep{johnson2013accelerating, defazio2014saga, nguyen2017sarah, cutkosky2019momentum} has been studied \citep{wang2019efficient, bassily2019private, asi2021private}. Also, \citet{kuru2022differentially} has combined DP-GD with Nesterov's acceleration to improve the computational efficiency. On the other side, several works have developed more communication efficient DP algorithms than DP-GD in distributed learning. 
\citet{noble2022differentially} has shown that their proposed DP-FedAvg and DP-SCAFFOLD achieves better communication efficiency in terms of the number of communication rounds compared with DP-SGD, while maintaining the utility of DP-SGD. On the other side, reducing communication cost per update by gradient compression has been also investigated in DP optimization settings by several papers \citep{agarwal2018cpsgd, zhang2020private, ding2021differentially,  li2022soteriafl}. 
\par
{\bf{Utility Improvements.}}
Several works have attempted to improve the utility of DP-SGD in both theoretically and practically. \citet{chourasia2021differential} has shown the convergence of the privacy loss in DP-GD based on the theory of gradient Langevin dynamics for strongly convex objectives and pointed out that the previous DP-analysis relying on composition theorems may be loose. However, the obtained utility is the same as the best known one.  \citet{du2021dynamic} has considered dynamic scheduling of the DP noise size and the gradient clipping radius in DP-SGD to improve the utility and empirically justified its effectiveness. \citet{wang2019dp} has proposed a combination of DP-SGD with Laplacian smoothing technique to denoise the DP noise and reported its empirical out-performances. This approach has been further extended to the case of federated learning settings \citep{liang2020differentially}. Very recently, two independent works \citep{tran2022momentum, wang2019efficient}\footnotemark \footnotetext{ \citet{wang2019efficient} has been updated in 2023.} have improved this utility bound based on a similar idea to ours for non-distributed learning settings.  
\par
{\bf{Extensions of the Problem Settings .}}
\citet{wang2020differentially, hu2022high, kamath2022improved} have attempted to replace the uniform gradient boundedness assumption typically used in the DP guratantee analysis by the boundedness of the $k$-th moment of the gradient distribution to deal with heavy-tailed gradients for convex optimization. \citet{das2022beyond} has considered relaxing the uniform gradient boundedness assumption to sample dependent boundedness one. \citet{han2022differentially} has extended DP-SGD to the case of Riemannian optimization. Several works have developed differential private version of the alternating direction method of multipliers (ADMM) for centralized and even decentralized distributed learning \citep{huang2019dp, huang2020differentially,  shang2021differentially}. \citet{yang2021stability} has developed DP-SGD  theory for pairwise learning. 


\section{Problem Setting and Assumptions}
\label{sec: problem_setting}

In this section, we first introduce notation used in this paper. Then, the problem setting is illustrated and theoretical assumptions used in our analysis are given. \par

{\bf{Notation.}}
$\| \cdot \|$ denotes the Euclidean $L_2$ norm $\| \cdot \|_2$: $\|x\| = \sqrt{\sum_{i}x_i^2}$ for vector $x$. 
For a matrix $X$, $\|X\|$ denotes the induced norm by the Euclidean $L_2$ norm. For a natural number $m$, $[m]$ denotes the set $\{1, 2, \ldots, m\}$.
For a set $A$, $\# A$ means the number of elements, which is possibly $\infty$. For any number $a, b$, $a \vee b$ denotes $\mathrm{max}\{a, b\}$ and $a \wedge b$ does $\mathrm{min}\{a, b\}$. For a set $A$, we denote the uniform distribution over $A$ by $\mathrm{Unif}(A)$. 
\par

\subsection{Problem Setting}
We want to find an approximate minimizer of objective function $f(x) := (1/P)\sum_{p=1}^P f_p(x)$ with DP guarantee for every client $p \in [P]$ in centralized distributed learning settings, where $f_p(x) := (1/n_p)\sum_{i=1}^{n_p} \ell(x, z_i^{(p)})$ is the risk on private dataset $D_p$ of client $p$. \par 
\par 

{\bf{Threat Model.}} In this work, we focus on \emph{central differential privacy}. Central differential privacy assumes a trusted central server and honest-but-curious clients. In this case, when the server aggregates the vectors received from the clients and sends the result to each client, we only need to care the client's information leakage from the aggregated vector rather than the individual vector sent from a client. 

Then, it is required to guarantee \emph{record level} $(\varepsilon, \delta)$-differential privacy\footnotemark of outputs $\mathcal M(D_1, \ldots, D_P)$ from the central server with respect to each local dataset $D_p$, that is $\mathbb{P}(\mathcal M(\mathcal D) \in S)
    \leq e^\varepsilon \mathbb{P}(\mathcal M(\mathcal D') \in S) + \delta$,
where $\mathcal D := (D_1, \ldots, D_p, \ldots, D_P)$ and $\mathcal D' := (D_1, \ldots, D_p', \ldots, D_P)$, 
for every $S$ and every adjacent local datasets $D_p$ and $D_p'$. Here, we say that datasets $D = \{z_i\}_{i=1}^n$ and $D' = \{z_i'\}_{i=1}^n$ are adjacent if $d_\mathrm{H}(D, D') = 1$, where $d_\mathrm{H}$ is the Hamming distance between $D$ and $D'$ defined by $d_\mathrm{H}(D, D') := \sum_{i=1}^n \mathbbm{1}_{x_i \neq x_i'}$.   

\footnotetext{We focus on record-level differential privacy. It is straightforward to extend our analysis to the case of client-level differential privacy.}

When the central server is not trustworthy, one can use some secure aggregation technique, e.g., Multi-Party Computation (MPC), to guarantee the same privacy level as the trusted server case. Using secure aggregation technique, the server (and clients) can only access the aggregated result rather than the individual vectors sent from the clients. \par

{\bf{Theoretical Performance Measure.}} In this work, we focus on finding an approximate first-order stationary point, since it is generally difficult to find an approximate global minima or even local minima of $f$ due to the nonconvex nature of $f$. Then, given privacy parameters $\varepsilon_\mathrm{DP}$ and $\delta_\mathrm{DP}$, we measure $\|\nabla f(x)\|^2$ for the output $x$ of an $(\varepsilon_\mathrm{DP}, \delta_\mathrm{DP})$-DP algorithm. The smaller $\|\nabla f(x)\|^2$ indicates the higher utility. 

\subsection{Theoretical Assumptions}
Here, theoretical assumptions used in our analysis are described.

\begin{assumption}[Smoothness]
\label{assump: local_loss_smoothness}
For every $z \in \cup_{p=1}^P\mathrm{supp}(D_{p})$, $\ell(\cdot, z)$ is $L$-smooth, i.e., 
\begin{align*}
    \|\nabla \ell(x, z) - \nabla  \ell(y, z)\| \leq L\|x - y\|, \forall x, y \in \mathbb{R}^d. 
\end{align*}
\end{assumption}
We assume $L$-smoothness of loss $\ell$ rather than risk $f$, that is crucial to DIFF2 framework.

\begin{assumption}[Existence of global optimum]
\label{assump: optimal_sol}
$f$ has a global minimizer $x_* \in \mathbb{R}^{d}$.
\end{assumption}

\begin{assumption}[Gradient boundedness]
\label{assump: local_loss_gradient_boundedness}
For every $z \in \cup_{p=1}^P\mathrm{supp}(D_{p})$, $\ell(\cdot, z)$ is $G$-gradient bounded, i.e.,
\begin{align*}
    \|\nabla \ell(x, z)\| \leq G, \forall x \in \mathbb{R}^d. 
\end{align*}
\end{assumption}

Assumption \ref{assump: local_loss_gradient_boundedness} is a bit strong, particularly in FL, but is  necessary for our utility analysis and standard in the previous DP optimization literature. 

\section{Proposed Framework: DIFF2}
In this section, we provide the procedures of our proposed framework. 

\begin{algorithm}[t]
\caption{DIFF2($x_{0}$, $\sigma_1$, $\sigma_2$, $C_1$, $C_2$, $T$, $R$, $*\mathrm{args}$)}
\label{alg: DIFF2}
\begin{algorithmic}[1]
\STATE \# The central server is assumed to be trust-worthy. 
\FOR {$r=1$ to $R$}
    \FOR {client $p \in [P]$ in parallel}
        \IF {$(r - 1) \% T = 0$}
            \STATE $d_{r}^{(p)} = $ ClippedMean$(\{\nabla \ell(x_{r-1}, z)\}\}_{z \in D_p}, C_{1})\}$.
            \STATE Send $d_r^{(p)}$ to the central server. \label{line: diff2_grad_computation}
        \ELSE
            \STATE Send $d_{r}^{(p)} = $ ClippedMean$(\{\nabla \ell(x_{r-1}, z) - \nabla \ell(x_{r-2}, z)\}\}_{z \in D_p}, C_{2, r})\}$ to the central server, where $C_{2, r} = C_2\|x_{r-1} - x_{r-2}\|$. \label{line: diff2_grad_diff_computation}
        \ENDIF
    \ENDFOR
    \IF {$(r-1) \% T = 0$}
        \STATE Set $\sigma = \sigma_1$, $C = C_1$ and $\widetilde v_{r-1} = 0$. 
    \ELSE
        \STATE Set $\sigma = \sigma_2$ and $C = C_{2, r}$.
    \ENDIF
    \STATE Set $v_{r} = \frac{1}{P}\sum_{p=1}^P d_{r}^{(p)} + \widetilde v_{r-1}$ and $\widetilde v_{r} = v_{r} + \xi_{r}$, where $\xi_{r} \sim N(0, \sigma^2 C^2 I)$. \label{line: diff2_v_r_update}
    \STATE $x_r, x_r^\mathrm{out} =$ General-Routine($x_{r-1}$,  $\widetilde v_{r}$, $*\mathrm{args}$)
    \FOR {$p \in [P]$ in parallel}
        \STATE Receive $ x_{r}$ from the central server.
    \ENDFOR
\ENDFOR
\STATE {\bf{Return:}} $ x^{\mathrm{out}} =  x_{\hat r-1}^{\mathrm{out}}$ ($\hat r \sim \mathrm{Unif}[R])$).
\end{algorithmic}
\end{algorithm}

\begin{algorithm}[t]
\caption{ClippedMean($\{x_i\}_{i \in I}$, $C$)}
\label{alg: clipped_mean}
\begin{algorithmic}[1]
\STATE $\hat x_i = \min\left\{\frac{C}{\|x_i\|}, 1\right\}x_i$ for $i \in I$. 
\STATE {\bf{Return:}} $\frac{1}{|I|}\sum_{i \in I} \hat x_i$. 
\end{algorithmic}
\end{algorithm}

\begin{algorithm}[t]
\caption{GD-Routine($x_{r-1}$, $\widetilde v_{r}$, $\eta$)}
\label{alg: gd_routine}
\begin{algorithmic}[1]
\STATE Set $x_{r} = x_{r-1} - \eta \widetilde v_r$.
\STATE {\bf{Return:}} $x_r$, $x_r$. 
\end{algorithmic}
\end{algorithm}

{\bf{High-level Idea.}}
The main process of DIFF2 is to construct a differential private global gradient estimator $\widetilde v_r$ based on \emph{local gradient differences}. Recall that the standard differential private algorithms simply apply Gaussian mechanism to $\nabla f(x_{r-1})$, that is $\widetilde v_r := \nabla f(x_{r-1}) + \xi_r$, where $\xi_r$ is mean zero Gaussian noise. To DP guarantee, the standard deviation of $\xi_r$ should be proportional to the sensitivity of $\nabla f(x_{r-1})$, which is bounded by $2 G/(n_\mathrm{min}P)$ for $G$-Lipschitz loss function $\ell$. In contrast, DIFF2 uses an approximation
\begin{align*}
    \nabla f(x_{r-1}) =&\ \nabla f(x_{r-1}) - \nabla f(x_{r-2}) + \nabla f(x_{r-2}) \\
    \approx&\ \nabla f(x_{r-1}) - \nabla f(x_{r-2}) + \widetilde v_{r-1}
\end{align*}
and 
constructs new estimator $\widetilde v_r$ as follows:
\begin{align*}
    \widetilde v_r := (\nabla f(x_{r-1}) - \nabla f(x_{r-2}) + \xi_r) + \widetilde v_{r-1}, 
\end{align*}
with $\widetilde v_1 := \nabla f(x_0) + \xi_1$. Then, we observe that the standard deviation of $\xi_r$ depends on the sensitivity of the gradient difference $\nabla f(x_{r-1}) - \nabla f(x_{r-2})$, that is bounded by $L\|x_{r-1} - x_{r-2}\|$ for $L$-smoothness loss function $\ell$. Hence, if $x_{r-1}$ and $x_{r-2}$ are close, the sensitivity of the gradient difference can be smaller than the one of the gradient itself $\nabla f(x_{r-1})$, and the noise size of DIFF2 at round $r$ becomes smaller than the standard noise size. This is the mechanism of DIFF2 and explains why the gradient estimator of DIFF2 potentially has lower variance than the standard one. 
\par

{\bf{New Framework: DIFF2.}} The concrete procedures of our proposed framework DIFF2 is provided in Algorithm \ref{alg: DIFF2}. \par 

As explained above, DIFF2 constructs differential private global gradient estimator $\widetilde v_r$. Then, the estimator is fed to a general sub-routine that runs some (possibly local) optimization based on $\widetilde v_r$ (line 17).  

Now, we look at the concrete procedures of the construction of $\widetilde v_r$. At initial round, each worker computes the local gradients and applies ClippedMean (Algorithm \ref{alg: clipped_mean}) to ensure the boundedness of the local gradients (line 5), which is typically used in the previous DP guaranteed algorithms. The central server aggregates the clipped local gradients and adds Gaussian noise with mean zero and variance $\sigma_1^2 C_1^2$, where $C_1$ is the clipping radius of the local gradients, to the aggregated gradient and generates an initial differential private global gradient estimator $\widetilde v_1$. In subsequent rounds, the server aggregates the clipped local \emph{gradient differences} $\{\nabla \ell(x_{r-1}, z) - \nabla \ell(x_{r-2}, z)\}_{z \in D_p}$ rather than the local gradients $\{\nabla \ell(x_{r-1}, z)\}_{z \in D_p}$ (line 8). Then, the global gradient estimator $v_r$ is updated as the sum of the aggregated gradient differences and the previous estimator $\widetilde v_{r-1}$ (line 16). Finally, the differential private global gradient estimator $\widetilde v_r$ is obtained by adding Gaussian noise with mean zero and variance $\sigma_2^2 C_{2, r}$, where $C_{2, r}$ is the clipping radius of the local gradient differences. \par

Importantly, for every $T$ rounds, we \emph{restart} these processes; we reset $\widetilde v_{r}$ to the vanilla DP global gradient estimator $\nabla f(x_{r-1}) + \xi_r$ as in the initial round. The reason of the necessary of the restarting will be explained in Section \ref{sec: analysis}.
\par

{\bf{Concrete Algorithm: DIFF2-GD.}} 
Given global gradient estimator $\widetilde v_r$, the simplest choice of general sub-routine is using the gradient descent step based on $\widetilde v_r$. Applying Gradient Descent Routine (GD-Routine, Algorithm \ref{alg: gd_routine}) to DIFF2 gives new algorithm called DIFF2-GD. DIFF2-GD simply executes the single gradient descent step for each round. Note that when $T=1$, DIFF2-GD matches the standard DP-GD.

\section{Differential Privacy Analysis and Utility Analysis}
\label{sec: analysis}
In this section, theoretical analysis of DIFF2-GD is provided. Our analysis is mainly divided into two parts of DP guarantee analysis (Subsection \ref{subsec: dp_analysis_diff2_gd}) and the utility analysis (Subsection \ref{subsec: utility_analysis_diff2_gd}). 

\subsection{DP Guarantee Analysis of DIFF2-GD}
\label{subsec: dp_analysis_diff2_gd}
In this subsection, we investigate the minimum noise levels to guarantee ($\varepsilon_\mathrm{DP}$, $\delta_\mathrm{DP}$)-DP. The proofs are found in Section \ref{app_sec: dp_analysis_diff2_gd} of the supplementary material. 

Thanks to the usage of gradient differences $\{\nabla \ell(x_{r-1}, z) - \nabla \ell(x_{r-2}, z)\}_{z \in D_p}$, one can dramatically reduce the DP noise size compared to using the gradient itself for every round as in DP-GD. This comes from the fact that a gradient difference possibly has much smaller sensitivity than a gradient itself for smooth loss function $\ell$. This is because the $L2$-sensitivity of the gradient difference can be bounded by $L\|x_{r-1} - x_{r-2}\|$, that can be quite small if $x_{r-1}$ and $x_{r-2}$ are sufficiently close, for example if they are around a stationary point, by using the $L$-smoothness of $\ell$: $\|\nabla \ell(x_{r-1}, z) - \nabla \ell(x_{r-2}, z)\| \leq L\|x_{r-1} - x_{r-2}\|$. In contrast, a gradient itself has large $L2$-sensitivity $G$. To derive the DP noise size, our analysis relies on R\'enyi Differential Privacy (RDP) technique \cite{mironov2017renyi} including RDP guarantee of Gaussian mechanism, the composition theorem for multiple RDP mechanisms and the conversion of RDP guarantee to DP one.

The following proposition reveals the minimum noise levels $\sigma_1$ and $\sigma_2$ for ($\varepsilon_\mathrm{DP}$, $\delta_\mathrm{DP}$)-DP guarantee, that can be much smaller than the minimum noise level of DP-GD. 

\begin{proposition}[Minimum Noise Level for $(\varepsilon_\mathrm{DP}, \delta_\mathrm{DP})$-DP]
\label{prop: dp_noise_level_diff2_gd_routine}
Mechanism $\{x_{r}, x_{ r}^\mathrm{out}, \widetilde v_{r}\}_{r \in [R]}$ defined in DIFF2-GD is $(2\alpha \lceil R/T\rceil/(n_p^2P^2\sigma_1^2) + 2\alpha (R -\lceil R/T\rceil) /(n_p^2P^2\sigma_2^2) +  \log(1/\delta_\mathrm{DP})/(\alpha-1), \delta_\mathrm{DP})$-DP for every $\alpha \in (1, \infty)$, where $n_\mathrm{min} := \min\{n_p\}_{p=1}^P$. In particular, setting $\alpha := 1 + \lceil 2\log(1/\delta_\mathrm{DP})/\varepsilon_\mathrm{DP} \rceil$, 
\begin{align*}
    \sigma_1^2 = \frac{4u\alpha \lceil \frac{R}{T}\rceil }{n_\mathrm{min}^2P^2\varepsilon_\mathrm{DP}} = \widetilde \Theta\left(\frac{R}{Tn_\mathrm{min}^2P^2\varepsilon_\mathrm{DP}^2}\right)
\end{align*}
    and 
\begin{align*}
    \sigma_2^2 = \frac{\frac{4u}{u-1}\alpha (R-\lceil \frac{R}{T}\rceil)}{n_\mathrm{min}^2P^2\varepsilon_\mathrm{DP}} = \widetilde \Theta\left( \frac{R}{n_\mathrm{min}^2P^2\varepsilon_\mathrm{DP}^2}\right)
\end{align*} 
is sufficient to guarantee $(\varepsilon_\mathrm{DP}, \delta_\mathrm{DP})$ of mechanism $\{x_{r}, x_{r}^\mathrm{out}, \widetilde v_{r}\}_{r \in [R]}$ for any $u > 1$.
\end{proposition}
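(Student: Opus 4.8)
The plan is to establish the guarantee through R\'enyi differential privacy (RDP), exploiting the exact cancellation of the data-dependent clipping radius, and only at the end convert to $(\varepsilon_\mathrm{DP}, \delta_\mathrm{DP})$-DP. First I would observe that the released mechanism $\{x_r, x_r^\mathrm{out}, \widetilde v_r\}_{r \in [R]}$ is a post-processing of the noisy sequence $\{\widetilde v_r\}_{r \in [R]}$: the GD-Routine produces $x_r = x_{r-1} - \eta \widetilde v_r$ and $x_r^\mathrm{out} = x_r$ deterministically from $\widetilde v_r$ and $x_{r-1}$. Hence, by the post-processing property of RDP, it suffices to control the RDP of $\{\widetilde v_r\}_{r \in [R]}$, which I would treat as an adaptive composition of $R$ Gaussian mechanisms: the $r$-th mechanism releases $\widetilde v_r = v_r + \xi_r$ conditioned on the previous outputs $\widetilde v_1, \ldots, \widetilde v_{r-1}$, which determine $x_{r-1}, x_{r-2}$ and hence fix the clipping radius $C_{2,r} = C_2\|x_{r-1} - x_{r-2}\|$.

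Next I would bound the $L_2$-sensitivity of each round with respect to a single record of client $p$'s dataset. At a restart round ($(r-1)\%T = 0$), conditioning on the past fixes $\widetilde v_{r-1}$, so the only data dependence in $v_r$ is through $\frac{1}{P}\sum_p d_r^{(p)}$; since every clipped gradient has norm at most $C_1$ and $d_r^{(p)}$ averages $n_p$ of them under replace-one adjacency, changing one record perturbs $v_r$ by at most $2C_1/(n_p P)$ in norm. With noise scale $\sigma_1 C_1$, the Gaussian-mechanism RDP bound $\alpha\Delta^2/(2(\sigma_1 C_1)^2)$ evaluates to $2\alpha/(n_p^2 P^2 \sigma_1^2)$, in which $C_1$ cancels. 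At a non-restart round the same computation applies with clipped gradient \emph{differences} bounded by $C_{2,r}$ and noise scale $\sigma_2 C_{2,r}$; crucially the sensitivity $2C_{2,r}/(n_p P)$ and the noise scale both carry the factor $C_{2,r}$, so it cancels and the per-round RDP is the constant $2\alpha/(n_p^2 P^2 \sigma_2^2)$, uniformly over the past. Summing over the $\lceil R/T\rceil$ restart rounds and the $R - \lceil R/T\rceil$ non-restart rounds via the RDP composition theorem yields the claimed order-$\alpha$ RDP bound, and the RDP-to-DP conversion adds the term $\log(1/\delta_\mathrm{DP})/(\alpha-1)$, establishing the first assertion.

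For the explicit noise levels, I would choose $\alpha = 1 + \lceil 2\log(1/\delta_\mathrm{DP})/\varepsilon_\mathrm{DP}\rceil$ so that the conversion term is at most $\varepsilon_\mathrm{DP}/2$, and then split the remaining budget $\varepsilon_\mathrm{DP}/2$ between the two RDP terms. A direct substitution shows that the stated $\sigma_1^2$ makes the restart term equal $\varepsilon_\mathrm{DP}/(2u)$ and the stated $\sigma_2^2$ makes the non-restart term equal $\varepsilon_\mathrm{DP}(u-1)/(2u)$, whose sum is exactly $\varepsilon_\mathrm{DP}/2$ for every $u > 1$; replacing $n_p$ by $n_\mathrm{min} = \min_p n_p$ makes the bound hold simultaneously for all clients, since $1/n_p^2 \leq 1/n_\mathrm{min}^2$. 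The $\widetilde\Theta$ expressions then follow from $\alpha = \widetilde\Theta(1/\varepsilon_\mathrm{DP})$ together with $\lceil R/T\rceil = \Theta(R/T)$ and $u = O(1)$.

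The main obstacle I anticipate is the correct handling of the data-dependent clipping radius $C_{2,r}$. The subtlety is that $C_{2,r}$ depends on the iterates and hence on the data, which could in principle leak information; the resolution is that $C_{2,r}$ is a deterministic function of the already-released (and therefore privatized) quantities $\widetilde v_1, \ldots, \widetilde v_{r-1}$, so under the adaptive-composition conditioning it is a fixed constant identical for both adjacent datasets, while the exact proportional scaling of sensitivity and noise renders the per-round RDP independent of its value. Verifying that the adaptive RDP composition theorem applies cleanly under this conditioning, so that the uniform per-round bounds may simply be summed, is the technical heart of the argument.
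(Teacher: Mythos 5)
Your proposal is correct and follows essentially the same route as the paper's proof: per-round $l_2$-sensitivity bounds ($2C_1/(n_pP)$ at restarts, $2C_{2,r}/(n_pP)$ otherwise, conditioned on previously released outputs), the Gaussian-mechanism RDP bound with the clipping radius cancelling, adaptive RDP composition over the $\lceil R/T\rceil$ restart and $R-\lceil R/T\rceil$ difference rounds, post-processing for $x_r, x_r^{\mathrm{out}}$, RDP-to-DP conversion, and the $\alpha = 1 + \lceil 2\log(1/\delta_{\mathrm{DP}})/\varepsilon_{\mathrm{DP}}\rceil$ choice with the $u$-weighted split $\varepsilon_{\mathrm{DP}}/(2u) + (u-1)\varepsilon_{\mathrm{DP}}/(2u) = \varepsilon_{\mathrm{DP}}/2$. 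Your explicit treatment of the data-dependent radius $C_{2,r}$ as a deterministic function of already-privatized outputs is exactly the point the paper handles implicitly in its sensitivity lemma by conditioning on $\{x_{r'}, \widetilde v_{r'}\}_{r' \in [r-1]}$, so no gap remains.
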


\begin{remark}
Compared with the noise size $\sigma^2 C_1^2 = \widetilde \Theta(C_1^2 R/(n_\mathrm{min} P\varepsilon_\mathrm{DP})^2)$ of DP-GD, $\sigma_1^2 C_1^2$ is $T$ times smaller and $\sigma_2^2 C_{2, r}^2$ is possibly much smaller since $C_{2, r} = C_2\|x_{r-1} - x_{r-2}\| \ll C_1$ may hold. 
\end{remark}
\subsection{Utility Analysis of DIFF2-GD}
\label{subsec: utility_analysis_diff2_gd}
In this subsection, utility analysis of DIFF2-GD is provided. The proofs are found in Section \ref{app_sec: utility_analysis_diff2_gd} of the supplementary material.

As explained in Section \ref{sec: analysis}, it is possible to reduce the DP noise size and thus the variance of the global gradient estimator $\widetilde v_r$ in DIFF2 framework, and better utility is expected compared to the standard DP-GD. On the other side, we also need to consider the \emph{bias} of $\widetilde v_r$; one can observe that $\widetilde v_r$ is not anymore an unbiased estimator of $\nabla f(x_{r-1})$ because $\widetilde v_r$ contains the previous history of the DP noise. Generally, the larger the restart interval $T$, the smaller the variance of $\widetilde v_r$, but the larger the bias of $\widetilde v_r$.  Hence, the main challenge of our utility analysis is to carefully evaluate both the variance and the bias of $\widetilde v_r$ and determine the optimal choice of $T$ that controls the trade-off relationship between the two errors. 

The following theorem states that with the optimal choice of $T$, DIFF2 achieves better utility than the best known one of DP-GD while ($\varepsilon_\mathrm{DP}$, $\delta_\mathrm{DP}$)-DP is guaranteed.

\begin{theorem}[Utility Bound]
\label{theorem: utility_DIFF2_gd_routine}
Suppose that Assumptions \ref{assump: local_loss_smoothness}, \ref{assump: optimal_sol} and \ref{assump: local_loss_gradient_boundedness} hold. Assume that $C_1 \geq G$, $C_1 = \Theta(G)$, $C_2 \geq L$, $C_2 = \Theta(L)$, $f(x_0) - f(x_*) = O(1)$ and  $n_\mathrm{min}P = \Omega(G^2\sqrt{d}/(L\varepsilon_\mathrm{DP}))$. Under the choices of $\sigma_1^2$ and $\sigma_2^2$ in Proposition \ref{prop: dp_noise_level_diff2_gd_routine}, if we appropriately choose $\eta = \Theta(\min\{1/L, 1/(\sqrt{T}L\sigma_2\sqrt{d}\})$ and $T = \Theta(\max\{1,\tau R\})$ with $\tau := (G^2\sqrt{d}/(Ln_\mathrm{min}P\varepsilon_\mathrm{DP}))^{2/3}$, DIFF2-GD satisfies
\begin{align*}
    \mathbb{E}\|\nabla f(x^\mathrm{out})\|^2 \leq&\  O\left( \frac{L}{R}\right) + \widetilde O\left(\frac{L\sqrt{d}}{n_\mathrm{min}P\varepsilon_\mathrm{DP}\sqrt{R}} + \varepsilon_\mathrm{opt}\right),  
\end{align*}
where $\varepsilon_\mathrm{opt} := \widetilde \Theta((LGd)^\frac{2}{3}/(n_\mathrm{min}P\varepsilon_\mathrm{DP})^\frac{4}{3})$.
In particular, setting 
\begin{align*}
    R = \Theta\left( 1 + \frac{L}{\varepsilon_\mathrm{opt}}\right) + \widetilde \Theta\left( \frac{L^2d}{n_\mathrm{min}^2P^2\varepsilon_\mathrm{DP}^2\varepsilon_\mathrm{opt}^2}\right)
\end{align*}
results in utility $\mathbb{E}\|\nabla f(x^\mathrm{out})\|^2 \leq \varepsilon_\mathrm{opt}$.
\end{theorem}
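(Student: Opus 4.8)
The plan is to reduce the statement to a descent analysis of the inexact gradient step $x_r = x_{r-1} - \eta \widetilde v_r$ driven by the error $e_r := \widetilde v_r - \nabla f(x_{r-1})$, and then to control the accumulated variance of $e_r$ through a self-referential bound that is broken by the prescribed step size. First I would record the structural simplification underlying the whole argument: since $C_1 \geq G$ and $C_2 \geq L$, the clipping in Algorithm~\ref{alg: clipped_mean} never activates. Indeed Assumption~\ref{assump: local_loss_gradient_boundedness} gives $\|\nabla\ell(x_{r-1},z)\| \leq G \leq C_1$, while $L$-smoothness (Assumption~\ref{assump: local_loss_smoothness}) gives $\|\nabla\ell(x_{r-1},z) - \nabla\ell(x_{r-2},z)\| \leq L\|x_{r-1}-x_{r-2}\| \leq C_{2,r}$. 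Hence ClippedMean returns exact (differences of) gradient means, and telescoping line~\ref{line: diff2_v_r_update} within the interval starting at the most recent restart round $r_0(r)$ yields the clean identity $\widetilde v_r = \nabla f(x_{r-1}) + \sum_{s=r_0(r)}^{r}\xi_s$, so that $e_r = \sum_{s=r_0(r)}^{r}\xi_s$ is a sum of conditionally mean-zero Gaussians.

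Next I would invoke the descent lemma for the $L$-smooth objective $f$ (which inherits smoothness from $\ell$). Writing $\langle\nabla f(x_{r-1}),\widetilde v_r\rangle = \|\nabla f(x_{r-1})\|^2 + \langle\nabla f(x_{r-1}),e_r\rangle$ and applying Young's inequality to the cross term deliberately sidesteps the delicate correlation between $\nabla f(x_{r-1})$ and the accumulated noise history inside $e_r$; combined with $\|\widetilde v_r\|^2 \leq 2\|\nabla f(x_{r-1})\|^2 + 2\|e_r\|^2$ and $\eta \leq 1/(4L)$ this produces a per-round inequality of the form
\[
  f(x_r) \leq f(x_{r-1}) - \tfrac{\eta}{4}\|\nabla f(x_{r-1})\|^2 + \tfrac{3\eta}{4}\|e_r\|^2 .
\]
The heart of the argument is to bound $\mathbb{E}\|e_r\|^2$. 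Conditional mean-zeroness kills all cross terms, so $\mathbb{E}\|e_r\|^2 = d\sigma_1^2 C_1^2 + d\sigma_2^2 C_2^2 \eta^2 \sum_{s}\mathbb{E}\|\widetilde v_{s-1}\|^2$ over the current interval, the second piece arising because $C_{2,s} = C_2\|x_{s-1}-x_{s-2}\| = C_2\eta\|\widetilde v_{s-1}\|$. This is exactly the step I expect to be hardest: the variance of $\widetilde v_r$ feeds back on the iterate movements, which in turn feed back on the variance. I would resolve the coupling by inserting $\|\widetilde v_{s-1}\|^2 \leq 2\|\nabla f(x_{s-2})\|^2 + 2\|e_{s-1}\|^2$, bounding the per-interval running maximum $\Gamma_{\max}:=\max_r\mathbb{E}\|e_r\|^2$, and choosing the step size so that $24\,T d\sigma_2^2 C_2^2\eta^2 \leq 1$ — precisely the role of $\eta = \Theta(\min\{1/L, 1/(\sqrt{T}L\sigma_2\sqrt{d})\})$ — so that the self-referential term is absorbed and $\Gamma_{\max} \lesssim d\sigma_1^2 C_1^2 + d\sigma_2^2 C_2^2\eta^2\sum_s\mathbb{E}\|\nabla f(x_{s-2})\|^2$.

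Finally I would telescope the descent inequality over all $R$ rounds, use $\sum_{r}\mathbb{E}\|e_r\|^2 \leq R\,\Gamma_{\max}$ across the $R/T$ intervals, and pick the step size small enough that the residual gradient-norm sum carried by $\Gamma_{\max}$ is absorbed into the left-hand side. Dividing by $\eta R$ and using $f(x_0)-f(x_*) = O(1)$ leaves $\mathbb{E}\|\nabla f(x^{\mathrm{out}})\|^2 \lesssim (f(x_0)-f(x_*))/(\eta R) + d\sigma_1^2 C_1^2$. Substituting $\sigma_1^2 = \widetilde\Theta(R/(Tn_\mathrm{min}^2P^2\varepsilon_\mathrm{DP}^2))$, $\sigma_2^2 = \widetilde\Theta(R/(n_\mathrm{min}^2P^2\varepsilon_\mathrm{DP}^2))$ from Proposition~\ref{prop: dp_noise_level_diff2_gd_routine} together with $C_1 = \Theta(G)$, $C_2 = \Theta(L)$, the variance floor becomes $\widetilde\Theta(dRG^2/(Tn_\mathrm{min}^2P^2\varepsilon_\mathrm{DP}^2))$, while $1/\eta \leq L + \sqrt{T}L\sigma_2\sqrt{d}$ turns the optimization term into $O(L/R)$ plus $\widetilde O(\sqrt{T}L\sqrt{d}/(n_\mathrm{min}P\varepsilon_\mathrm{DP}\sqrt{R}))$. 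Plugging $T = \Theta(\max\{1,\tau R\})$ with $\tau = (G^2\sqrt{d}/(Ln_\mathrm{min}P\varepsilon_\mathrm{DP}))^{2/3}$ makes the variance floor equal $\varepsilon_\mathrm{opt}$, and the $\sqrt T$ term specializes to the stated $\widetilde O(L\sqrt{d}/(n_\mathrm{min}P\varepsilon_\mathrm{DP}\sqrt{R}))$ when $T=\Theta(1)$ and to $\varepsilon_\mathrm{opt}$ when $T=\Theta(\tau R)$; the assumption $n_\mathrm{min}P = \Omega(G^2\sqrt{d}/(L\varepsilon_\mathrm{DP}))$ guarantees $\tau \leq 1$, so $T\leq R$ is admissible. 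The closing step is merely to take $R$ large enough that each of the three terms drops below $\varepsilon_\mathrm{opt}$, which gives the displayed choice of $R$.
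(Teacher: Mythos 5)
Your overall plan matches the paper's proof of Theorem \ref{theorem: utility_DIFF2_gd_routine} in its skeleton: the observation that clipping never activates under $C_1 \geq G$ and $C_2 \geq L$, the telescoped identity $\widetilde v_r = \nabla f(x_{r-1}) + \sum_{s=r_0(r)}^{r}\xi_s$ with conditionally mean-zero cross terms, the step-size condition $\eta \lesssim 1/(\sqrt{T}\sigma_2 C_2\sqrt{d})$, the resulting bound $\mathbb{E}\|\nabla f(x^\mathrm{out})\|^2 \lesssim (f(x_0)-f(x_*))/(\eta R) + \sigma_1^2 C_1^2 d$, and the identical tuning of $T$ and $R$ (your closing parameter substitutions are all correct). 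Where you deviate is the absorption mechanism. The paper's Lemma \ref{app_lem: descent_lemma_DIFF2} is an exact completion of the square that \emph{retains} the negative movement term $-\frac{1}{4\eta}\|x_r - x_{r-1}\|^2$; in Proposition \ref{app_prop: convergence_rate_DIFF2_gd_routine} the accumulated noise variance is written as $T\sigma_2^2 C_2^2 d \cdot \frac{1}{R}\sum_r \|x_{r-1}-x_{r-2}\|^2$ and absorbed directly into that negative term — no self-referential bound on the error is ever needed. You instead discard the movement term via Young's inequality and close the coupling through a fixed-point bound on the error, which is more roundabout but can be made to work with the same step size.

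However, one step in your bookkeeping fails as literally written: bounding $\sum_{r}\mathbb{E}\|e_r\|^2 \leq R\,\Gamma_{\max}$, where $\Gamma_{\max}$ carries up to a full interval's gradient sum $d\sigma_2^2 C_2^2\eta^2 \sum_{s \in \text{interval}}\mathbb{E}\|\nabla f(x_{s-2})\|^2$. After multiplying by $R$, the residual is $R\, d\sigma_2^2 C_2^2 \eta^2\, \Sigma_{\mathrm{worst}}$ with $\Sigma_{\mathrm{worst}}$ the worst interval's gradient sum, and since $\Sigma_{\mathrm{worst}}$ is only controlled by the full sum $\sum_{s=1}^R \mathbb{E}\|\nabla f(x_{s-1})\|^2$ (the gradients could concentrate in one interval), absorbing it into the left-hand side under your condition $T d\sigma_2^2 C_2^2\eta^2 \lesssim 1$ requires $R/T = O(1)$, which fails precisely in the regimes the theorem covers, namely $T=\Theta(1)$ and $T = \Theta(\tau R)$ with $\tau \ll 1$. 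Forcing absorption by shrinking $\eta$ instead would require $\eta \lesssim 1/(\sqrt{R}L\sigma_2\sqrt{d})$, whence $1/(\eta R) = L\sigma_2\sqrt{d}/\sqrt{R} = \widetilde O(L\sqrt{d}/(n_\mathrm{min}P\varepsilon_\mathrm{DP}))$ — the DP-GD rate, independent of $R$, destroying the claimed improvement. The fix is standard and preserves everything else in your argument: do not pass to the maximum; sum the error variances round by round and note that each movement $\|x_{s-1}-x_{s-2}\|^2$ enters at most $T$ of the $e_r$'s, so that $\frac{1}{R}\sum_{r}\mathbb{E}\|e_r\|^2 \leq \sigma_1^2 C_1^2 d + T\sigma_2^2 C_2^2 d\,\eta^2 \cdot \frac{1}{R}\sum_{s}\mathbb{E}\|\widetilde v_s\|^2$, after which your split $\|\widetilde v_s\|^2 \leq 2\|\nabla f(x_{s-1})\|^2 + 2\|e_s\|^2$ and the stated step size absorb the averages and recover the theorem — this per-round counting is exactly what the paper performs before absorbing into the movement term.
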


\begin{remark}
The obtained utility is significantly better than the best known utility bound $\widetilde O(\sqrt{L}G\sqrt{d}/(n_\mathrm{min}P\varepsilon_\mathrm{DP}))$ when $n_\mathrm{min}P \gg \sqrt{Ld}/(G\varepsilon_\mathrm{DP})$. Besides, when $n_\mathrm{min}P = \widetilde \Omega((Ld)^2/(G\varepsilon_\mathrm{opt}))$, the necessary communication rounds $R$ becomes $O(1 + L/\varepsilon_\mathrm{opt})$ and the necessary number of per sample gradient evaluations for client $p$ becomes $ O(n_p(1+L/\varepsilon_\mathrm{opt}))$.   
\end{remark}

\section{More Efficient Sub-Routine}
\label{sec: efficient_sub_routine}
In this section, we briefly discuss a more  efficient sub-routine for DIFF2 framework in terms of both computation and communication cost. This point is not our main focus, but important for practical use of DIFF2 framework. Specifically, we propose a sub-routine involving local optimization based on BVR-L-SGD algorithm \cite{murata2021bias}. The sub-routine enjoys nice computation and communication efficiency by inheriting the bias-variance reduction property of BVR-L-SGD. The objective of this section is to derive a sub-routine for DIFF2 that requires lower computational and communication cost than DIFF2-GD, while it maintains the same utility as DIFF2-GD. 

\paragraph{DIFF2-BVR-L-SGD}
One problem with DIFF2-GD is that it does not utilize stochastic gradients and local optimization for each round. This may result in both high computational cost and high communication cost. Thus, it is natural to apply local optimization with stochastic local gradients to reduce both the computation and communication cost. BVR-L-SGD-Routine (Algorithm \ref{alg: BVR-L-SGD_routine}) inspired by BVR-L-SGD \citep{murata2021bias} is a nice alternative to GD-Routine to solve the aforementioned two problems. In each local iteration $k \in [K]$, BVR-L-SGD-Routine updates the bias-variance reduced stochastic gradient estimator $v_{k, r}$ based on the clipped mean of the per sample gradient differences similar to DIFF2 (line 8). Then, Gaussian noise $\xi_{k, r}$ with mean zero and variance $\sigma_3^2C_3^2\|x_{k-1, r-1} - x_{k-2, r-1}\|^2$ is added to $v_{k, r}$ and the current parameter is updated based on the differentially private gradient estimator $\widetilde v_{k, r} := v_{k, r} + \xi_{k, r}$ (line 9). 
We call BVR-L-SGD-Routine combined with DIFF2 as DIFF2-BVR-L-SGD. 
Note that when $T=1$, DIFF2-BVR-L-SGD without DP noise matches the special case of BVR-L-SGD. 

\begin{algorithm}[t]
\caption{BVR-L-SGD-Routine($x_{0, r-1}$, $\widetilde v_{1, r}$, $\eta$, $b$, $\sigma_3$, $C_3$, $K$)}
\label{alg: BVR-L-SGD_routine}
\begin{algorithmic}[1]
\STATE Set $x_{1, r-1} = x_{0, r-1} - \eta \widetilde v_{1, r}$.
\STATE Set $p_r = 1 + r \% P$. 
\STATE Client $p_r$ receives $x_{1, r-1}$ from the central server. 
\STATE \# Only client $p_r$ runs local optimization.
\FOR {$k=2$ to $K$}
    \STATE
    Draw random subset $I_{k, r}$ with size $b$ from $D_{p_r}$ without replacement. 
    \STATE Set $C_{3, k, r}= C_3\|x_{k-1, r-1} - x_{k-2, r-1}\|$.
    \STATE 
    $v_{k, r} = $ ClippedMean$(\{\nabla \ell(x_{k-1, r-1}, z) - \nabla \ell(x_{k-2, r-1}, z)\}_{z \in I_{k, r}}, C_{3, k, r}) + \widetilde v_{k-1, r}$.  
    \STATE Set $\widetilde v_{k, r} = v_{k, r} + \xi_{k, r}$, where $\xi_{k, r} \sim N(0, \sigma_3^2 C_{3, k, r}^2I)$.
    \STATE Update $x_{k, r-1} = x_{k-1, r-1} - \eta \widetilde v_{k, r}$.
\ENDFOR
\STATE Client $\hat p$ sends $x_{K, r-1}$ and $x_{\hat k-1, r-1}$ to the central server, where $\hat k \sim \mathrm{Unif}[K]$.
\STATE {\bf{Return:}} $x_{K, r-1}$, $x_{\hat k, r-1}$.
\end{algorithmic}
\end{algorithm}

Similar to the analysis of DIFF2-GD, we can derive the minimum noise level of DIFF2-BVR-L-SGD for $(\varepsilon_\mathrm{DP}, \delta_\mathrm{DP})$-DP guarantee. Importantly, we need to make use of the subsampling amplification technique of RDP developed in \citet{wang2019subsampled} for tight analysis due to the usage of stochastic gradients. For the detailed statements, see Proposition \ref{app_prop: dp_noise_level_DIFF2_BVR-L-SGD_routine} in the supplementary material. 
\begin{proposition}[Minimum Noise Level for $(\varepsilon_\mathrm{DP}, \delta_\mathrm{DP})$-DP (Simplified)]
\label{prop: dp_noise_level_DIFF2_BVR-L-SGD_routine}
There exist $\sigma_1^2$, $\sigma_2^2$ and $\sigma_3^2$ that guarantee $(\varepsilon_\mathrm{DP}, \delta_\mathrm{DP})$-DP of DIFF2-BVR-L-SGD such that $\sigma_1^2 = \widetilde \Theta((R/(Tn_\mathrm{min}^2 P^2\varepsilon_\mathrm{DP}^2))$, $\sigma_2^2 = \widetilde \Theta(R/(n_\mathrm{min}^2 P^2\varepsilon_\mathrm{DP}^2))$ and $\sigma_3^2 =  \widetilde \Theta((KR/(n_\mathrm{min}^2P\varepsilon_\mathrm{DP}^2)) \vee (1/(b^2\varepsilon_\mathrm{DP})))$ under $b \leq \min\{n_\mathrm{min}/(2e\alpha),  (4n_\mathrm{min}/(\alpha\sigma_3^2))^{1/3}\}$.
\end{proposition}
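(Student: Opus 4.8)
The plan is to work entirely within the R\'enyi differential privacy (RDP) framework of \citet{mironov2017renyi}, deriving an $(\alpha,\cdot)$-RDP bound for the whole mechanism $\{x_{k,r-1}, x_{\hat k,r-1}, \widetilde v_{k,r}\}$ by adaptive composition of the three Gaussian submechanisms — the $\sigma_1$ full-gradient step at each restart, the $\sigma_2$ outer gradient-difference steps, and the $\sigma_3$ inner subsampled gradient-difference steps — and then converting to $(\varepsilon_\mathrm{DP},\delta_\mathrm{DP})$-DP via $\varepsilon_\mathrm{DP} = \mathrm{RDP}(\alpha) + \log(1/\delta_\mathrm{DP})/(\alpha-1)$ with the same $\alpha = 1 + \lceil 2\log(1/\delta_\mathrm{DP})/\varepsilon_\mathrm{DP}\rceil$ as in Proposition \ref{prop: dp_noise_level_diff2_gd_routine}. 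Since the $\sigma_1$ and $\sigma_2$ steps aggregate the full local datasets of all clients exactly as in DIFF2-GD, I would carry over their RDP contributions verbatim from Proposition \ref{prop: dp_noise_level_diff2_gd_routine}, namely $2\alpha\lceil R/T\rceil/(n_p^2P^2\sigma_1^2)$ and $2\alpha(R-\lceil R/T\rceil)/(n_p^2P^2\sigma_2^2)$; so the only genuinely new work is the inner $\sigma_3$ loop.

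For the sensitivity bookkeeping I would first establish that every clipped query has a data-\emph{independent} noise-to-sensitivity ratio. For inner step $k$ of round $r$, \texttt{ClippedMean} over the subsample $I_{k,r}$ of size $b$ has $L_2$-sensitivity at most $2C_{3,k,r}/b$ under replace-one adjacency, while the added noise has standard deviation $\sigma_3 C_{3,k,r}$; crucially, the data-dependent radius $C_{3,k,r} = C_3\|x_{k-1,r-1}-x_{k-2,r-1}\|$ depends only on iterates produced by strictly earlier noisy updates, so conditioning on the past it is a fixed constant and each step is an ordinary Gaussian mechanism of RDP $\alpha(2C_{3,k,r}/b)^2/(2\sigma_3^2C_{3,k,r}^2) = 2\alpha/(b^2\sigma_3^2)$, independent of $C_{3,k,r}$. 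Adaptive RDP composition then applies. The same conditioning argument legitimises the $\sigma_2$ steps, whose radius $C_{2,r}$ likewise depends only on earlier iterates; here the $L$-smoothness of $\ell$ is what guarantees clipping at radius $\Theta(L\|x_{r-1}-x_{r-2}\|)$ is essentially lossless.

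The heart of the argument is the inner loop, where I would invoke the subsampled-RDP amplification bound of \citet{wang2019subsampled}. Each inner step draws $b$ of the $n_{p_r}$ records of client $p_r$ without replacement, i.e.\ sampling rate $\gamma = b/n_{p_r}$, so the base per-step RDP $2\alpha/(b^2\sigma_3^2)$ is amplified to order $\gamma^2$, yielding an effective per-step RDP of $\widetilde O(\alpha/(n_{p_r}^2\sigma_3^2))$ whenever the higher-order terms of the subsampling series are negligible. I expect this regularity to be exactly what the two stated constraints enforce: $b \le n_\mathrm{min}/(2e\alpha)$ keeps $\gamma\alpha$ small so the series is summable, $b \le (4n_\mathrm{min}/(\alpha\sigma_3^2))^{1/3}$ controls the cubic term, and the floor $\sigma_3^2 \gtrsim 1/(b^2\varepsilon_\mathrm{DP})$ keeps the base RDP $2\alpha/(b^2\sigma_3^2) = O(1)$ so that $e^{\epsilon_\mathrm{base}}-1$ is bounded — this is the source of the $1/(b^2\varepsilon_\mathrm{DP})$ branch of the maximum. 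The key counting observation is that, since $p_r = 1 + r\%P$ cycles through the clients, an adjacent pair differing in client $p$ affects only the $\approx R/P$ rounds with $p_r = p$, each contributing $K-1$ inner steps; hence the inner loop contributes $\widetilde O(\alpha RK/(Pn_p^2\sigma_3^2))$ to the RDP for client $p$.

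Finally I would add the three RDP contributions, require the total to be $O(\varepsilon_\mathrm{DP})$ at $\alpha = \widetilde\Theta(1/\varepsilon_\mathrm{DP})$, and solve term by term: the $\sigma_1,\sigma_2$ terms reproduce $\sigma_1^2 = \widetilde\Theta(R/(Tn_\mathrm{min}^2P^2\varepsilon_\mathrm{DP}^2))$ and $\sigma_2^2 = \widetilde\Theta(R/(n_\mathrm{min}^2P^2\varepsilon_\mathrm{DP}^2))$, while the inner term forces $\sigma_3^2 = \widetilde\Theta(KR/(n_\mathrm{min}^2P\varepsilon_\mathrm{DP}^2))$, which together with the validity floor $1/(b^2\varepsilon_\mathrm{DP})$ gives the claimed maximum. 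The main obstacle I anticipate is not the composition or the conversion, which are routine given Proposition \ref{prop: dp_noise_level_diff2_gd_routine}, but rigorously controlling the subsampled-RDP series: one must verify that the stated $b$-constraints really do make every higher-order term dominated by the leading $\gamma^2$ term uniformly in the data-dependent radii $C_{3,k,r}$, and confirm that the $O(1)$ base-RDP floor is precisely what produces the second branch of the $\sigma_3^2$ maximum rather than a larger spurious requirement.
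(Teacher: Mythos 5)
Your proposal follows essentially the same route as the paper's proof: conditioning on past iterates to make each clipped query a Gaussian mechanism with data-independent RDP $2\alpha/(b^2\sigma_3^2)$, amplifying the inner steps via the without-replacement subsampling bound of \citet{wang2019subsampled} (the paper's Lemmas \ref{app_lem: subsampling_amplification} and \ref{app_lem: subsampling_amplification_upper_bound}), composing with the $\sigma_1,\sigma_2$ contributions carried over from the DIFF2-GD analysis, counting $K\lceil R/P\rceil$ affected inner steps per client from the cyclic choice $p_r = 1 + r\,\%\,P$, and converting to $(\varepsilon_\mathrm{DP},\delta_\mathrm{DP})$-DP at $\alpha = 1 + \lceil 2\log(1/\delta_\mathrm{DP})/\varepsilon_\mathrm{DP}\rceil$. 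You also correctly identify the roles of the two $b$-constraints and the origin of the $1/(b^2\varepsilon_\mathrm{DP})$ branch (the requirement $\varepsilon(\alpha) \leq 1/3$, i.e.\ $\sigma_3^2 \geq 6\alpha/b^2$, which keeps the subsampling series controlled), matching the paper's argument in Proposition \ref{app_prop: dp_noise_level_DIFF2_BVR-L-SGD_routine}.
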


To show the communication efficiency of DIFF2-BVR-L-SGD, the following assumption is additionally used. This assumption is often used in non-DP distributed optimization \cite{karimireddy2020mime, murata2021bias}. Note that the assumption always holds with $\zeta := L$ under Assumption \ref{assump: local_loss_smoothness} and we implicitly expect that $\zeta \ll L$. 
 
\begin{assumption}[Hessian heterogeneity]
\label{assump: similarity}
$\{f_p\}_{p=1}^P$ is $\zeta$-Hessian heterogeneity, that is  
\begin{align*}
    \left\|\nabla^2 f_p(x) - \nabla^2 f(x)\right\| \leq \zeta, \forall x \in \mathbb{R}^d.
\end{align*}
\end{assumption}

To derive a utility bound of DIFF2-BVR-L-SGD, we need to bound the bias and variance caused by DP noise and \emph{additionally local optimization and stochastic gradients noise}. The following theorem shows that DIFF2-BVR-L-SGD achieves the same utility as DIFF2-GD. For the detailed statements, see Theorem \ref{app_theorem: utility} in the supplementary material. 
\begin{theorem}[Utility Bound (Simplified)]
\label{theorem: utility}
Suppose that the assumptions of Theorem \ref{theorem: utility_DIFF2_gd_routine} 
hold. Additionally, suppose that Assumption \ref{assump: similarity} hold and minibatch size $b$ satisfies the condition in Proposition \ref{prop: dp_noise_level_DIFF2_BVR-L-SGD_routine}. Let $C_3 := C_2$. Under the choices of $\sigma_1^2$, $\sigma_2^2$ and $\sigma_3^2$ in Proposition \ref{prop: dp_noise_level_DIFF2_BVR-L-SGD_routine}, if appropriate $\eta$ and $T$ are chosen, DIFF2-BVR-L-SGD achieves utility of $$\mathbb{E}\|\nabla f(x^\mathrm{out})\|^2 \leq \varepsilon_\mathrm{opt} := \widetilde \Theta\left(\frac{(LGd)^\frac{2}{3}}{(n_\mathrm{min}P\varepsilon_\mathrm{DP})^\frac{4}{3}}\right)$$
by setting $R = \Theta(1) + c_1/\varepsilon_\mathrm{opt} + c_2/\varepsilon_\mathrm{opt}^2$, where $c_1 :=\Theta(L/K + \zeta + L/(\sqrt{K}b)) + \widetilde \Theta( L\sqrt{d}/(\sqrt{K}b\sqrt{\varepsilon_\mathrm{DP}}))$ and $c_2 := \widetilde \Theta(L^2d/(n_\mathrm{min}^2P\varepsilon_\mathrm{DP}^2))$.
\end{theorem}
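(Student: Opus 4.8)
The plan is to mirror the two-part analysis of DIFF2-GD (Theorem \ref{theorem: utility_DIFF2_gd_routine}) --- a smoothness-driven descent estimate together with a careful bias-variance decomposition of the gradient estimator --- while additionally absorbing the errors introduced by the inner local loop and the stochastic minibatch gradients. First I would write the descent inequality along the inner trajectory $\{x_{k,r-1}\}$ generated by BVR-L-SGD-Routine: using $L$-smoothness of $f$ (which follows from Assumption \ref{assump: local_loss_smoothness}),
\[
f(x_{k,r-1}) \le f(x_{k-1,r-1}) - \eta \langle \nabla f(x_{k-1,r-1}), \widetilde v_{k,r}\rangle + \frac{L\eta^2}{2}\|\widetilde v_{k,r}\|^2,
\]
and then split $\widetilde v_{k,r} = \nabla f(x_{k-1,r-1}) + b_{k,r} + \xi_{k,r}$ into the true gradient, a bias term $b_{k,r}$, and the mean-zero DP noise $\xi_{k,r}$. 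Taking expectations and telescoping the function values over $k\in[K]$ and $r\in[R]$ leaves a running average of $\mathbb{E}\|\nabla f(x_{k-1,r-1})\|^2$ played against accumulated bias and variance contributions; since the returned point is uniform over all inner iterations, controlling this average controls $\mathbb{E}\|\nabla f(x^\mathrm{out})\|^2$.

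The core of the argument is a coupled recursion for the estimator error $\mathbb{E}\|\widetilde v_{k,r} - \nabla f(x_{k-1,r-1})\|^2$. Three sources enter: (i) the injected DP noise $\xi_{k,r}$, whose variance $\sigma_3^2 C_{3,k,r}^2 = \sigma_3^2 C_3^2\|x_{k-1,r-1}-x_{k-2,r-1}\|^2$ is proportional to the squared one-step movement and hence couples back to the updates; (ii) the stochastic sampling error of the minibatch gradient difference, which by the BVR construction, the smoothness of $\ell$, and Assumption \ref{assump: similarity} is bounded by $O((L^2/b + \zeta^2)\|x_{k-1,r-1}-x_{k-2,r-1}\|^2)$ rather than by the crude $G^2$; and (iii) the propagated error $\widetilde v_{k-1,r}$ inherited through the telescoping sum. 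Because $\|x_{k-1,r-1}-x_{k-2,r-1}\| = \eta\|\widetilde v_{k-1,r}\|$, each contribution is itself governed by the estimator, so I would introduce a Lyapunov function combining $f(x_{k,r-1})-f(x_*)$ with a suitable multiple of the running variance proxy and show it contracts provided $\eta$ is small enough. Substituting $\sigma_1^2,\sigma_2^2,\sigma_3^2$ from Proposition \ref{prop: dp_noise_level_DIFF2_BVR-L-SGD_routine} then converts these into explicit functions of $R,T,K,b,n_\mathrm{min},P$ and $\varepsilon_\mathrm{DP}$, and the local/stochastic pieces are exactly what surface in $c_1$ and $c_2$.

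For the bias there are two nested levels to handle. At the outer DIFF2 level I would exploit the restart structure exactly as in DIFF2-GD: within a block of $T$ rounds the global estimator $\widetilde v_{1,r}=\widetilde v_r$ accumulates past DP noise, so its bias grows with the distance to the last restart, while a restart resets it to the larger-noise initial estimate; this gives a bias increasing in $T$ against a variance decreasing in $T$, and balancing them fixes the optimal $T$. At the inner level, $\widetilde v_{k,r}$ drifts from $\nabla f(x_{k-1,r-1})$ across the $K$ local steps, but this drift is re-initialised each round from the global estimator and is controlled by the Hessian heterogeneity $\zeta$ and by $K$, contributing the $\Theta(L/K + \zeta)$-type terms in $c_1$. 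Collecting the optimization error $O(L/R)$, the DP-variance error, and these newly introduced terms, then optimizing $\eta$ and $T$, yields the stated utility, and choosing $R$ as prescribed drives the bound below $\varepsilon_\mathrm{opt}$.

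I expect the main obstacle to be the self-referential nature of the variance recursion: the DP noise variance, the stochastic-gradient variance, and the local drift all scale with $\eta^2\|\widetilde v_{k-1,r}\|^2$, so a naive bound does not close. The delicate step is to choose the Lyapunov weights and the step size $\eta$ (in particular the $1/(\sqrt{T}L\sigma\sqrt{d})$-type scaling inherited from Theorem \ref{theorem: utility_DIFF2_gd_routine}) so that the feedback coefficient stays strictly below one, and to verify that the minibatch condition $b \le \min\{n_\mathrm{min}/(2e\alpha),(4n_\mathrm{min}/(\alpha\sigma_3^2))^{1/3}\}$ required by Proposition \ref{prop: dp_noise_level_DIFF2_BVR-L-SGD_routine} remains compatible with the variance-reduction regime needed for the $c_1,c_2$ bounds.
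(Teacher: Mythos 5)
Your plan follows the paper's own route: an inner-loop descent lemma, a recursion for $\mathbb{E}\|\widetilde v_{k,r}-\nabla f(x_{k-1,r-1})\|^2$ with exactly the three movement-proportional sources you list (DP noise, minibatch sampling error via $L$-smoothness of $\ell$, heterogeneity via Assumption~\ref{assump: similarity}), absorption of all such terms into the negative $-\frac{1}{4\eta^2}\|x_{k,r-1}-x_{k-1,r-1}\|^2$ term from the descent lemma by the step-size choice, and the same restart-interval balancing $T=\Theta(1\vee\tau R)$ inherited from the DIFF2-GD analysis, followed by substituting the noise levels of Proposition~\ref{prop: dp_noise_level_DIFF2_BVR-L-SGD_routine} and choosing $R$. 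The only cosmetic difference is that the paper needs no explicit Lyapunov function or bias/variance split: since clipping is inactive under $C_1\geq G$ and $C_2,C_3\geq L$, the estimator error is a sum of independent zero-mean noises plus movement-proportional terms, so its second moment unrolls exactly with $(1+1/K)$-factors bounded by $e$, and the self-referential feedback you worry about closes directly through the condition $\eta\leq 1/(2\sqrt{2e(K^2\zeta^2+KL^2/b+K\sigma_3^2C_3^2d)})$ together with $\eta\leq 1/(2\sqrt{2eK^2T\sigma_2^2C_2^2d})$.
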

\begin{remark}
[Communication efficiency] DIFF2-BVR-L-SGD can be much communication efficient than DIFF2-GD. 
Actually, when $n_\mathrm{min}P = \widetilde \Omega(L^2\sqrt{Pd}/(G\varepsilon_\mathrm{DP}))$, it holds that $c_2/\varepsilon_\mathrm{opt}^2 \leq O(1)$ and thus the necessary communication rounds of DIFF2-BVR-L-SGD becomes $O(1 + c_1/\varepsilon_\mathrm{opt})$, that is much smaller than $O(1 + L/\varepsilon_\mathrm{opt})$ of DIFF2-GD when $K \gg 1$ and $\zeta \ll L$. 
\end{remark}

\begin{remark}[Computation efficiency]
DIFF2-BVR-L-SGD is possibly more computation efficient than DIFF2-GD. Actually, the total number of single gradient evaluations of client $p$ to achieve the best possible utility is $O((n_p + Kb)R)$. As illustrated in the remark just above, the number of communication rounds $R$ can be much smaller than the one of DP-GD and thus the total computational cost can be also smaller than the one of DP-GD when $Kb = O(n_p)$. 
\end{remark}

\section{Numerical Results}
In this section, we provide some experimental results to verify our theoretical findings in Section \ref{sec: analysis}. Specifically, we empirically compare the utility of DIFF2-GD with the one of DP-GD and validate the superiority of DIFF2 framework. We focused on relatively low dimensional problems, where $n_\mathrm{min}P \gg d$ and thus the utility $\widetilde O(d^{2/3}/(n_\mathrm{min}P\varepsilon_\mathrm{DP})^{4/3})$ of DIFF2-GD was expected to be better than $\widetilde O(\sqrt{d}/(n_\mathrm{min}P\varepsilon_\mathrm{DP}))$ of DP-GD.  
\par
We conducted regression and classification tasks on five dataset: (i) California Housing Data Set \footnotemark\footnotetext{\url{https://www.dcc.fc.up.pt/~ltorgo/Regression/cal_housing.html}.}; (ii) Gas Turbine CO and NOx Emission Data Set \footnotemark\footnotetext{\url{https://archive.ics.uci.edu/ml/datasets/Gas+Turbine+CO+and+NOx+Emission+Data+Set}. We removed attribute NOx and used CO as the target variable.}; (iii) BlogFeedback Data Set \footnotemark\footnotetext{\url{https://archive.ics.uci.edu/ml/datasets/BlogFeedback}.}; (iv) KDDCup99 Data Set \footnotemark\footnotetext{\url{https://kdd.ics.uci.edu/databases/kddcup99/kddcup99.html}}; and (v) Cover Type Dataset \footnotemark\footnotetext{\url{http://archive.ics.uci.edu/ml/datasets/covertype}}. We only report the results of regression tasks in the main paper. A summary of the datasets and the results of classification tasks are provided in Section \ref{app_sec: experiment} of the supplementary material. 
\par

{\bf{Data Preparation. }}
For each dataset, we randomly split the orginal dataset into a $80$ \% train dataset and a $20$ \% test dataset. Then, we normalized the numeric  attributes to have mean zero and standard deviation one. Also, we normalized the target variable by dividing the maximum absolute value of the target variable in the whole dataset. We randomly split the train dataset into equal-sized $10$ subsets and each subset was assigned to the corresponding worker of $10$ workers\footnotemark.
\footnotetext{This means that the local datasets were distributed in I.I.D. manner. Since we focused on the comparison of DIFF2-GD with DP-GD  and these methods do not rely on local optimization, the heterogeneity never affects the optimization processes. Hence, we decided to use the simplest I.I.D. case. }
\par
{\bf{Models. }}We conducted our experiments using a one-hidden layer fully connected neural network with $10$ hidden units and softplus activation. For loss function, we used the squared loss. We initialized parameters by uniformly sampling the parameters from $[-\sqrt{w_\mathrm{in}}, \sqrt{w_\mathrm{in}}]$, where $w_{\mathrm{in}}$ is the number of units in the input layers. 
\par
{\bf{Implemented Algorithms. }} Non-private GD, DP-GD and DIFF2-GD were implemented. For DP-GD and DIFF2-GD, the differential privacy parameters were set to $\varepsilon_\mathrm{DP} \in \{3.0, 5.0\}$ and $\delta_\mathrm{DP} = 10^{-5}$. We used Proposition \ref{prop: dp_noise_level_diff2_gd_routine} to determine the DP noise size of DP-GD ($T=1$, $u \to 1$) and DIFF2-GD ($T \in \{0.003R, 0.01R, 0.03R, 0.1R\}$, $u = 1.25$). For each algorithm, we automatically tuned the hyper-parameters. The details of the tuning procedures are found in Section \ref{app_sec: experiment} of the supplementary material. 
\par

{\bf{Evaluation. }}
We evaluated the implemented algorithms using three criteria of train loss; squared train gradient norm; and test loss against the number of communication rounds. We report the train loss and the squared train gradient norm of the implemented algorithms with the hyper-parameters that minimized each criteria. On the other side, we report the test loss of the implemented algorithms with the hyper-parameters that minimized the train loss.  
The total number of communication rounds was fixed to $2,000$ for each algorithm. We independently repeated the experiments $5$ times and report the mean and the standard deviation of the above criteria. 
\par
\begin{figure*}[t]
\begin{subfigmatrix}{3}
\subfigure[Train Loss]{\includegraphics[width=5.5cm]{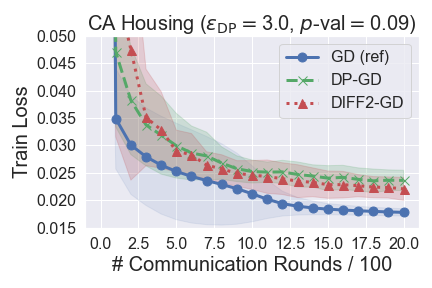}
}
\subfigure[Squared Train Grad. Norm]{\includegraphics[width=5.5cm]{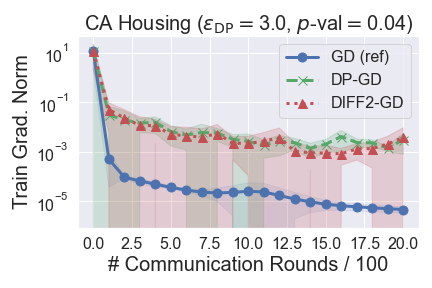}
}
\subfigure[Test Loss]
{\includegraphics[width=5.5cm]{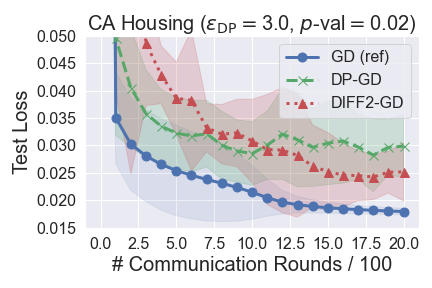}
}
\end{subfigmatrix}
\vspace{-0.5em}
\begin{subfigmatrix}{3}
\subfigure[Train Loss]{\includegraphics[width=5.5cm]{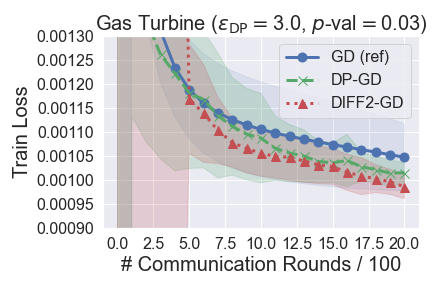}
}
\subfigure[Squared Train Grad. Norm]{\includegraphics[width=5.5cm]{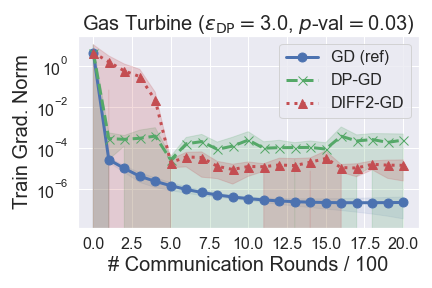}
}
\subfigure[Test Loss]
{\includegraphics[width=5.5cm]{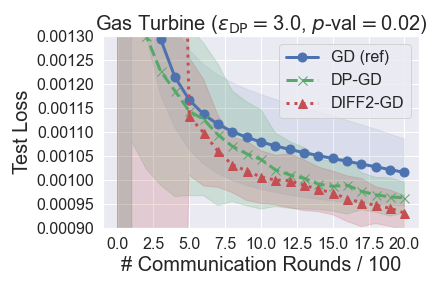}
}
\end{subfigmatrix}
\vspace{-0.5em}
\begin{subfigmatrix}{3}
\subfigure[Train Loss]{\includegraphics[width=5.5cm]{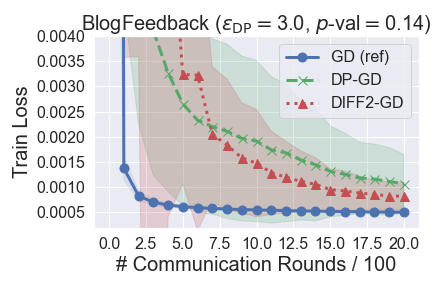}
}
\subfigure[Squared Train Grad. Norm]{\includegraphics[width=5.5cm]{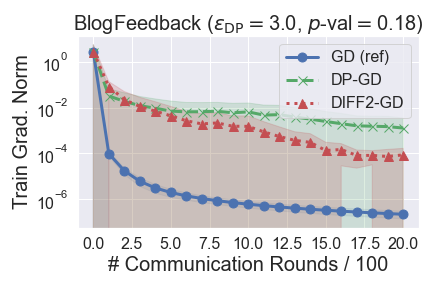}
}
\subfigure[Test Loss]
{\includegraphics[width=5.5cm]{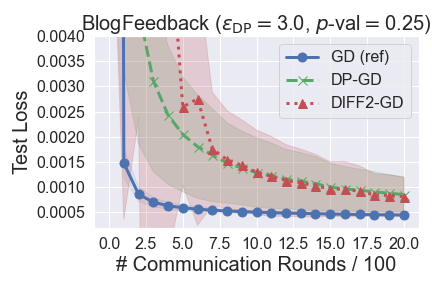}
}
\vspace{-0.5em}
\end{subfigmatrix}
\caption{Comparison of the train loss, train gradient norm and test loss against the number of communication rounds ($\varepsilon=3.0$, $\delta=10^{-5}$ and $R = 2,000$). (a)-(c) shows the comparison of the three criteria on California Housing dataset, (d)-(f) do the ones on Gas Turbine CO and NOx Emission dataset and (g)-(i) do the ones on BlogFeedback dataset. DIFF2-GD consistently outperformed DP-GD. }
\label{fig: by_rounds}
\end{figure*}

{\bf{Results. }} Figure \ref{fig: by_rounds} shows the comparison of DIFF2-GD with DP-GD on the three datasets for $\varepsilon_\mathrm{DP} = 3.0$. ``GD (ref)" means gradient descent without DP noise. ``$p$-value" in the title of each figure means the $p$-value of the one-sided $t$-test with the alternative hypothesis that the difference of the minimum value in $2,000$ rounds of DIFF2-GD from the one of DP-GD is negative, where one random seed corresponded to one sample, i.e., the degree of freedom of $t$-distribution was four. We can observe that DIFF2-GD consistently outperformed DP-GD at the final learning rounds for each metric, although it sometimes showed slower convergence than DP-GD at initial learning rounds. Also, the results of $t$-test supported statistical significance of the superiority of DIFF2-GD. The case of $\varepsilon_\mathrm{DP} = 5.0$ showed similar results to the case of $\varepsilon_\mathrm{DP} = 3.0$, that are provided in the supplementary material due to the space limitation.  
 
\section{Conclusion and Future Work}
In this work, we proposed a new differential private optimization framework called DIFF2 
to improve the previous best known utility bound. DIFF2 communicates gradient differences rather than gradients themselves and constructs a differential private global gradient estimator with possibly quite small variance based on the gradient differences. It is shown that DIFF2 with a gradient descent subroutine achieves the utility of $\widetilde O(d^{2/3}/(n_\mathrm{min}P\varepsilon_\mathrm{DP})^{4/3})$, which can be significantly better than the previous one in terms of the dependence on sample size $n_\mathrm{min}P$. We conducted numerical experiments and confirmed the superiority of DIFF2-GD to DP-GD and the benefit of DIFF2 framework. 

The first important future work is to derive a lower bound of the utility of $(\varepsilon_\mathrm{DP}, \delta_\mathrm{DP})$-DP optimization algorithms for nonconvex smooth objectives and discuss the optimality of DIFF2 framework. The second important future work is to develop new utility analysis of DP optimization algorithms for high dimensional problems, which often arise in learning modern huge deep neural networks. While this work has tackled a quite fundamental and important problem, utility improvement, in DP optimization theory, our theoretical improvement is limited on relatively low dimensional problems. Finally, extensions of our algorithms to various directions are also important future work. For example, client sampling is typically used in cross-device FL, but is not discussed in this work. Also, DIFF2 framework on a decentralized topology is an important extension. It is non-trivial to extend our DP guarantees and utility analysis to these settings. While a more computational and communication efficient sub-routine than GD is briefly mentioned in Section \ref{sec: efficient_sub_routine}, a broad range of optimization algorithms including simple SGD, accelerated gradient methods, adaptive gradient methods and so on can be incorporated into DIFF2 framework. Then, it is important to develop unified theoretical analysis and compare their theoretical and empirical performances.

\section*{Acknowledgement}
TS was partially supported by JSPS KAKENHI (20H00576) and JST CREST. 

\bibliography{references}
\bibliographystyle{icml2023}

\clearpage

\appendix
\onecolumn

\section{Supplementary Material for Numerical Experiments}
\label{app_sec: experiment}
In this section, we provide additional information and results on our numerical experiments that are not given in the main paper due to the space limitation. 

\subsection*{Hyper-Parameter Tuning}
The hyper-parameter of non-private GD was only learning rate $\eta$. The ones of DP-GD were learning rate $\eta$ and clipping radius $C_1$. The ones of Diff2-GD were learning rate $\eta$, the restart interval $T$ and clipping radius $C_1$ and $C_2$. 

For tuning clipping radius and restart interval, we ran DP-GD with $C_1 \in \{1, 3.0, 10.0, 30.0, 100.0\}$ and ran DIFF2-GD with $C_1, C_2 \in \{1, 3.0, 10.0, 30.0, 100.0\}$ and $T \in \{0.003R, 0.01R, 0.03R, 0.1R\}$ for $2,000$ rounds. For tuning learning rate $\eta$, we ran each implemented algorithm with $\eta \in \{0.5^i | i \in \{0, \ldots, 9\}\}$. To reduce the execution time, train loss was evaluated every $20$ rounds and the learning was stopped if the used learning rate was deemed inappropriate by checking the train loss. Specifically, the learning rate was determined to be inappropriate (i) if the train loss reached a NAN value; or (ii) if the patience count reached five. Here, the patience count was increased by one if the current train loss was greater than $1.05$ times the previous minimum train loss in the current learning, and reset to be zero if the current train loss was smaller than the previous best train loss. If we reached $2,000$ rounds using a certain learning rate $\eta_*$, the learning rate tuning was terminated and the best learning rate was determined to be $\eta_*$ instead of further trying smaller learning rates to avoid wasting  computational resources.

For two criteria of train loss and squared train gradient norm, we chose the the best hyper-parameters that minimized the criteria and report the learning results with the best hyper-parameters. For test loss, we chose the the best hyper-parameters that minimized train loss and report the learning results with the best hyper-parameters.

These procedures were independently executed for all the five random seeds. 

\subsection*{Datasets Information}
The information of three datasets used in our experiments is summarized in Table \ref{app_tab: datasets}.

\begin{table}[h]
  \caption{The summary of the datasets used in the numerical experiments.}
  \label{app_tab: datasets}
  \centering
  \begin{tabular}{ccc} \hline
      Dataset & \# Samples & \# Attributes \\ \hline \hline
      California Housing & $20,640$ & $8$ \\
      Gas Turbine CO and NOx Emission & $36,733$ & $9$\\
      BlogFeedback & $60,021$ & $280$ \\
      Cover Type & $58,1012$ & $54$ \\
      KDDCup99 & $4,898,431$ \footnotemark  & $41$ \\
      \hline
  \end{tabular}
\end{table}
\footnotetext{In our experiments, we used only $10$ percent of the data. }

\subsection*{Additional Numerical Results on Regression Tasks}
Here, we provide additional numerical results on the case $\varepsilon_\mathrm{DP} = 5.0$ on regression tasks. We confirmed that the results were similar to the ones when $\varepsilon_\mathrm{DP} = 3.0$ provided in the main paper.

\begin{figure*}[t]
\begin{subfigmatrix}{3}
\subfigure[Train Loss]{\includegraphics[width=5.5cm]{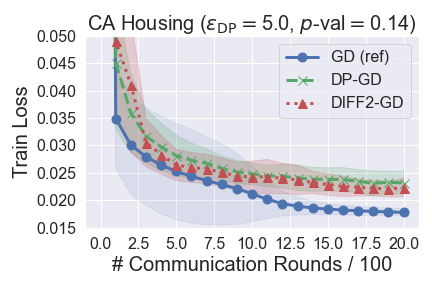}
}
\subfigure[Squared Train Grad. Norm]{\includegraphics[width=5.5cm]{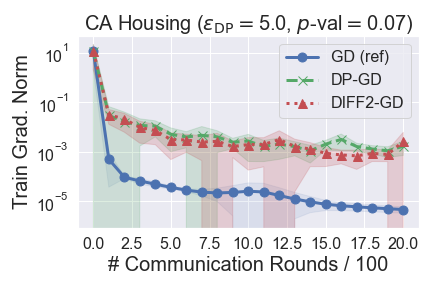}
}
\subfigure[Test Loss]
{\includegraphics[width=5.5cm]{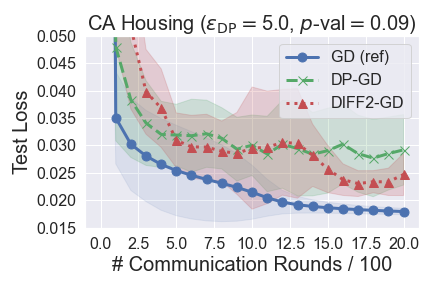}
}
\end{subfigmatrix}
\vspace{-0.5em}
\begin{subfigmatrix}{3}
\subfigure[Train Loss]{\includegraphics[width=5.5cm]{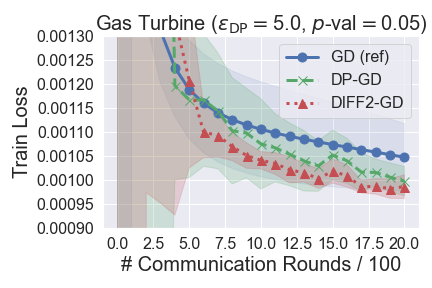}
}
\subfigure[Squared Train Grad. Norm]{\includegraphics[width=5.5cm]{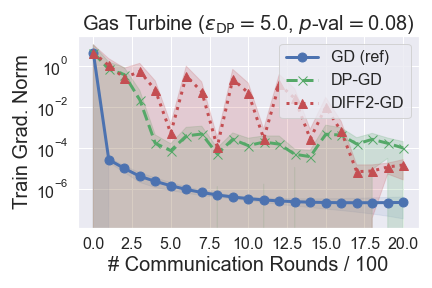}
}
\subfigure[Test Loss]
{\includegraphics[width=5.5cm]{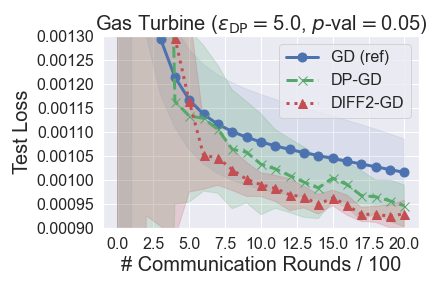}
}
\end{subfigmatrix}
\vspace{-0.5em}
\begin{subfigmatrix}{3}
\subfigure[Train Loss]{\includegraphics[width=5.5cm]{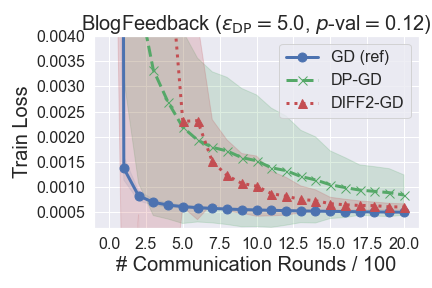}
}
\subfigure[Squared Train Grad. Norm]{\includegraphics[width=5.5cm]{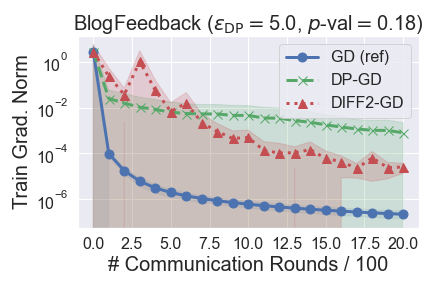}
}
\subfigure[Test Loss]
{\includegraphics[width=5.5cm]{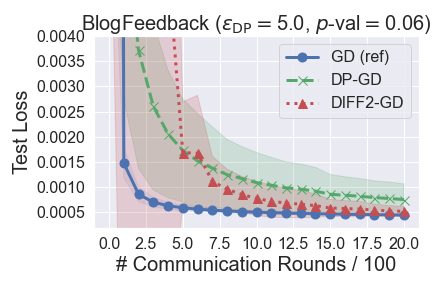}
}
\vspace{-0.5em}
\end{subfigmatrix}
\caption{Comparison of the train loss, train gradient norm and test loss against the number of communication rounds ($\varepsilon=5.0$, $\delta=10^{-5}$ and $R = 2,000$). (a)-(c) shows the comparison of the three criteria on California Housing dataset, (d)-(f) do the ones on Gas Turbine CO and NOx Emission dataset and (g)-(i) do the ones on BlogFeedback dataset. We confirm that DIFF2-GD consistently outperformed DP-GD. }
\label{fig: by_rounds}
\end{figure*}

\subsection*{Computing Infrastructures}
\begin{itemize}
    \item OS: Ubuntu 16.04.6
    \item CPU: AMD EPYC 7552 48-Core Processor.
    \item CPU Memory: 1.0 TB.    
    \item Programming language: Python 3.9.12.
    \item Deep learning framework: Pytorch 1.12.1.
\end{itemize}

\subsection*{Additional Numerical Results on Classification Tasks}

Here, we provide additional numerical results on the case $\varepsilon_\mathrm{DP} = 3.0$ on classification tasks. We confirm that the consistent superiority of the proposed method was observed even for classification tasks. 

\begin{figure}[H]
\begin{subfigmatrix}{3}
\subfigure[Train Loss]{\includegraphics[width=4.8cm]{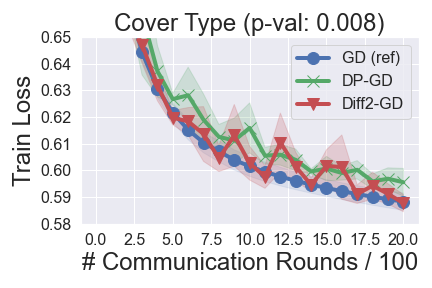}
}
\subfigure[Squared Train Grad. Norm]{\includegraphics[width=4.8cm]{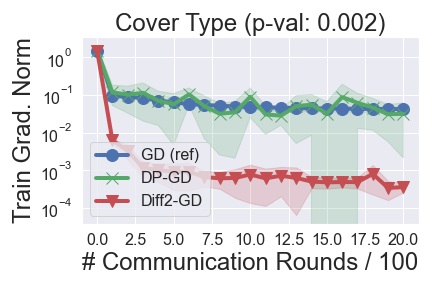}
}
\subfigure[Train Acc]
{\includegraphics[width=4.8cm]{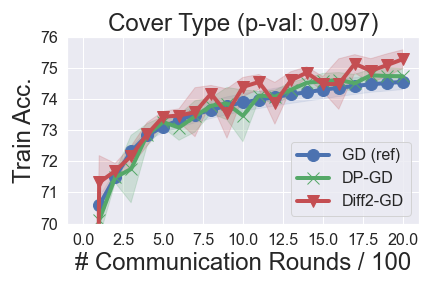}
}
\end{subfigmatrix}
\begin{subfigmatrix}{2}
\subfigure[Test Loss]{\includegraphics[width=4.8cm]{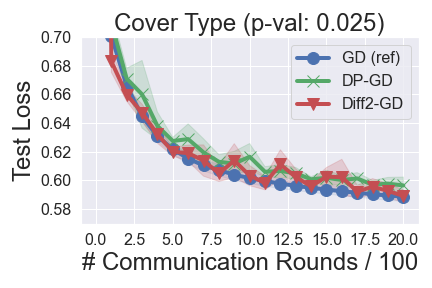}
}
\subfigure[Test Acc]{\includegraphics[width=4.8cm]{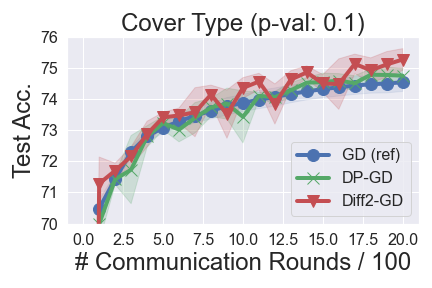}
}
\end{subfigmatrix}
\begin{subfigmatrix}{3}
\subfigure[Train Loss]{\includegraphics[width=4.8cm]{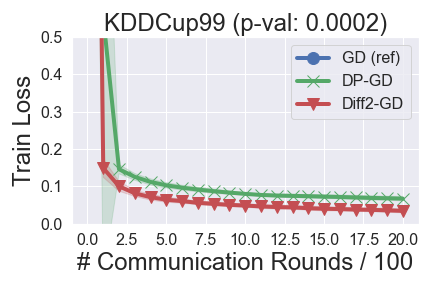}
}
\subfigure[Squared Train Grad. Norm]{\includegraphics[width=4.8cm]{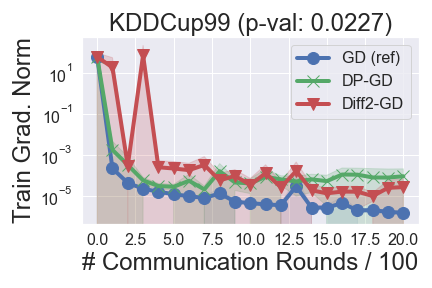}
}
\subfigure[Train acc]
{\includegraphics[width=4.8cm]{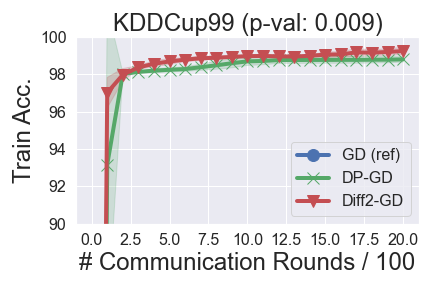}
}
\end{subfigmatrix}
\begin{subfigmatrix}{2}
\subfigure[Test Loss]{\includegraphics[width=4.8cm]{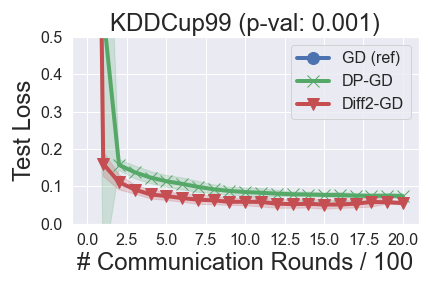}
}
\subfigure[Test Acc]{\includegraphics[width=4.8cm]{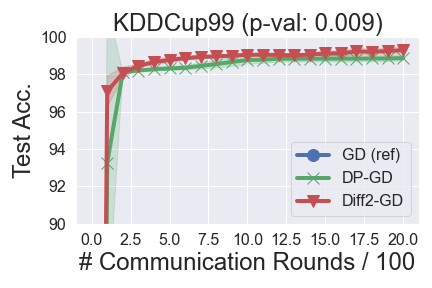}
}
\end{subfigmatrix}
\caption{Comparison of the train loss, train gradient norm and test loss against the number of communication rounds ($\varepsilon=3.0$, $\delta=10^{-5}$ and $R = 2,000$). (a)-(e) shows the comparison of the three criteria on Cover Type  dataset and (f)-(j) do the ones on KDDCup99  dataset. We confirm that DIFF2-GD consistently outperformed DP-GD. }
\label{app_fig: by_rounds}
\end{figure}

\section{Review of R\'enyi Differential Privacy}
Our DP analysis relies on  R\'enyi differential privacy (RDP) technique with subsampling and shuffling amplification. In this section, we briefly review some known results about RDP used in our analysis. 

\begin{definition}[$(\alpha, \varepsilon)$-RDP (\cite{mironov2017renyi})]
A randomized mechanism $\mathcal M: \mathcal D \to \mathcal R$ satisfies $(\alpha, \varepsilon)$-RDP ($\alpha \in (1, \infty)$ and $\varepsilon > 0$) if for any datasets $D, D' \in \mathcal D$ with $d_\mathrm{H}(D, D') = 1$, it holds that 

\begin{align*}
    \frac{1}{\alpha-1}\log\mathbb{E}_{o \sim \mathcal M(D')}\left(\frac{\mathcal M(D)(o)}{\mathcal M(D')(o)}\right)^\alpha \leq \varepsilon,
\end{align*}
where $M(D)(o)$ denotes the density of $\mathcal M(D)$ at $o$.
\end{definition}

\begin{lemma}[Post-processing Property of RDP (\cite{mironov2017renyi})]
\label{app_lem: post_processing}
Let $\mathcal M: \mathcal D \to \mathcal R$ be $(\alpha, \varepsilon)$-RDP and $g: \mathcal R \to \mathcal R'$ be any function. Then, $g\circ \mathcal M: \mathcal D \to \mathcal R'$ is also $(\alpha, \varepsilon)$-RDP. 
\end{lemma}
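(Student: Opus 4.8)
The plan is to deduce the lemma from the \emph{data processing inequality} for R\'enyi divergence. Writing $D_\alpha(P\|Q) := \frac{1}{\alpha-1}\log \mathbb{E}_{o\sim Q}[(P(o)/Q(o))^\alpha]$, the $(\alpha,\varepsilon)$-RDP hypothesis on $\mathcal M$ is exactly the statement that $D_\alpha(\mathcal M(D)\|\mathcal M(D'))\le \varepsilon$ for every adjacent pair $D,D'$. Since the law of $(g\circ\mathcal M)(D)$ is the pushforward of $\mathcal M(D)$ under $g$, it suffices to prove that for any map $g$ and any two distributions $P,Q$ on $\mathcal R$ one has $D_\alpha(g_\# P\|g_\# Q)\le D_\alpha(P\|Q)$; applying this with $P=\mathcal M(D)$, $Q=\mathcal M(D')$ and invoking the hypothesis gives $D_\alpha((g\circ\mathcal M)(D)\|(g\circ\mathcal M)(D'))\le\varepsilon$, which is precisely the claim.

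First I would reduce the R\'enyi bound to an $f$-divergence bound. Because $t\mapsto \frac{1}{\alpha-1}\log t$ is monotone increasing for $\alpha>1$, it is enough to show that the inner quantity $\chi_\alpha(P\|Q):=\mathbb{E}_{o\sim Q}[(P(o)/Q(o))^\alpha]=\int P(o)^\alpha Q(o)^{1-\alpha}\,do$ does not increase under $g$. This $\chi_\alpha$ is the $f$-divergence generated by the convex function $f(t)=t^\alpha$ (convex precisely because $\alpha>1$), so the desired monotonicity is the standard data processing inequality for $f$-divergences, which one may either cite or establish directly.

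For a self-contained argument I would prove the key inequality fiberwise. The core fact is that $F(x,y):=x^\alpha y^{1-\alpha}$ on $(0,\infty)^2$ is jointly convex and positively homogeneous of degree one (indeed $F$ is the perspective of the convex map $x\mapsto x^\alpha$), hence \emph{subadditive}: $F(x_1+x_2,\,y_1+y_2)\le F(x_1,y_1)+F(x_2,y_2)$. In the discrete case the pushforward densities are $(g_\# P)(o')=\sum_{o\in g^{-1}(o')}P(o)$ and likewise for $Q$, so applying subadditivity along each fiber $g^{-1}(o')$ yields $F\big((g_\# P)(o'),(g_\# Q)(o')\big)\le \sum_{o\in g^{-1}(o')}F(P(o),Q(o))$, and summing over $o'$ gives $\chi_\alpha(g_\# P\|g_\# Q)\le\chi_\alpha(P\|Q)$. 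The continuous case follows by the same argument after disintegrating $do$ over the level sets of $g$, or by a standard quantization/limiting argument.

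The main obstacle is technical rather than conceptual: verifying the joint convexity (equivalently, subadditivity) of $F(x,y)=x^\alpha y^{1-\alpha}$ and carrying the fiberwise inequality through the measure-theoretic bookkeeping of the pushforward in the continuous setting. Since the lemma assumes $g$ is a deterministic function, the coarsening-along-fibers argument above is all that is needed; were $g$ randomized (given by a Markov kernel $k(o'\mid o)$), the same convexity fact combined with Jensen's inequality applied to $k(\cdot\mid o)$ would extend the bound, but that generality is not required here.
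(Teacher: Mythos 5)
Your proof is correct. Note that the paper itself offers no proof of this lemma: it is imported verbatim from \citet{mironov2017renyi}, where it is likewise established by the data processing inequality for R\'enyi divergence (there credited to van Erven and Harrem\"oes). So your overall route --- identify $(\alpha,\varepsilon)$-RDP with the bound $D_\alpha(\mathcal M(D)\|\mathcal M(D'))\le\varepsilon$ and invoke monotonicity of $D_\alpha$ under pushforward --- is the same as the cited source's; what you add is a self-contained verification of that monotonicity, and your verification is sound: $F(x,y)=x^\alpha y^{1-\alpha}$ is indeed the perspective of the convex map $x\mapsto x^\alpha$ (convex since $\alpha>1$), hence jointly convex and degree-one homogeneous, hence subadditive, and summing the fiberwise inequality $F\bigl((g_\#P)(o'),(g_\#Q)(o')\bigr)\le\sum_{o\in g^{-1}(o')}F\bigl(P(o),Q(o)\bigr)$ over $o'$ gives $\chi_\alpha(g_\#P\|g_\#Q)\le\chi_\alpha(P\|Q)$, from which the RDP bound follows because $t\mapsto\frac{1}{\alpha-1}\log t$ is increasing for $\alpha>1$. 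Citing the lemma buys brevity; your argument buys transparency, and it also makes visible where the deterministic hypothesis on $g$ enters (only through the fiber decomposition) and why the randomized case would follow from the same convexity plus Jensen, exactly as you observe. Two minor bookkeeping points if you were to write this out in full: extend the two-term subadditivity to countable fibers by induction plus a monotone-convergence limit, and fix the usual conventions for fibers where $Q$ vanishes (e.g.\ $F(x,0)=+\infty$ for $x>0$ and $F(0,0)=0$), so that the inequality remains valid without assuming mutual absolute continuity; neither affects the application in the paper, where all mechanisms are Gaussian with everywhere-positive densities.
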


\begin{lemma}[Composition of RDP Mechanisms (\cite{mironov2017renyi})]
\label{app_lem: composition}
Let $\mathcal M_r: \mathcal R_1 \times \cdots \times \mathcal R_{r-1} \times \mathcal D \to \mathcal R_r$ be $(\alpha, \varepsilon_r)$-RDP for $r \in [R]$. Then, $\mathcal M: \mathcal D \to \mathcal R_1 \times \cdots \times \mathcal R_R$ defined by $\mathcal M(D) := (\mathcal M_1(D), \mathcal M_2(\mathcal M_1(D), D), \ldots, \mathcal M_R(\mathcal M_1(D), \ldots, D))$ is $(\alpha, \sum_{r=1}^R\varepsilon_r)$-RDP.
\end{lemma}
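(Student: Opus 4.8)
The plan is to reduce to the two-fold composition $R = 2$ and then induct on $R$, the inductive step being identical to the base case. Throughout I work with the exponentiated form of the RDP condition: a mechanism $\mathcal N$ is $(\alpha, \varepsilon)$-RDP precisely when $\mathbb{E}_{o \sim \mathcal N(D')}(\mathcal N(D)(o)/\mathcal N(D')(o))^\alpha \leq e^{(\alpha-1)\varepsilon}$ for every adjacent pair $D, D'$, which is just the definition with the logarithm cleared. The hypothesis on $\mathcal M_r$ is read as: for every fixed prefix $(o_1, \ldots, o_{r-1})$, the map $D \mapsto \mathcal M_r(o_1, \ldots, o_{r-1}, D)$ is $(\alpha, \varepsilon_r)$-RDP, so its order-$\alpha$ moment is bounded by $e^{(\alpha-1)\varepsilon_r}$ \emph{uniformly in the prefix}; this uniformity is the crux of the argument.

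For $R = 2$, write $p_D(o_1, o_2) = \mathcal M_1(D)(o_1)\,\mathcal M_2(o_1, D)(o_2)$ for the joint density of $\mathcal M(D)$, obtained by the chain rule, and likewise for $D'$. First I would expand the target moment $\mathbb{E}_{(o_1,o_2)\sim \mathcal M(D')}(p_D/p_{D'})^\alpha$ as an integral against $p_{D'}$ and regroup the powers so that the integrand factorizes as
\[
\frac{[\mathcal M_1(D)(o_1)]^\alpha}{[\mathcal M_1(D')(o_1)]^{\alpha-1}}\cdot \frac{[\mathcal M_2(o_1, D)(o_2)]^\alpha}{[\mathcal M_2(o_1, D')(o_2)]^{\alpha-1}}.
\]
Integrating the second factor over $o_2$ reproduces exactly the order-$\alpha$ moment of $D \mapsto \mathcal M_2(o_1, D)$ evaluated at $(D, D')$, namely $\mathbb{E}_{o_2 \sim \mathcal M_2(o_1, D')}(\mathcal M_2(o_1,D)(o_2)/\mathcal M_2(o_1,D')(o_2))^\alpha$, which by the RDP hypothesis on $\mathcal M_2$ is at most $e^{(\alpha-1)\varepsilon_2}$ for \emph{every} $o_1$. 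Pulling this uniform constant out, the remaining integral over $o_1$ is the order-$\alpha$ moment of $\mathcal M_1$ at $(D, D')$, bounded by $e^{(\alpha-1)\varepsilon_1}$. Multiplying, the joint moment is at most $e^{(\alpha-1)(\varepsilon_1+\varepsilon_2)}$, i.e. $\mathcal M$ is $(\alpha, \varepsilon_1+\varepsilon_2)$-RDP.

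For general $R$ I would induct: treating $(\mathcal M_1, \ldots, \mathcal M_{R-1})$ as a single mechanism that is $(\alpha, \sum_{r<R}\varepsilon_r)$-RDP by the inductive hypothesis, and composing it with $\mathcal M_R$ via the $R=2$ case (with the prefix now being the whole tuple $(o_1, \ldots, o_{R-1})$), yields $(\alpha, \sum_{r=1}^R \varepsilon_r)$-RDP. The only genuine subtlety, and the step I expect to require the most care, is keeping the conditioning bookkeeping correct: the inner moment bound must hold uniformly over the prefix outputs, because it is integrated against the $\mathcal M_1$-marginal rather than evaluated at a fixed value, and this is exactly what the definition of RDP for a mechanism with auxiliary first arguments guarantees. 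I would also flag the density/measurability conditions and the use of Fubini for the iterated integral as routine technical points to state but not belabor.
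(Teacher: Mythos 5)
Your proof is correct, including the one point where care is genuinely needed: the per-prefix RDP hypothesis gives a bound on the inner moment that holds \emph{uniformly} in $o_1$, which is what lets you pull it out of the outer integral as a constant. The paper itself states this lemma without proof, citing \citet{mironov2017renyi}; your argument --- regrouping the joint density ratio as $\frac{[\mathcal M_1(D)(o_1)]^\alpha}{[\mathcal M_1(D')(o_1)]^{\alpha-1}}\cdot\frac{[\mathcal M_2(o_1,D)(o_2)]^\alpha}{[\mathcal M_2(o_1,D')(o_2)]^{\alpha-1}}$, integrating out $o_2$ first, then inducting on $R$ --- is exactly the standard proof from that reference (its Proposition 1), so there is nothing to compare beyond noting the match.
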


\begin{lemma}[From RDP to DP (\cite{mironov2017renyi})]
\label{app_lem: rdp_to_dp}
If a randomized mechanism $\mathcal M$ is $(\alpha, \varepsilon)$-RDP, then $\mathcal M$ is $(\varepsilon + \log(1/\delta)/(\alpha-1), \delta)$-DP for every $\delta \in (0, 1)$. 
\end{lemma}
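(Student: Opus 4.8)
The plan is to convert the divergence bound defining RDP into a tail bound on the likelihood ratio, and then translate that tail bound into the additive $\delta$ slack of DP. Write $P := \mathcal M(D)$ and $Q := \mathcal M(D')$ for an arbitrary adjacent pair $D, D'$ (i.e.\ $d_\mathrm{H}(D,D')=1$), with densities $p$ and $q$, set $Z := p/q$ for the likelihood ratio, and abbreviate the target budget $\varepsilon' := \varepsilon + \log(1/\delta)/(\alpha-1)$. First I would reduce the DP inequality to a single tail probability: for any measurable $S$, the quantity $P(S) - e^{\varepsilon'} Q(S) = \int_S (p - e^{\varepsilon'} q)$ has a positive integrand only on the set $B := \{o : Z(o) > e^{\varepsilon'}\}$, so $P(S) - e^{\varepsilon'} Q(S) \leq \int_B p = P(B)$. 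Since $P(B)$ does not depend on $S$, it suffices to prove $P(B) \leq \delta$ once and for all.

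The second step bounds $P(B)$ by a moment of $Z$ and identifies that moment as exactly what RDP controls. Applying Markov's inequality to the nonnegative variable $Z^{\alpha-1}$ under $P$ gives $P(B) = P(Z^{\alpha-1} > e^{(\alpha-1)\varepsilon'}) \leq e^{-(\alpha-1)\varepsilon'}\, \mathbb{E}_P[Z^{\alpha-1}]$. The decisive identity is the change of measure $\mathbb{E}_P[Z^{\alpha-1}] = \int Z^{\alpha-1} p = \int Z^\alpha q = \mathbb{E}_Q[Z^\alpha]$, which turns the expectation under $P$ into precisely the $\alpha$-th moment of the likelihood ratio under $Q$ appearing in the RDP definition.

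The third step invokes the hypothesis. The $(\alpha,\varepsilon)$-RDP assumption reads $\frac{1}{\alpha-1}\log \mathbb{E}_Q[Z^\alpha] \leq \varepsilon$, i.e.\ $\mathbb{E}_Q[Z^\alpha] \leq e^{(\alpha-1)\varepsilon}$. Substituting, $P(B) \leq e^{-(\alpha-1)\varepsilon'} e^{(\alpha-1)\varepsilon} = e^{(\alpha-1)(\varepsilon - \varepsilon')}$, and since $\varepsilon - \varepsilon' = -\log(1/\delta)/(\alpha-1)$ by the choice of $\varepsilon'$, this collapses to $P(B) \leq e^{-\log(1/\delta)} = \delta$. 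Chaining with the first step yields $P(S) \leq e^{\varepsilon'} Q(S) + \delta$ for every measurable $S$ and every adjacent pair, which is exactly $(\varepsilon', \delta)$-DP as claimed.

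I expect no serious obstacle here: the statement is standard, and the only conceptual hinge is the change-of-measure identity $\mathbb{E}_P[Z^{\alpha-1}] = \mathbb{E}_Q[Z^\alpha]$ paired with the Markov exponent $\alpha-1$, which is calibrated precisely so that the RDP moment surfaces. The remaining care is measure-theoretic: handling the null set where $q$ vanishes (restricting integration to $\mathrm{supp}(Q)$, or using the convention that the excess integrand vanishes there, noting $D_\alpha = \infty$ would otherwise contradict the finite RDP bound), and observing that the final bound is uniform in $S$ because $P(B)$ is independent of $S$.
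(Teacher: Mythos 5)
Your proof is correct, and there is nothing in the paper to compare it against line by line: Lemma~\ref{app_lem: rdp_to_dp} is stated without proof, imported from \citet{mironov2017renyi} (Proposition~3 there). Relative to that cited source, your route is genuinely different in mechanics, though it hinges on the same quantity $\mathbb{E}_Q[Z^\alpha] \leq e^{(\alpha-1)\varepsilon}$. Mironov's original argument applies H\"older's inequality to get the probability-preservation bound $P(S) \leq e^{\varepsilon(\alpha-1)/\alpha} Q(S)^{(\alpha-1)/\alpha}$ and then performs a case analysis on whether this bound exceeds $\delta$: if not, $P(S) \leq \delta$ trivially suffices, and if so, one solves for $Q(S)$ and absorbs the slack to obtain $P(S) \leq e^{\varepsilon'}Q(S)$, yielding $P(S) \leq \max\{\delta, e^{\varepsilon'}Q(S)\}$. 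Your argument instead truncates the likelihood ratio at $e^{\varepsilon'}$ and applies Markov's inequality to $Z^{\alpha-1}$, i.e.\ a Chernoff-style tail bound on the privacy loss $\log Z$ --- the approach familiar from the concentrated-DP literature (Bun--Steinke). The two buy slightly different things: your version is more direct, needs no case split, isolates the reusable inequality $P(S) \leq e^{\varepsilon'}Q(S) + P(Z > e^{\varepsilon'})$, and makes transparent why the exponent $\alpha-1$ is the right Markov calibration; Mironov's H\"older route gives the intermediate multiplicative bound $P(S) \leq (e^{\varepsilon}Q(S))^{(\alpha-1)/\alpha}$, which is itself useful elsewhere. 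All your steps check out, including the reduction of the DP inequality to the $S$-independent tail $P(B)$, the change of measure $\mathbb{E}_P[Z^{\alpha-1}] = \mathbb{E}_Q[Z^\alpha]$, and the observation that $P(\{q = 0\}) > 0$ would force the R\'enyi divergence to be infinite for $\alpha > 1$, contradicting the hypothesis --- so the measure-theoretic caveat you flag is handled correctly.
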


\begin{definition}[$l_q$-sensitivity]
$\Delta_q(h) := \sup_{d_\mathrm H (D, D') = 1}\|h(D) - h(D')\|_q$ is called $l_q$-sensitivity for function $h$, where the maximum is taken over any adjacent datasets $D, D' \in \mathcal D$.
\end{definition}

\begin{lemma}[Gaussian Mechanism (\cite{mironov2017renyi})]
\label{app_lem: gaussian_mechanism}
Given a function $h$, Gaussian Mechanism $\mathcal M(D) := h(D) + \mathcal N(0, \sigma_1^2I)$ satisfies $(\alpha, \alpha\Delta_2^2(h)/(2\sigma_1^2))$-RDP for every $\alpha \in (1, \infty)$. 
\end{lemma}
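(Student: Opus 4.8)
The plan is to reduce the claim to a direct computation of the Rényi divergence of order $\alpha$ between two isotropic Gaussians that share the covariance $\sigma_1^2 I$ but differ only in their means. By the definition of $(\alpha, \varepsilon)$-RDP, it suffices to bound, for every pair of adjacent datasets $D, D'$ with $d_\mathrm{H}(D, D') = 1$, the quantity $\frac{1}{\alpha-1}\log \mathbb{E}_{o \sim \mathcal M(D')}(\mathcal M(D)(o)/\mathcal M(D')(o))^\alpha$. Writing $\mu := h(D)$ and $\nu := h(D')$, the two output densities are $P(x) = (2\pi\sigma_1^2)^{-d/2}\exp(-\|x-\mu\|^2/(2\sigma_1^2))$ and $Q(x) = (2\pi\sigma_1^2)^{-d/2}\exp(-\|x-\nu\|^2/(2\sigma_1^2))$, so the target reduces to evaluating the integral $\int P(x)^\alpha Q(x)^{1-\alpha}\, dx$, which equals $\mathbb{E}_{x \sim Q}[(P(x)/Q(x))^\alpha]$.

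First I would form $P(x)^\alpha / Q(x)^{\alpha-1}$ and collect the two Gaussian exponents into a single factor $\exp(-[\alpha\|x-\mu\|^2 - (\alpha-1)\|x-\nu\|^2]/(2\sigma_1^2))$, retaining the normalizing constant $(2\pi\sigma_1^2)^{-d/2}$. The central algebraic step is to rewrite the bracketed quadratic by completing the square in $x$: expanding both squared norms and grouping the linear term identifies the center $c := \alpha\mu - (\alpha-1)\nu$, and a short calculation then gives the identity $\alpha\|x-\mu\|^2 - (\alpha-1)\|x-\nu\|^2 = \|x-c\|^2 - \alpha(\alpha-1)\|\mu-\nu\|^2$. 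This is the crux of the argument and the one place where the precise coefficient $\alpha(\alpha-1)$—and hence the final bound—emerges; everything else is bookkeeping.

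With the square completed, the integrand becomes a correctly normalized Gaussian density centered at $c$ multiplied by the constant factor $\exp(\alpha(\alpha-1)\|\mu-\nu\|^2/(2\sigma_1^2))$, so integrating over $x$ leaves exactly $\int P(x)^\alpha Q(x)^{1-\alpha}\, dx = \exp(\alpha(\alpha-1)\|\mu-\nu\|^2/(2\sigma_1^2))$. Taking the logarithm and dividing by $\alpha-1$ then yields the exact Rényi divergence $\frac{1}{\alpha-1}\log\mathbb{E}_{x \sim Q}[(P(x)/Q(x))^\alpha] = \alpha\|\mu-\nu\|^2/(2\sigma_1^2)$. Finally, since $D$ and $D'$ are adjacent, $\|\mu-\nu\| = \|h(D)-h(D')\| \leq \Delta_2(h)$ by the definition of $\ell_2$-sensitivity, giving the bound $\alpha\Delta_2^2(h)/(2\sigma_1^2)$ uniformly over adjacent pairs and for every $\alpha \in (1,\infty)$, which is precisely the claimed $(\alpha, \alpha\Delta_2^2(h)/(2\sigma_1^2))$-RDP guarantee.

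I anticipate the only genuine obstacle is the completion-of-the-square identity in the second step; it is routine but must be carried out carefully, since any sign or coefficient slip propagates directly into the privacy parameter. All remaining manipulations—collecting exponents, recognizing the residual Gaussian integral as unity, and bounding the mean gap by the sensitivity—are mechanical.
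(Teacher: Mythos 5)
Your proposal is correct: the completion-of-the-square identity $\alpha\|x-\mu\|^2 - (\alpha-1)\|x-\nu\|^2 = \|x - (\alpha\mu - (\alpha-1)\nu)\|^2 - \alpha(\alpha-1)\|\mu-\nu\|^2$ checks out, the residual integrand is a properly normalized Gaussian, and bounding $\|\mu-\nu\|$ by $\Delta_2(h)$ gives exactly the stated RDP parameter. The paper itself does not prove this lemma but cites it from \citet{mironov2017renyi}, and your computation is precisely the standard argument underlying that citation, so your approach coincides with the source's.
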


\begin{lemma}[Subsampling  Amplification (Theorem 9 in \cite{wang2019subsampled})]
\label{app_lem: subsampling_amplification}
Let $\mathcal M$ be a randomized mechanism that takes a dataset of $b \leq n$ points as an input and $\gamma := b/n$. $\mathcal M \circ \mathrm{subsample}_\gamma$ be defined as: (1) $\mathrm{subsample}_\gamma$: subsample $\gamma n$ points without replacement from the input dataset with size $n$, and (2) apply $\mathcal M$ taking the subsampled points as the input. For every integer $\alpha \geq 2$, if $\mathcal M$ is $(\alpha, \varepsilon(\alpha))$-RDP, $\mathcal M \circ \mathrm{subsample}_\gamma$ is $(\alpha, \varepsilon'(\alpha))$-RDP, where
\begin{align*}
    \varepsilon'(\alpha) \leq&\ \frac{1}{\alpha-1}\log\left(1 + \gamma^2\binom{\alpha}{2}\min\left\{4(e^{\varepsilon(2)}-1), e^{\varepsilon(2)}\min\{2, (e^{\varepsilon(\infty)} - 1)^2\}\right\} \right. \\
    &+ \left. \sum_{j=3}^\alpha \gamma^j \binom{\alpha}{j}e^{(j-1)\varepsilon(j)}\min\{2, (e^{\varepsilon(\infty)}-1)^j\}
    \right).
\end{align*}
\end{lemma}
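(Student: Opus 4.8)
The plan is to bound the order-$\alpha$ Rényi divergence between the outputs of $\mathcal M\circ\mathrm{subsample}_\gamma$ on two datasets $D,D'$ with $d_\mathrm{H}(D,D')=1$ directly, exploiting the fact that the two subsampling distributions share a large common component. Write $p$ and $q$ for the densities of $\mathcal M\circ\mathrm{subsample}_\gamma(D)$ and $\mathcal M\circ\mathrm{subsample}_\gamma(D')$, and let $i$ be the single index at which $D$ and $D'$ differ. Conditioning on whether the without-replacement subsample contains index $i$ (an event of probability $\gamma$), and coupling the draws of the remaining records, yields the mixture decompositions $p=(1-\gamma)q_0+\gamma p_1$ and $q=(1-\gamma)q_0+\gamma q_1$, where $q_0$ is the conditional law (identical for $D$ and $D'$) when $i$ is excluded and $p_1,q_1$ are the conditional laws when $i$ is included. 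The structural fact I would extract first is that $p_1$, $q_1$, and $q_0$ are pairwise related through adjacency: under the natural coupling of the auxiliary records, each pair is an average of $\mathcal M$ evaluated on datasets differing in exactly one record, so by joint convexity of the functional $(\mu,\nu)\mapsto\int\mu^{j}\nu^{1-j}$ (valid for $j\ge 1$) together with the $(j,\varepsilon(j))$-RDP hypothesis I obtain $\int p_1^{j}q_1^{1-j}\le e^{(j-1)\varepsilon(j)}$ and the analogous bounds for every pair. By quasi-convexity of Rényi divergence in its first argument, the mixture $q$ itself then stays within order-$j$ divergence $\varepsilon(j)$ of $q_1$, uniformly in $\gamma$, which is what ultimately prevents any spurious $\gamma^{-(j-1)}$ factor.

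Next I would expand the moment. Since $p-q=\gamma(p_1-q_1)$, writing $p/q=1+\gamma(p_1-q_1)/q$ and invoking the binomial theorem — valid precisely because $\alpha$ is a positive integer, which is why the statement restricts to integer $\alpha\ge2$ — gives
\begin{align*}
\mathbb{E}_{o\sim q}\left[\left(\frac{p(o)}{q(o)}\right)^{\alpha}\right]
=\sum_{j=0}^{\alpha}\binom{\alpha}{j}\gamma^{j}\int\frac{(p_1-q_1)^{j}}{q^{\,j-1}}.
\end{align*}
The $j=0$ term equals $1$, and the $j=1$ term vanishes because $\int(p_1-q_1)=0$ (both are probability densities); this is exactly what produces the leading ``$1+\cdots$'' in the claimed bound. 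It then remains to show that each higher-order integral $T_j:=\int(p_1-q_1)^{j}q^{1-j}$ is bounded, independently of $\gamma$, by $e^{(j-1)\varepsilon(j)}\min\{2,(e^{\varepsilon(\infty)}-1)^{j}\}$, so that term $j$ contributes $\gamma^{j}\binom{\alpha}{j}$ times this quantity.

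The technical heart — and the step I expect to be the main obstacle — is controlling $T_j$ uniformly in $\gamma$. For the $(e^{\varepsilon(\infty)}-1)^{j}$ route I would use the pointwise ratio bound $|p_1-q_1|\le(e^{\varepsilon(\infty)}-1)\,q_1$ coming from the $\infty$-order guarantee, reducing $T_j$ to $(e^{\varepsilon(\infty)}-1)^{j}\int q_1^{j}q^{1-j}$; the remaining integral is turned into a $\gamma$-free quantity by the weighted AM–GM bound $q\ge q_0^{1-\gamma}q_1^{\gamma}$, after which it collapses to a Rényi-type quantity between $q_0$ and $q_1$ and is controlled by $e^{(j-1)\varepsilon(j)}$. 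For the dimension-free ``$2$'' route, which must survive even when $\varepsilon(\infty)=\infty$, I would instead apply Hölder's inequality and the pairwise moment bounds $\int p_1^{j}q_1^{1-j},\int q_1^{j}q^{1-j}\le e^{(j-1)\varepsilon(j)}$ together with the symmetric estimates, keeping whichever of the two bounds is smaller. The delicate points are (i) discharging the $q^{1-j}$ factor without reintroducing $\gamma$, for which the mixture-closeness of $q$ to $q_1$ established above is essential, and (ii) the sharper treatment of $j=2$: there I would bound the $\chi^2$-type quantity $T_2=\int(p_1-q_1)^2/q$ directly, using $\int(p_1-q_1)^2/q_1\le e^{\varepsilon(2)}-1$ and its symmetric counterpart to obtain $T_2\le 4(e^{\varepsilon(2)}-1)$, and then retain the minimum of this and the generic bound, yielding the inner $\min$ displayed for $j=2$. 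Assembling the $j=0,1,2$ and higher-order contributions, taking logarithms, and dividing by $\alpha-1$ produces the stated $\varepsilon'(\alpha)$.
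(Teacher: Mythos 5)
You could not have known this, but the paper does not prove this lemma at all: it is imported verbatim as Theorem 9 of \cite{wang2019subsampled}, so the only meaningful comparison is with that source's proof. Your reconstruction follows essentially the same architecture as that original proof: the coupled mixture decomposition $p=(1-\gamma)q_0+\gamma p_1$, $q=(1-\gamma)q_0+\gamma q_1$, the exact binomial expansion $\mathbb{E}_{o\sim q}[(p/q)^\alpha]=\sum_{j=0}^{\alpha}\binom{\alpha}{j}\gamma^j\int(p_1-q_1)^j q^{1-j}$ with the $j=1$ term vanishing, and control of the resulting ``ternary'' integrals by reducing them — via couplings of the shared records and joint convexity of $(\mu,\nu)\mapsto\int\mu^{a}\nu^{1-a}$ for $a\geq 1$ — to RDP bounds between pairwise adjacent datasets. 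This is precisely the ternary-divergence / advanced-joint-convexity machinery of Wang et al. Your Route A for $j\geq 3$ (the pointwise bound $|p_1-q_1|\le(e^{\varepsilon(\infty)}-1)q_1$, which survives the mixture structure, plus the AM--GM bound $q\ge q_0^{1-\gamma}q_1^{\gamma}$ collapsing $\int q_1^{j}q^{1-j}$ to $\int q_1^{a}q_0^{1-a}$ with $a=j-\gamma(j-1)\in[1,j]$, then monotonicity of R\'enyi divergence in the order) and your Route B (using $|p_1-q_1|^{j}\le p_1^{j}+q_1^{j}$ plus a H\"older interpolation such as $\int p_1^{j}q^{1-j}\le\bigl(\int p_1^{j}q_0^{1-j}\bigr)^{1-\gamma}\bigl(\int p_1^{j}q_1^{1-j}\bigr)^{\gamma}$) both check out and yield exactly $e^{(j-1)\varepsilon(j)}\min\{2,(e^{\varepsilon(\infty)}-1)^{j}\}$.

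The one step that does not go through as written is the $j=2$ branch. You claim $T_2=\int(p_1-q_1)^2/q\le 4(e^{\varepsilon(2)}-1)$ ``directly'' from $\chi^2(p_1\|q_1)\le e^{\varepsilon(2)}-1$ and its symmetric counterpart, but those quantities carry $q_1$ (resp.\ $p_1$) in the denominator, and there is no pointwise comparison between $1/q$ and $1/q_1$ that avoids a $1/\gamma$ blow-up (the only available bound $q\ge\gamma q_1$ is useless here, and AM--GM would reintroduce a factor $e^{\varepsilon(2)}$ rather than $e^{\varepsilon(2)}-1$). The repair — which is what the source does — is to exploit convexity of the functional $(\mu,\nu,\omega)\mapsto\int(\mu-\nu)^2/\omega$ in its denominator argument to split $q$ along its mixture components:
\begin{align*}
    T_2 \le (1-\gamma)\int\frac{(p_1-q_1)^2}{q_0}+\gamma\int\frac{(p_1-q_1)^2}{q_1}.
\end{align*}
The second integral is at most $e^{\varepsilon(2)}-1$ by your pairwise bound; the first requires the third adjacency you listed in your opening paragraph but never actually used: with $(p_1-q_1)^2\le 2(p_1-q_0)^2+2(q_1-q_0)^2$ and the simultaneous coupling under which $p_1$, $q_1$, $q_0$ are conditionally outputs of $\mathcal M$ on pairwise adjacent datasets, one gets $2\chi^2(p_1\|q_0)+2\chi^2(q_1\|q_0)\le 4(e^{\varepsilon(2)}-1)$, which is where the constant $4$ comes from. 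With that repair your argument is complete and, up to packaging, coincides with the proof in \cite{wang2019subsampled}.
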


We can derive a simple upper bound using Lemma \ref{app_lem: subsampling_amplification}.
\begin{lemma}[Subsampling Amplification (Simple Upper Bound)]
\label{app_lem: subsampling_amplification_upper_bound}
On the same settings as in Lemma \ref{app_lem: subsampling_amplification}, if $\varepsilon(\alpha)$ is monotonically increasing with respect to $\alpha$, it holds that 
\begin{align*}
    \varepsilon'(\alpha) 
    \leq&\ \frac{2}{3}\left(4 + \frac{e}{c}\right)\frac{\gamma^2\alpha^2\varepsilon(2)}{\alpha-1}
\end{align*}
under $\varepsilon(\alpha)\leq 1/3 \wedge \log(1/(2\gamma\alpha))$ and $\gamma \leq \varepsilon(2)/(c\alpha)$ for some $c > 0$.

\end{lemma}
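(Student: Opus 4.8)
The plan is to start from the explicit RDP bound of Lemma~\ref{app_lem: subsampling_amplification} and reduce it to the claimed closed form by (i) linearizing the logarithm, (ii) isolating the $j=2$ term, which carries the $\varepsilon(2)$ factor directly, and (iii) showing that the higher-order tail $j\ge 3$ is a geometrically convergent series whose total is of the same order $\gamma^2\alpha^2\varepsilon(2)$, but damped by $1/c$. First I would use $\log(1+x)\le x$ to drop the logarithm, and in each summand replace the inner minima by a single valid upper bound: for the quadratic term I keep $4(e^{\varepsilon(2)}-1)$, and for every $j\ge 3$ I use $\min\{2,(e^{\varepsilon(\infty)}-1)^j\}\le 2$ (the only usable branch, since for the Gaussian mechanism $\varepsilon(\infty)=\infty$). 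This yields
\[
\varepsilon'(\alpha)\le\frac{1}{\alpha-1}\Big(4\gamma^2\binom{\alpha}{2}(e^{\varepsilon(2)}-1)+2\sum_{j=3}^{\alpha}\gamma^j\binom{\alpha}{j}e^{(j-1)\varepsilon(j)}\Big).
\]

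For the $j=2$ term I would use $\binom{\alpha}{2}\le\alpha^2/2$ together with the elementary inequality $e^x-1\le\tfrac43 x$, valid on $[0,1/3]$ (note $\varepsilon(2)\le\varepsilon(\alpha)\le 1/3$ by monotonicity), giving $4\gamma^2\binom{\alpha}{2}(e^{\varepsilon(2)}-1)\le\tfrac83\gamma^2\alpha^2\varepsilon(2)$, which is exactly the contribution $\tfrac23\cdot 4$ to the stated constant.

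The crux is the tail. Here the naive bound $\binom{\alpha}{j}\le\alpha^j/j!$ only produces a factor $\gamma^2\alpha^2$, not $\gamma^2\alpha^2\varepsilon(2)$, so I must inject one factor of $\varepsilon(2)$. I would write, for $j\ge 3$, $(\gamma\alpha)^j=(\gamma\alpha)^2\cdot(\gamma\alpha)\cdot(\gamma\alpha)^{j-3}$ and spend the single middle factor on the sampling-ratio hypothesis $\gamma\alpha\le\varepsilon(2)/c$, turning it into $\varepsilon(2)/c$. The remaining $(\gamma\alpha)^{j-3}$ is paired with $e^{(j-3)\varepsilon(j)}$ extracted from $e^{(j-1)\varepsilon(j)}$; the log-hypothesis $\varepsilon(\alpha)\le\log(1/(2\gamma\alpha))$ together with monotonicity gives $\gamma\alpha\, e^{\varepsilon(j)}\le\tfrac12$, whence $(\gamma\alpha)^{j-3}e^{(j-3)\varepsilon(j)}\le(1/2)^{j-3}$, while the leftover $e^{2\varepsilon(j)}\le e^{2/3}$. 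Collecting these, the $j$-th term is bounded by $\tfrac{2e^{2/3}}{c}\gamma^2\alpha^2\varepsilon(2)\cdot\tfrac{(1/2)^{j-3}}{j!}$, and summing the absolutely convergent series $\sum_{j\ge 3}(1/2)^{j-3}/j!$ (extending the finite sum to $j=\infty$ only increases it, since all terms are nonnegative) produces a finite constant; bounding that constant times $2e^{2/3}$ by $\tfrac{2e}{3}$ gives the tail contribution $\tfrac{2e}{3c}\gamma^2\alpha^2\varepsilon(2)$. Adding the two pieces and dividing by $\alpha-1$ gives the claim.

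The main obstacle I anticipate is precisely the appearance of $\varepsilon(2)$ in the tail: the higher-order RDP terms are intrinsically $O(\gamma^2\alpha^2)$, which is larger than the target $O(\gamma^2\alpha^2\varepsilon(2))$ unless one exploits that the subsampling ratio is itself controlled by $\varepsilon(2)$. Getting the two hypotheses to cooperate---using $\gamma\alpha\le\varepsilon(2)/c$ to supply the missing order and $\gamma\alpha\, e^{\varepsilon(\alpha)}\le\tfrac12$ to guarantee summability without inflating the constant---is the delicate bookkeeping step. Since the stated constant $\tfrac23(4+e/c)$ is not tight, any clean chain of the above inequalities suffices.
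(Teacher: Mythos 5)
Your proposal is correct and follows essentially the same route as the paper's proof: both start from the explicit bound of Lemma~\ref{app_lem: subsampling_amplification} with the same branch choices, linearize the logarithm, bound the $j=2$ term by $\tfrac{8}{3}\gamma^2\alpha^2\varepsilon(2)$, and tame the $j\ge 3$ tail by spending one factor of $\gamma\alpha\le\varepsilon(2)/c$ to inject $\varepsilon(2)$ while using $\gamma\alpha e^{\varepsilon(\alpha)}\le 1/2$ (from the log-hypothesis and monotonicity) for geometric summability. The only cosmetic differences are that the paper folds the tail into a single geometric series $\tfrac13\sum_j(\gamma\alpha e^{\varepsilon(\alpha)})^j\le\tfrac23(\gamma\alpha e^{\varepsilon(\alpha)})^3$ and uses $e^x-1\le x+x^2$ where you use $e^x-1\le\tfrac43 x$ and a term-by-term $1/j!$ bound, landing on the identical constant $\tfrac23(4+e/c)$.
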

\begin{proof}
From Lemma \ref{app_lem: subsampling_amplification}, we have
\begin{align*}
    \varepsilon'(\alpha) \leq&\ \frac{1}{\alpha-1}\log\left(1 + 4\gamma^2\binom{\alpha}{2}(e^{\varepsilon(2)} - 1)  + 2\sum_{j=3}^\alpha \gamma^j \binom{\alpha}{j}e^{(j-1)\varepsilon(j)}
    \right).
\end{align*}
Suppose that $\varepsilon(\alpha) \leq \log(1/(2\gamma \alpha))$. Then, we can see that $\gamma \alpha e^{\varepsilon(\alpha)} < 1/2$. 
The third term can be bounded as follows:
\begin{align*}
    2\sum_{j=3}^\alpha \gamma^j \binom{\alpha}{j}e^{(j-1)\varepsilon(j)}  \leq&\ \frac{1}{3}\sum_{j=3}^\alpha (\gamma \alpha e^{\varepsilon(\alpha)})^j \\
    \leq&\ \frac{1}{3}\frac{(\gamma \alpha e^{\varepsilon(\alpha)})^3}{1 - \gamma \alpha e^{\varepsilon(\alpha)}} \\
    \leq&\ \frac{2(\gamma \alpha e^{\varepsilon(\alpha)})^3}{3}
\end{align*}

Assume that $\gamma  \leq \varepsilon(2)/(c\alpha)$ for some $c > 0$. Then, we have
\begin{align*}
    2\sum_{j=3}^\alpha \gamma^j \binom{\alpha}{j}e^{(j-1)\varepsilon(j)}
    \leq&\ \frac{2\gamma^2 \alpha^2 e^{3\varepsilon(\alpha)}\varepsilon(2)}{3c}.
\end{align*}
Hence, we obtain
\begin{align*}
    \varepsilon'(\alpha) \leq&\ \frac{1}{\alpha-1}\log\left(1 + \left(2(e^{\varepsilon(2)}-1) + \frac{2e^{3\varepsilon(\alpha)}\varepsilon(2)}{3c}\right)\gamma^2\alpha^2 
    \right) \\
    \leq&\ \left(2(e^{\varepsilon(2)}-1) + \frac{2e^{3\varepsilon(\alpha)}\varepsilon(2)}{3c}\right)\frac{\gamma^2\alpha^2}{\alpha-1} \\
    \leq&\ \left(2\varepsilon(2)(1+\varepsilon(2)) + \frac{2e\varepsilon(2)}{3c}\right)\frac{\gamma^2\alpha^2}{\alpha-1}.
\end{align*}
Here, for the second inequality, we used $\log(1+x) \leq x$. The last inequality holds because $e^x - 1 \leq x + x^2$ for $x \leq 3/2$ by assuming $\varepsilon(2) \leq\varepsilon(\alpha) \leq 1/3 \leq 3/2$. This gives the desired result. 
\end{proof}

\section{Differential Privacy Analysis of DIFF2-GD}
\label{app_sec: dp_analysis_diff2_gd}
In this section, we investigate differential privacy level of DIFF2 (Algorithm \ref{alg: DIFF2}) with GD-Routine (Algorithm \ref{alg: gd_routine}). 

First, we consider the $l_2$-sensitivity of $v_r$ for $r \in [R]$ with respect to $D_p$. 
\begin{lemma}[$l_2$-Sensitivity of $v_{r}$]
\label{app_lem: l2_sensitivity_full_grad}
In Algorithm \ref{alg: DIFF2}, $v_{1}$ has $l_2$-sensitivity $2 C_1/(n_pP)$ with respect to $D_p$. Furthermore, given the outputs of the previous mechanisms $\{x_{r'}, \widetilde v_{r'}\}_{r' \in [r-1]}$, $v_{r}$ has $l_2$-sensitivity $2 C_{2, r}/(n_pP)$ with respect to $D_p$. 
\end{lemma}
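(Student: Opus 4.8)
The plan is to compute the sensitivity directly from the definition of \texttt{ClippedMean} (Algorithm~\ref{alg: clipped_mean}) together with the structure of the update, exploiting that replacing a single record in $D_p$ perturbs only the $p$-th local summand. For $v_1$, since $\widetilde v_0 = 0$ we have $v_1 = \frac{1}{P}\sum_{q=1}^P d_1^{(q)}$ with $d_1^{(q)} = \frac{1}{n_q}\sum_{z \in D_q}\widehat{\nabla\ell}(x_0,z)$, where $\widehat{\nabla\ell}(x_0,z) := \min\{C_1/\|\nabla\ell(x_0,z)\|,1\}\nabla\ell(x_0,z)$ denotes the clipped gradient, which by construction satisfies $\|\widehat{\nabla\ell}(x_0,z)\| \le C_1$. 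Taking adjacent $D_p, D_p'$ that differ in exactly one record, say the $i$-th, $z_i$ versus $z_i'$, only the term $\frac{1}{P}d_1^{(p)}$ changes, and the change equals $\frac{1}{n_p P}(\widehat{\nabla\ell}(x_0,z_i) - \widehat{\nabla\ell}(x_0,z_i'))$. The triangle inequality then bounds its norm by $\frac{1}{n_p P}(C_1 + C_1) = 2C_1/(n_p P)$, which is the claimed $l_2$-sensitivity of $v_1$.

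For $r \ge 2$ in the gradient-difference branch, I would first condition on the realized outputs $\{x_{r'}, \widetilde v_{r'}\}_{r' \in [r-1]}$. This is the key structural observation: conditioning renders $x_{r-1}, x_{r-2}$ and hence the clipping radius $C_{2,r} = C_2\|x_{r-1}-x_{r-2}\|$ deterministic, and likewise fixes the additive term $\widetilde v_{r-1}$ in $v_r = \frac{1}{P}\sum_q d_r^{(q)} + \widetilde v_{r-1}$. Thus the only data-dependent part of $v_r$ is $\frac{1}{P}\sum_q d_r^{(q)}$, with $d_r^{(q)} = \frac{1}{n_q}\sum_{z\in D_q}\widehat{g}(z)$ and $\widehat{g}(z)$ the $C_{2,r}$-clipped gradient difference of $\nabla\ell(x_{r-1},z) - \nabla\ell(x_{r-2},z)$, so that $\|\widehat{g}(z)\| \le C_{2,r}$. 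Repeating the single-record argument, changing one record in $D_p$ changes $v_r$ by $\frac{1}{n_p P}(\widehat{g}(z_i) - \widehat{g}(z_i'))$, whose norm is at most $2C_{2,r}/(n_p P)$.

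The only genuinely delicate point is this conditioning step. Because $C_{2,r}$ is itself a function of earlier parameters that were produced from the data together with the injected noise, it would be invalid to treat it as a fixed constant unconditionally; the per-round sensitivity is well-defined only after fixing the previous outputs. Making this explicit is exactly what later lets the bound feed into the RDP composition theorem (Lemma~\ref{app_lem: composition}), where each mechanism's privacy cost is assessed given the outputs of its predecessors. Everything else reduces to a routine application of the triangle inequality together with the $\le C$ norm guarantee of \texttt{ClippedMean}.
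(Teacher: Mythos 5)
Your proposal is correct and follows essentially the same route as the paper's proof: a single-record replacement in $D_p$ perturbs only the term $\frac{1}{n_pP}\bigl(\hat g(z) - \hat g(z')\bigr)$, and the ClippedMean norm guarantee plus the triangle inequality gives $2C_1/(n_pP)$ for $r=1$ and $2C_{2,r}/(n_pP)$ otherwise, with the paper likewise noting that $x_{r-1}$, $x_{r-2}$ and $\widetilde v_{r-1}$ are fixed given the previous outputs. Your explicit discussion of why conditioning on prior outputs is needed to make $C_{2,r}$ deterministic is a welcome elaboration of a point the paper states only in passing, but it is the same argument.
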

\begin{proof}
When $(r-1)\% T = 0$, the $l_2$-sensitivity of $v_{1} = (1/P)\sum_{p=1}^P d_{1}^{(p)}$ for adjacent local datasets $D_p$ and $D_p'$ can be bounded as 
\begin{align*}
    \frac{1}{n_p P}\left\| \min\left\{\frac{C_1}{\|\nabla \ell(x_{r-1}, z)\|}, 1\right\} \nabla \ell(x_{r-1}, z) -  \min\left\{\frac{C_1}{\|\nabla \ell(x_{r-1}, z')\|}, 1\right\} \nabla \ell(x_{r-1}, z')\right\|
    \leq \frac{2C_1}{n_p P}
\end{align*}
for some $z \neq z'$ from the definition of $d_{1}^{(p)}$ since $x_{r-1}$ is fixed. 

When $(r-1)\% T \neq 0$, the $l_2$-sensitivity of $v_{r} = (1/P)\sum_{p=1}^P d_r^{(p)} + \widetilde v_{r-1}$ for adjacent local datasets $D_p$ and $D_p'$ can be bounded as 
\begin{align*}
    &\frac{1}{n_p P}\left\| \min\left\{\frac{C_{2, r}}{\|\nabla \ell(x_{r-1}, z) - \nabla \ell(x_{r-2}, z)\|}, 1\right\} (\nabla \ell(x_{r-1}, z) - \nabla \ell(x_{r-2}, z) \right.\\
    &- \left.  \min\left\{\frac{C_{2, r}}{\|\nabla \ell(x_{r-1}, z') - \nabla \ell(x_{r-2}, z')\|}, 1\right\} (\nabla \ell(x_{r-1}, z') - \nabla \ell(x_{r-2}, z')\right\|
    \leq \frac{2C_{2, r}}{n_pP}
\end{align*}
for some $z \neq z'$ from the definition of $d_{r}^{(p)}$ since $x_{r-1}$, $x_{r-2}$ and $\widetilde v_{r-1}$ are fixed.
This finishes the proof. 
\end{proof}

Combined Lemma \ref{app_lem: l2_sensitivity_full_grad}  with Lemmas \ref{app_lem: gaussian_mechanism} and \ref{app_lem: composition}, we can show Proposition \ref{prop: dp_noise_level_diff2_gd_routine}. 

\subsection*{Proof of Proposition \ref{prop: dp_noise_level_diff2_gd_routine}}
First, we derive a RDP bound with respect to $D_p$ for each mechanism $\widetilde v_{k, r}$.

When $(r-1)\% T = 0$, from Lemma \ref{app_lem: gaussian_mechanism}, we know that $\widetilde v_{r}$ is $(\alpha, 2\alpha/(n_p^2P^2\sigma_1^2))$-RDP for every $\alpha \in (1, \infty)$ since $\xi_{1} \sim \mathcal N(0, \sigma_1^2C_1^2I)$ and $v_1$ has $l_2$-sensitivity $\Delta_2 = 2C_1/(n_pP)$ by the first part of Lemma \ref{app_lem: l2_sensitivity_full_grad}. 

Similarly, when $(r-1)\%T \neq 0$, from Lemma \ref{app_lem: gaussian_mechanism}, we know that $\widetilde v_{r}$ is $(\alpha, 2\alpha/(n_p^2P^2\sigma_2^2))$-RDP for every $\alpha \in (1, \infty)$ since $\xi_{r} \sim \mathcal N(0, \sigma_2^2C_{2, r}^2I)$ and $v_r$ has $l_2$-sensitivity $\Delta_2 = 2C_{2, r}/(n_pP)$ by the second part of Lemma \ref{app_lem: l2_sensitivity_full_grad}. 

Hence, given $\{x_{r'}, x_{r'}^\mathrm{out}, \widetilde v_{r'}\}_{r' \in [r-1]}$, $\{x_{r}, x_r^\mathrm{out}, \widetilde v_r\}$ are $(\alpha, 2\alpha/(n_p^2P^2\sigma^2))$-RDP from the post-processing property of RDP (Lemma \ref{app_lem: post_processing}) since $x_r = x_r^\mathrm{out} = x_{r-1} - \eta \widetilde v_r$, where $\sigma^2 = \sigma_1^2$ when $(r-1)\% T = 0$ and $\sigma^2 = \sigma_2^2$ when $(r-1)\%T \neq 0$. 

Now, by Lemma \ref{app_lem: composition}, it holds that mechanism $\{x_{r}, x_{r}^\mathrm{out}, \widetilde v_{r}\}_{r \in [R]}$ is $(\alpha, 2\alpha \lceil R/T\rceil/(n_p^2P^2\sigma_1^2) + 2\alpha (R -\lceil R/T\rceil) /(n_p^2P^2\sigma_2^2))$-RDP with respect to $D_p$.  

Therefore, setting $\alpha = 1 + \lceil 2\log(1/\delta_\mathrm{DP})/\varepsilon_\mathrm{DP}\rceil$, we only need to find $\sigma_1^2$ and $\sigma_2^2$ that satisfy $2\alpha \lceil R/T\rceil/(n_p^2P^2\sigma_1^2) + 2\alpha (R -\lceil R/T\rceil) /(n_p^2P^2\sigma_2^2)) \leq \varepsilon/2$. Then, we can see that for any constant $u > 1$, it is sufficient that $\sigma_1^2 = 4 u \alpha \lceil R/T\rceil /(n_\mathrm{min}^2P^2\varepsilon_\mathrm{DP})$ and $\sigma_2^2 = (4u/(u-1))\alpha (R - \lceil R/T \rceil)/(n_\mathrm{min}^2P^2\varepsilon_\mathrm{DP})$ guarantees $(\varepsilon_\mathrm{DP}, \delta_\mathrm{DP})$-DP of mechanism $\{x_{r}, x_{r}^\mathrm{out}, \widetilde v_{r}\}_{r \in [R]}$. This is the desired result.  

\section{Utility Analysis of DIFF2-GD}
\label{app_sec: utility_analysis_diff2_gd}
In this section, we give utility analysis of DIFF2 (Algorithm \ref{alg: DIFF2}) with GD-Routine (Algorithm \ref{alg: gd_routine}). 

\begin{lemma}[Descent Lemma]
\label{app_lem: descent_lemma_DIFF2}
Suppose that Assumption \ref{assump: local_loss_smoothness} holds. Under $\eta \leq 1/(2L)$, DIFF2 (Algorithm \ref{alg: DIFF2}) satisfies 
\begin{align*}
    \mathbb{E}[f(x_{r})] \leq&\ 
    \mathbb{E}[f(x_{r-1})] - \frac{\eta}{2}\mathbb{E}\|\nabla f(x_{r-1})\|^2 - \frac{1}{4\eta} \mathbb{E}\|x_{r} -  x_{r-1}\|^2 \\
    &+ \eta\mathbb{E}\left\|\frac{1}{\eta}(x_{r-1} - x_{r}) - \nabla f(x_{r-1})\right\|^2 .
\end{align*}
for every round $r \in [R]$. 
\end{lemma}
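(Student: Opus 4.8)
The plan is to establish this as a standard inexact gradient-descent inequality, treating $\widetilde v_r$ as an approximate gradient whose deviation from $\nabla f(x_{r-1})$ is quarantined into the last term. First I would note that under Assumption~\ref{assump: local_loss_smoothness} each $\ell(\cdot,z)$ is $L$-smooth, so $f=(1/P)\sum_p f_p$ is $L$-smooth as an average of $L$-smooth functions; the usual smoothness descent inequality then gives
\begin{align*}
    f(x_r) \le f(x_{r-1}) + \langle \nabla f(x_{r-1}), x_r - x_{r-1}\rangle + \frac{L}{2}\|x_r - x_{r-1}\|^2.
\end{align*}
Because GD-Routine (Algorithm~\ref{alg: gd_routine}) sets $x_r = x_{r-1} - \eta \widetilde v_r$, I would introduce the shorthand $u_r := \tfrac1\eta(x_{r-1}-x_r) = \widetilde v_r$, so that $x_r - x_{r-1} = -\eta u_r$ and the inner-product term becomes $-\eta\langle \nabla f(x_{r-1}), u_r\rangle$.

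The core step is to split this inner product with the polarization identity $\langle g, u\rangle = \tfrac12(\|g\|^2 + \|u\|^2 - \|g-u\|^2)$ applied to $g=\nabla f(x_{r-1})$ and $u = u_r$. This yields three pieces: $-\tfrac{\eta}{2}\|\nabla f(x_{r-1})\|^2$, which is exactly the descent term we want; $-\tfrac{\eta}{2}\|u_r\|^2 = -\tfrac{1}{2\eta}\|x_r - x_{r-1}\|^2$, which is negative and will be merged with the smoothness quadratic; and $+\tfrac{\eta}{2}\|\nabla f(x_{r-1}) - u_r\|^2 = \tfrac{\eta}{2}\|\tfrac1\eta(x_{r-1}-x_r) - \nabla f(x_{r-1})\|^2$, which is precisely (up to the constant) the error term on the right-hand side. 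Choosing polarization rather than a plain Young's inequality is what simultaneously produces the gradient-norm term \emph{and} the negative $\|x_r - x_{r-1}\|^2$ contribution, and this is really the only point that requires care.

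Collecting terms gives
\begin{align*}
    f(x_r) \le f(x_{r-1}) - \frac{\eta}{2}\|\nabla f(x_{r-1})\|^2 - \Big(\frac{1}{2\eta} - \frac{L}{2}\Big)\|x_r - x_{r-1}\|^2 + \frac{\eta}{2}\Big\|\tfrac1\eta(x_{r-1}-x_r) - \nabla f(x_{r-1})\Big\|^2.
\end{align*}
Under the stepsize restriction $\eta \le 1/(2L)$ we have $L/2 \le 1/(4\eta)$, hence $\tfrac{1}{2\eta}-\tfrac{L}{2} \ge \tfrac{1}{4\eta}$, which lets me replace the middle coefficient by $\tfrac{1}{4\eta}$. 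Finally, bounding $\tfrac{\eta}{2}\le \eta$ on the last (nonnegative) term and taking expectation over all randomness delivers exactly the claimed inequality. The argument is deterministic up to the final expectation, so there is no genuine obstacle; the crux is simply the polarization bookkeeping in the second step together with tracking the stepsize condition so that the $\|x_r - x_{r-1}\|^2$ coefficient lands at $-1/(4\eta)$.
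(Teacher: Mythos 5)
Your proposal is correct and follows essentially the same route as the paper: the paper likewise starts from the $L$-smoothness descent inequality and rewrites the inner product $\langle \nabla f(x_{r-1}), x_r - x_{r-1}\rangle$ via exactly this polarization identity (written there as $\frac{1}{\eta}\bigl(-\frac{\eta^2}{2}\|\nabla f(x_{r-1})\|^2 - \frac{1}{2}\|x_r - x_{r-1}\|^2 + \frac{1}{2}\|x_r - x_{r-1} + \eta\nabla f(x_{r-1})\|^2\bigr)$), then uses $\eta \le 1/(2L)$ to get the $-1/(4\eta)$ coefficient and takes expectations. Your explicit remarks that $f$ inherits $L$-smoothness as an average of $L$-smooth losses and that the final $\eta/2 \le \eta$ relaxation on the nonnegative error term is needed to match the stated bound are both accurate fillings-in of steps the paper leaves implicit.
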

\begin{proof}
From $L$-smoothness of $f$, we have
\begin{align*}
    f(x_r) \leq&\ f(x_{r-1}) + \langle \nabla f(x_{r-1}), x_r - x_{r-1}\rangle + \frac{L}{2}\|x_r - x_{r-1}\|^2 \\
    =&\ f(x_{r-1}) +  \frac{1}{\eta}\left(- \frac{\eta^2}{2}\|\nabla f(x_{r-1})\|^2 - \frac{1}{2}\|x_{r} - x_{r-1}\|^2 + \frac{1}{2}\|x_r - x_{r-1} + \eta \nabla f(x_{r-1})\|^2\right)\\
    &+ \frac{L}{2}\|x_r - x_{r-1}\|^2 \\
    =&\ f(x_{r-1}) - \frac{\eta}{2}\|\nabla f(x_{r-1})\|^2 - \left(\frac{1}{2\eta} - \frac{L}{2}\right)\|x_{r} -  x_{r-1}\|^2 + \frac{\eta}{2}\left\|\frac{1}{\eta}(x_{r-1} - x_{r}) - \nabla f(x_{r-1})\right\|^2. 
\end{align*}
Taking expectation on both sides with respect to all the history of the randomness and assuming $\eta \leq 1/(2L)$ yield the desired result.
\end{proof}

\begin{proposition}
\label{app_prop: convergence_rate_DIFF2_gd_routine}
Suppose that Assumption \ref{assump: local_loss_smoothness}, \ref{assump: optimal_sol} and \ref{assump: local_loss_gradient_boundedness} hold. Assume that $C_1 \geq G$ and $C_2 \geq L$. Then, if we appropriately choose $\eta = \Theta(\min\{1/L, 1/\sqrt{T\sigma_2^2C_2^2d}\}$, DIFF2-GD satisfies
\begin{align*}
    \mathbb{E}\|\nabla f(x^\mathrm{out})\|^2 \leq&\ O\left(\frac{f(x_0) - f(x_*)}{\eta R} + \sigma_1^2C_1^2d \right).  
\end{align*}
\end{proposition}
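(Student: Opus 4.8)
The plan is to start from the descent lemma (Lemma~\ref{app_lem: descent_lemma_DIFF2}), whose only non-trivial term is the gradient-estimation error, since GD-Routine sets $x_r = x_{r-1} - \eta\widetilde v_r$ and hence $\frac{1}{\eta}(x_{r-1}-x_r) - \nabla f(x_{r-1}) = \widetilde v_r - \nabla f(x_{r-1})$. The crucial first observation is that under $C_1 \geq G$ and $C_2 \geq L$ \emph{no clipping is ever active}: Assumption~\ref{assump: local_loss_gradient_boundedness} gives $\|\nabla\ell(x,z)\| \leq G \leq C_1$, and Assumption~\ref{assump: local_loss_smoothness} gives $\|\nabla\ell(x_{r-1},z) - \nabla\ell(x_{r-2},z)\| \leq L\|x_{r-1}-x_{r-2}\| \leq C_{2,r}$. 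So ClippedMean returns the exact averages, giving $\widetilde v_r = \nabla f(x_{r-1}) - \nabla f(x_{r-2}) + \widetilde v_{r-1} + \xi_r$ inside a restart period and $\widetilde v_{r_0} = \nabla f(x_{r_0-1}) + \xi_{r_0}$ at the restart round $r_0$ (the latest index $\leq r$ with $(r_0-1)\%T=0$).

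Next I would telescope this exact recursion from $r_0$ to $r$, which collapses the gradient differences and yields the clean identity $e_r := \widetilde v_r - \nabla f(x_{r-1}) = \sum_{s=r_0}^{r}\xi_s$: the bias of the estimator is exactly the accumulated DP noise of the current period. Because each $\xi_s$ is conditionally mean-zero given the history while $C_{2,s} = C_2\|x_{s-1}-x_{s-2}\|$ is measurable with respect to that history, the cross terms vanish and $\mathbb{E}\|e_r\|^2 = \sigma_1^2 C_1^2 d + \sigma_2^2 C_2^2 d\sum_{s=r_0+1}^{r}\mathbb{E}\|x_{s-1}-x_{s-2}\|^2$.

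Then I would sum the descent lemma over $r\in[R]$ and rearrange to isolate $\frac{\eta}{2}\sum_r \mathbb{E}\|\nabla f(x_{r-1})\|^2$, retaining the negative movement term $-\frac{1}{4\eta}\sum_r\Delta_r$ with $\Delta_r := \mathbb{E}\|x_r-x_{r-1}\|^2$. The initial-noise part sums to exactly $\eta R\sigma_1^2 C_1^2 d$, which after dividing by $\eta R/2$ produces the $\sigma_1^2 C_1^2 d$ term of the claim. For the difference-noise part, interchanging the order of summation within each period and counting that each $\Delta_{s'}$ appears in at most $T$ of the inner sums yields the bound $\eta\sigma_2^2 C_2^2 d\,T\sum_r\Delta_r$. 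Choosing $\eta \leq 1/(2\sqrt{T\sigma_2^2 C_2^2 d})$ makes $\eta\sigma_2^2 C_2^2 d\,T \leq \frac{1}{4\eta}$, so this term is absorbed by half of the negative movement term; combined with the requirement $\eta \leq 1/(2L)$ from the descent lemma, this is precisely the stated $\eta = \Theta(\min\{1/L, 1/\sqrt{T\sigma_2^2 C_2^2 d}\})$. The $\Delta_r$ terms then cancel, leaving $\frac{\eta}{2}\sum_r\mathbb{E}\|\nabla f(x_{r-1})\|^2 \leq f(x_0)-f(x_*) + \eta R\sigma_1^2 C_1^2 d$; dividing by $\eta R/2$ and using that $x^\mathrm{out}$ is the uniformly random iterate (with $x_r^\mathrm{out}=x_r$ for GD-Routine) gives the result.

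The main obstacle is the double-sum manipulation in the last step: I must track how $\mathbb{E}\|e_r\|^2$, which itself depends on the movement sizes $\Delta_{s'}$ of earlier rounds, aggregates across a restart period. The restart is essential here—it caps the accumulation depth at $T$ rather than $R$, so the coefficient in front of $\sum_r\Delta_r$ is $\eta\sigma_2^2 C_2^2 d\,T$ and can be balanced against $1/(4\eta)$ without forcing $\eta$ to be vanishingly small; getting the counting constant right and confirming that the absorption into the movement term is clean is the delicate part.
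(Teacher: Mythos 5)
Your proposal is correct and follows essentially the same route as the paper's proof: exact (clipping-free) recursion under $C_1 \geq G$, $C_2 \geq L$, the identity $\widetilde v_r - \nabla f(x_{r-1}) = \sum_{s=r_0}^{r}\xi_s$ within each restart period with cross terms vanishing, the per-period counting bound giving the factor $T$ in front of $\sum_r \mathbb{E}\|x_r - x_{r-1}\|^2$, and absorption of that term into the negative movement term via $\eta \leq 1/(2\sqrt{T\sigma_2^2C_2^2d})$. The only cosmetic difference is that you telescope the error explicitly while the paper unrolls the same recursion one step at a time; the resulting bounds and step-size conditions are identical.
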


\begin{proof}
Averaging the inequality in Lemma \ref{app_lem: descent_lemma_DIFF2} from $r=1$ to $R$, we know that
\begin{align}
    \mathbb{E}\|\nabla f(x^\mathrm{out})\|^2 = \frac{1}{R}\sum_{r=1}^R\mathbb{E}\|\nabla f(x_{r-1})\|^2 \leq&\ \frac{2(f(x_0) - f(x_*))}{\eta R} - \frac{1}{2\eta^2}\frac{1}{R}\sum_{r=1}^R\mathbb{E}\|x_{r} - x_{r-1}\|^2 \notag\\
    &+ \frac{2}{R}\sum_{r=1}^R \mathbb{E}  \left\|\frac{1}{\eta}(x_{r-1} - x_{r}) - \nabla f(x_{r-1})\right\|^2. \label{app_ineq: descent_lemma_average_DIFF2}  
\end{align}
Here, we used the existence of optimal solution $x_*$. 

Now, we consider the last term in (\ref{app_ineq: descent_lemma_average_DIFF2}). Since $x_r - x_{r-1} = - \eta\widetilde v_r$, we have
\begin{align*}
    \left\|\frac{1}{\eta}(x_{r-1} - x_{r}) - \nabla f(x_{r-1})\right\|^2 = \|\widetilde v_r - \nabla f(x_{r-1})\|^2.
\end{align*}
First, we consider the case $(r-1) \% T = 0$. 
Since $C_1 \geq G$ is assumed, it holds that $(1/P)\sum_{p=1}^P d_r^{(p)} = \nabla f(x_{r-1})$ and thus
$\widetilde v_r = \nabla f(x_{r-1}) + \xi_{r}$. Then, we have
\begin{align*}
    \mathbb{E}\|\widetilde v_r - \nabla f(x_{r-1})\|^2 =&\ \mathbb{E}\|\xi_r\|^2 = \sigma_1^2C_1^2d.
\end{align*}
Next, we consider the case $(r-1) \% T \neq 0$. 
Since $C_2 \geq L$ is assumed, it holds that $(1/P)\sum_{p=1}^P d_r^{(p)} = \nabla f(x_{r-1}) - \nabla f(x_{r-2})$ and thus $\widetilde v_r = \nabla f(x_{r-1}) - \nabla f(x_{r-2}) + \widetilde v_{r-1} + \xi_r$. Then, observe that
\begin{align*}
    \mathbb{E}\|\widetilde v_r - \nabla f(x_{r-1})\|^2 =&\ \mathbb{E}\|\widetilde v_{r-1} - \nabla f(x_{r-2}) + \xi_r\|^2 \\
    =&\ \mathbb{E}\|\widetilde v_{r-1} - \nabla f(x_{r-2})\|^2 + \mathbb{E}\|\xi_r\|^2 \\
    =&\ \sum_{r'=T(r)}^r \mathbb{E}\|\xi_{r'}\|^2 \\
    =&\ \sigma_1^2C_1^2d + \sigma_2^2d\sum_{r'=T(r)+1}^r C_{2, r}^2,  
\end{align*}
where $T(r)$ is the integer that satisfies $r+1-T \leq T(r) < r$ and $(T(r)-1) \% T = 0$. Here, we used the fact that $\widetilde v_{T(r)} = \nabla f(x_{T(r)-1}) + \xi_{T(r)}$. 
Using this relation, we get
\begin{align*}
    \frac{1}{R}\sum_{r=1}^R\mathbb{E}\|\widetilde v_r - \nabla f(x_{r-1})\|^2 =&\ \sigma_1^2C_1^2d + \sigma_2^2C_2^2d\frac{1}{R}\sum_{r=1}^R\sum_{r=T(r)+1}^r \|x_{r-1} - x_{r-2}\|^2 \\
    \leq&\ \sigma_1^2C_1^2d + \sigma_2^2C_2^2d\frac{1}{R}\sum_{r=1}^R\sum_{r=T(r)}^{T(r)+T-1} \|x_{r-1} - x_{r-2}\|^2 \\
    \leq&\ \sigma_1^2C_1^2d + T\sigma_2^2C_2^2d\frac{1}{R}\sum_{r=1}^R \|x_{r-1} - x_{r-2}\|^2. 
\end{align*}

Suppose that $\eta \leq 1/(2\sqrt{T\sigma_2^2C_2^2d})$. Then, it holds that
\begin{align*}
    - \frac{1}{2\eta^2}\frac{1}{R}\sum_{r=1}^R\mathbb{E}\|x_{r} - x_{r-1}\|^2 
    + \frac{2}{R}\sum_{r=1}^R \mathbb{E}  \left\|\frac{1}{\eta}(x_{r-1} - x_{r}) - \nabla f(x_{r-1})\right\|^2 \leq 2\sigma_1^2C_1^2 d.
\end{align*}

Applying this to (\ref{app_ineq: descent_lemma_average_DIFF2}), we obtain
\begin{align*}
    \mathbb{E}\|\nabla f(x^\mathrm{out})\|^2 \leq&\ O\left(\frac{f(x_0) - f(x_*)}{\eta R} + \sigma_1^2C_1^2d \right).  
\end{align*}
This is the desired result. 
\end{proof}

\subsection*{Proof of Theorem \ref{theorem: utility_DIFF2_gd_routine}}
From Proposition \ref{app_prop: convergence_rate_DIFF2_gd_routine}, under $C_1 = \Theta(G)$, we know that 
\begin{align*}
    \mathbb{E}\|\nabla f(x^\mathrm{out})\|^2 \leq&\ O\left(\frac{f(x_0) - f(x_*)}{\eta R} + \sigma_1^2G^2d \right).  
\end{align*}
Suppose that $f(x_0) - f(x_*) = O(1)$. 

By Proposition \ref{prop: dp_noise_level_diff2_gd_routine}, the necessary noise levels for $(\varepsilon_\mathrm{DP}, \delta_\mathrm{DP})$-DP with respect to $D_p$ is 
$\sigma_1^2 = \widetilde \Theta(R/(Tn_\mathrm{min}^2P^2 \varepsilon_\mathrm{DP}^2))$ and $\sigma_2^2 = \widetilde \Theta(R/(n_\mathrm{min}^2P^2\varepsilon_\mathrm{DP}^2))$ respectively. Using these results and substituting the definition of $\eta$, we have
\begin{align*}
    \mathbb{E}\|\nabla f(x^\mathrm{out})\|^2 \leq&\ O\left(\frac{1}{\eta R}\right) + \widetilde O\left(\frac{RG^2d}{Tn_\mathrm{min}^2P^2\varepsilon_\mathrm{DP}^2} \right) \\
    =&\ O\left( \frac{L + \sqrt{T}L\sigma_2\sqrt{d}}{R}\right) + \widetilde O\left( \frac{RG^2d}{Tn_\mathrm{min}^2P^2\varepsilon_\mathrm{DP}^2} \right) \\
    =&\ O\left( \frac{L}{R}\right) + \widetilde O\left( \frac{\sqrt{T}L\sqrt{d}}{n_\mathrm{min}P\varepsilon_\mathrm{DP}\sqrt{R}} + \frac{RG^2d}{Tn_\mathrm{min}^2P^2\varepsilon_\mathrm{DP}^2} \right).  
\end{align*}
Assume that $n_\mathrm{min} P =  \Omega( G^2\sqrt{d}/(L\varepsilon_\mathrm{DP}))$.
We define
\begin{align*}
    T := \Theta\left( 1 \vee  \left(\frac{G^2\sqrt{d}}{Ln_\mathrm{min}P\varepsilon_\mathrm{DP}}\right)^\frac{2}{3} R\right)
\end{align*}
with $1 \leq T \leq R$.

Then, we obtain
\begin{align*}
    \mathbb{E}\|\nabla f(x^\mathrm{out})\|^2 \leq&\  O\left( \frac{L}{R}\right) + \widetilde O\left(\frac{L\sqrt{d}}{n_\mathrm{min}P\varepsilon_\mathrm{DP}\sqrt{R}} +  \frac{(LGd)^\frac{2}{3}}{(n_\mathrm{min}P\varepsilon_\mathrm{DP})^\frac{4}{3}} \right).  
\end{align*}

Finally, setting 
\begin{align*}
    R = \Theta\left( 1 \vee \frac{L}{\varepsilon_\mathrm{opt}}\right) \vee \widetilde \Theta\left( \frac{L^2d}{n_\mathrm{min}^2P^2\varepsilon_\mathrm{DP}^2\varepsilon_\mathrm{opt}^2}\right)
\end{align*}
with $\varepsilon_\mathrm{opt} := \Theta\left(\frac{(LGd)^\frac{2}{3}}{(n_\mathrm{min}P\varepsilon_\mathrm{DP})^\frac{4}{3}}\right)$ 
gives the desired result.


\section{Differential Privacy Analysis of DIFF2-BVR-L-SGD}
In this section, we provide differential privacy analysis of DIFF2 (Algorithm \ref{alg: DIFF2}) with BVR-L-SGD-Routine (Algorithm \ref{alg: BVR-L-SGD_routine}). 
We investigate the minimum noise level $\sigma_1^2$,  $\sigma_2^2$ and $\sigma_3^2$ to satisfy $(\varepsilon_\mathrm{DP}, \delta_\mathrm{DP})$-DP with respect to $D_p$ for $p \in [P]$.

Since we can utilize Lemma \ref{app_lem: l2_sensitivity_full_grad} for the $l_2$ sensitivity of $v_r$, we focus on the $l_2$-sensitivity of $v_{k, r}$ for $k \in [K]$ and $r \in [R]$. 

\begin{lemma}[$l_2$-Sensitivity of $v_{k, r}$]
\label{app_lem: l2_sensitivity_BVR-L-SGD_routine}
Given the outputs of the previous mechanisms $\{x_{r-1}, \widetilde v_{r}\}\cup\{x_{k', r-1}\}_{k' \in [k-1]}$,
the $l_2$-sensitivity of $v_{k, r}$ defined in Algorithm \ref{alg: BVR-L-SGD_routine} is
\begin{align*}
    \begin{cases}
       \frac{2C_{3, k, r}}{b} & (p = p_r), \\
       0 & (p \neq p_r).
    \end{cases}
\end{align*}
with respect to $D_p$ for $k \geq 2$. 
\end{lemma}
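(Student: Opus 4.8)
The plan is to mirror the proof of Lemma~\ref{app_lem: l2_sensitivity_full_grad}, the only genuinely new ingredient being the subsampled minibatch structure of line~8. First I would fix the conditioning outputs $\{x_{r-1}, \widetilde v_{r}\}\cup\{x_{k', r-1}\}_{k' \in [k-1]}$ and observe that, since $k \geq 2$, both iterates $x_{k-1, r-1}$ and $x_{k-2, r-1}$ are among the fixed quantities (recall $x_{0, r-1} = x_{r-1}$). Consequently the clipping radius $C_{3, k, r} = C_3\|x_{k-1, r-1} - x_{k-2, r-1}\|$ is a deterministic constant, and the update rule $x_{k', r-1} = x_{k'-1, r-1} - \eta \widetilde v_{k', r}$ of line~10 shows that $\widetilde v_{k-1, r} = (x_{k-2, r-1} - x_{k-1, r-1})/\eta$ is likewise determined by the conditioning set. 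Thus the additive term $\widetilde v_{k-1, r}$ in line~8 may be treated as fixed, and the only data-dependence of $v_{k,r}$ enters through the clipped mean of the per-sample gradient differences over the minibatch $I_{k,r}\subseteq D_{p_r}$.

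Next I would dispose of the case $p \neq p_r$. Every per-sample gradient difference $\nabla \ell(x_{k-1, r-1}, z) - \nabla \ell(x_{k-2, r-1}, z)$ entering the clipped mean involves only a record $z$ of $D_{p_r}$, since $I_{k,r}$ is drawn from $D_{p_r}$. Replacing any record of $D_p$ with $p \neq p_r$ therefore leaves $D_{p_r}$, the minibatch, and hence $v_{k, r}$ unchanged, so the $l_2$-sensitivity with respect to $D_p$ is $0$.

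For the remaining case $p = p_r$, writing $g(z) := \nabla \ell(x_{k-1, r-1}, z) - \nabla \ell(x_{k-2, r-1}, z)$ and $\hat g(z) := \min\{C_{3, k, r}/\|g(z)\|,\, 1\}\, g(z)$ for the clipped difference, Algorithm~\ref{alg: clipped_mean} gives
\begin{align*}
    v_{k, r} = \frac{1}{b}\sum_{z \in I_{k, r}} \hat g(z) + \widetilde v_{k-1, r}.
\end{align*}
Consistent with the subsampling-amplification framework of Lemma~\ref{app_lem: subsampling_amplification}, the relevant sensitivity is that of this base mechanism over the size-$b$ input $I_{k,r}$. For adjacent inputs differing in a single record, exactly one summand is replaced, say $\hat g(z)$ by $\hat g(z')$, while $\widetilde v_{k-1, r}$ stays fixed; the triangle inequality together with the clipping bound $\|\hat g(\cdot)\| \leq C_{3, k, r}$ then yields
\begin{align*}
    \|v_{k, r} - v_{k, r}'\| \leq \frac{1}{b}\left(\|\hat g(z)\| + \|\hat g(z')\|\right) \leq \frac{2 C_{3, k, r}}{b},
\end{align*}
which is precisely the claimed sensitivity.

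I expect the main (and essentially only) obstacle to be the bookkeeping of the first paragraph: carefully verifying that \emph{every} quantity appearing in $v_{k, r}$ other than the subsampled gradient differences is a deterministic function of the conditioning set, so that the sensitivity computation collapses to the one-record-replacement argument of Lemma~\ref{app_lem: l2_sensitivity_full_grad}, now with the minibatch average of $b$ terms (and a single active client $p_r$) in place of the full-batch average. Once this reduction is established, the bound is immediate from clipping.
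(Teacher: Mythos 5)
Your proof is correct and follows essentially the same route as the paper's: condition on the previous outputs so that $x_{k-1,r-1}$, $x_{k-2,r-1}$ (hence $C_{3,k,r}$) and $\widetilde v_{k-1,r}$ are fixed, note the sensitivity is trivially zero for $p \neq p_r$, and for $p = p_r$ apply the one-record-replacement argument with the clipping bound $\|\hat g(\cdot)\| \leq C_{3,k,r}$ over the size-$b$ minibatch to get $2C_{3,k,r}/b$. Your explicit verification that $\widetilde v_{k-1,r} = (x_{k-2,r-1} - x_{k-1,r-1})/\eta$ is determined by the conditioning set is a small bookkeeping step the paper states without derivation, but it does not change the argument.
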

\begin{proof}
The $l_2$-sensitivity of $v_{k, r}$ for adjacent local datasets $D_p$ and $D_p'$ can be bounded as 
\begin{align*}
    &\frac{1}{b}\left\| \min\left\{\frac{C_{3, k, r}}{\|\nabla \ell(x_{k-1, r-1}, z) - \nabla \ell(x_{k-2, r-1}, z)\|}, 1\right\} (\nabla \ell(x_{k-1, r-1}, z) - \nabla \ell(x_{k-2, r-1}, z) \right.\\
    &- \left.  \min\left\{\frac{C_{3, k, r}}{\|\nabla \ell(x_{k-1, r-1}, z') - \nabla \ell(x_{k-2, r-1}, z')\|}, 1\right\} (\nabla \ell(x_{k-1, r-1}, z') - \nabla \ell(x_{k-2, r-1}, z')\right\|
    \leq \frac{2C_{3, k, r}}{b}
\end{align*}
for some $z \neq z'$ from the definition of $ v_{k, r}^{(p)}$ since $x_{k-1, r-1}$, $x_{k-2, r-1}$ and $\widetilde v_{k-1, r}$ are fixed. When $p \neq p_r$, the sensitivity of $v_{k, r}$ with respect to $p$ is trivially zero. This finishes the proof. 
\end{proof}

Combined Lemmas \ref{app_lem: l2_sensitivity_full_grad} and \ref{app_lem: l2_sensitivity_BVR-L-SGD_routine} with Lemmas \ref{app_lem: gaussian_mechanism} and \ref{app_lem: composition}, we have the following Proposition:

\begin{proposition}[Minimum Noise Level for $(\varepsilon_\mathrm{DP}, \delta_\mathrm{DP})$-DP]
\label{app_prop: dp_noise_level_DIFF2_BVR-L-SGD_routine}
For every integer $\alpha \geq 2$, mechanism $\{x_{r}, x_{r}^\mathrm{out}, \widetilde v_{r}\}_{r \in [R]}$ defined in DIFF2-BVR-L-SGD is $(2\alpha \lceil R/T\rceil/(n_p^2P^2\sigma_1^2) + 2\alpha (R - \lceil R/T\rceil)/(n_p^2P^2\sigma_2^2) + K\lceil R/P\rceil \varepsilon'(\alpha) + \log(1/\delta_\mathrm{DP})/(\alpha-1), \delta_\mathrm{DP})$-DP, where $\varepsilon'(\alpha)$ is defined in Lemma \ref{app_lem: subsampling_amplification} with $\varepsilon(\alpha) := 2\alpha/(b^2\sigma_3^2)$ and $n_\mathrm{min} := \min\{n_p\}_{p=1}^P$. In particular, setting $\alpha := 1 + \lceil 2\log(1/\delta_\mathrm{DP})/\varepsilon_\mathrm{DP} \rceil$, 
\begin{align*}
    \sigma_1^2 = \frac{4u_1\alpha \lceil \frac{R}{T}\rceil}{n_\mathrm{min}^2P^2\varepsilon_\mathrm{DP}} \text{, } \sigma_2^2 = \frac{4u_2\alpha (R - \lceil \frac{R}{T}\rceil)}{n_\mathrm{min}^2P^2\varepsilon_\mathrm{DP}} 
    \text{ and }
    2K\left\lceil \frac{R}{P}\right\rceil\varepsilon'(\alpha) \leq \left(1 - \frac{1}{u_1} - \frac{1}{u_2}\right)\varepsilon_\mathrm{DP}
\end{align*} 
are a sufficient condition to guarantee $(\varepsilon_\mathrm{DP}, \delta_\mathrm{DP})$-DP for any constants $u_1, u_2 > 1$.

Furthermore, there exist $\sigma_1^2$, $\sigma_2^2$ and $\sigma_3^2$ which satisfy the above condition such that
\begin{align*}
    \sigma_1^2 = \widetilde \Theta\left(\frac{R}{Tn_\mathrm{min}^2 P^2\varepsilon_\mathrm{DP}^2}\right)
    \text{, }
    \sigma_2^2 = \widetilde \Theta\left(\frac{R}{n_\mathrm{min}^2 P^2\varepsilon_\mathrm{DP}^2}\right) 
    \text{ and }
    \sigma_3^2 = \widetilde \Theta\left( \frac{KR}{n_\mathrm{min}^2P\varepsilon_\mathrm{DP}^2} \vee \frac{1}{b^2\varepsilon_\mathrm{DP}}\right)
\end{align*}
under $b \leq \min\{n_\mathrm{min}/(2e\alpha),  (4n_\mathrm{min}/(\alpha\sigma_3^2))^{1/3}\}$.
\end{proposition}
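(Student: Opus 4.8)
The plan is to mirror the argument for the GD-routine case (Proposition \ref{prop: dp_noise_level_diff2_gd_routine}), adding the extra privacy cost incurred by the subsampled inner local updates $\{\widetilde v_{k,r}\}_{k\ge 2}$ of the BVR-L-SGD-Routine. First I would fix a client $p$ and establish per-mechanism RDP guarantees. For the outer updates $\widetilde v_r$, the computation is unchanged from the GD case: Lemma \ref{app_lem: l2_sensitivity_full_grad} gives $l_2$-sensitivity $2C_1/(n_pP)$ at restart rounds and $2C_{2,r}/(n_pP)$ at gradient-difference rounds, so the Gaussian mechanism (Lemma \ref{app_lem: gaussian_mechanism}) makes each $\widetilde v_r$ either $(\alpha, 2\alpha/(n_p^2P^2\sigma_1^2))$-RDP or $(\alpha, 2\alpha/(n_p^2P^2\sigma_2^2))$-RDP. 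The new ingredient is the inner updates: by Lemma \ref{app_lem: l2_sensitivity_BVR-L-SGD_routine}, $v_{k,r}$ has sensitivity $2C_{3,k,r}/b$ with respect to $D_{p_r}$ and zero for $p\neq p_r$, so the base Gaussian mechanism on $\widetilde v_{k,r}$ (before subsampling) is $(\alpha,\varepsilon(\alpha))$-RDP with $\varepsilon(\alpha)=2\alpha/(b^2\sigma_3^2)$.

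Second, I would insert the subsampling amplification. Since each minibatch $I_{k,r}$ is drawn without replacement from $D_{p_r}$, applying Lemma \ref{app_lem: subsampling_amplification} with sampling ratio $\gamma=b/n_{p_r}\le b/n_\mathrm{min}$ converts the base guarantee into the amplified RDP $\varepsilon'(\alpha)$, and the post-processing property (Lemma \ref{app_lem: post_processing}) shows that releasing $x_{k,r-1}$ (and the outer $x_r$, $x_r^\mathrm{out}$) adds no cost. Composing over all rounds via Lemma \ref{app_lem: composition}: the restart updates contribute $\lceil R/T\rceil$ terms, the gradient-difference updates contribute $R-\lceil R/T\rceil$ terms, and since client $p$ acts as $p_r$ in at most $\lceil R/P\rceil$ rounds with at most $K$ inner subsampled operations each, the inner updates contribute $K\lceil R/P\rceil\,\varepsilon'(\alpha)$. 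Summing yields the stated composed RDP bound, and Lemma \ref{app_lem: rdp_to_dp} converts it to DP by adding $\log(1/\delta_\mathrm{DP})/(\alpha-1)$, establishing the first (exact) part of the proposition.

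Third, I would choose $\alpha=1+\lceil 2\log(1/\delta_\mathrm{DP})/\varepsilon_\mathrm{DP}\rceil$ so the conversion term is at most $\varepsilon_\mathrm{DP}/2$, then split the remaining budget among the three RDP terms via the constants $u_1,u_2$. The $\sigma_1^2$ and $\sigma_2^2$ conditions follow by solving the corresponding linear inequalities exactly as in the proof of Proposition \ref{prop: dp_noise_level_diff2_gd_routine}, giving $\sigma_1^2=\widetilde\Theta(R/(Tn_\mathrm{min}^2P^2\varepsilon_\mathrm{DP}^2))$ and $\sigma_2^2=\widetilde\Theta(R/(n_\mathrm{min}^2P^2\varepsilon_\mathrm{DP}^2))$. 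For $\sigma_3^2$, I would invoke the explicit bound of Lemma \ref{app_lem: subsampling_amplification_upper_bound}, which gives $\varepsilon'(\alpha)=O(\gamma^2\alpha^2\varepsilon(2)/(\alpha-1))=O(\alpha^2/(n_\mathrm{min}^2\sigma_3^2(\alpha-1)))$ after substituting $\gamma\le b/n_\mathrm{min}$ and $\varepsilon(2)=4/(b^2\sigma_3^2)$; solving $2K\lceil R/P\rceil\varepsilon'(\alpha)\le(1-1/u_1-1/u_2)\varepsilon_\mathrm{DP}$ then yields the first branch $\sigma_3^2=\widetilde\Theta(KR/(n_\mathrm{min}^2P\varepsilon_\mathrm{DP}^2))$.

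The main obstacle is making the subsampling-amplification step rigorous, since Lemma \ref{app_lem: subsampling_amplification_upper_bound} only applies under the preconditions $\varepsilon(\alpha)\le 1/3\wedge\log(1/(2\gamma\alpha))$ and $\gamma\le\varepsilon(2)/(c\alpha)$. Verifying $\varepsilon(\alpha)=2\alpha/(b^2\sigma_3^2)\le 1/3$ forces $\sigma_3^2\ge 6\alpha/b^2=\widetilde\Theta(1/(b^2\varepsilon_\mathrm{DP}))$, which is exactly the second branch of the $\vee$, while the remaining preconditions translate into the stated constraint $b\le\min\{n_\mathrm{min}/(2e\alpha),(4n_\mathrm{min}/(\alpha\sigma_3^2))^{1/3}\}$. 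Because the admissible range of $b$ itself depends on $\sigma_3^2$, the cleanest route is to first fix the two candidate values of $\sigma_3^2$, verify the preconditions for each, and then take their maximum, so that the $\widetilde\Theta(\cdots\vee\cdots)$ form emerges directly from the interplay between the amplification bound and its domain of validity.
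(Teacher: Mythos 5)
Your proposal is correct and follows essentially the same route as the paper's proof: per-mechanism RDP accounting via Lemmas \ref{app_lem: l2_sensitivity_full_grad} and \ref{app_lem: l2_sensitivity_BVR-L-SGD_routine} combined with the Gaussian mechanism, subsampling amplification (Lemma \ref{app_lem: subsampling_amplification}) for the inner updates, composition with the $K\lceil R/P\rceil$ count coming from the round-robin choice of $p_r$, RDP-to-DP conversion at $\alpha = 1+\lceil 2\log(1/\delta_\mathrm{DP})/\varepsilon_\mathrm{DP}\rceil$, and the same budget split via $u_1, u_2$. Your treatment of $\sigma_3^2$ also matches the paper exactly, including the key observation that the first branch $\widetilde\Theta(KR/(n_\mathrm{min}^2P\varepsilon_\mathrm{DP}^2))$ comes from Lemma \ref{app_lem: subsampling_amplification_upper_bound} (where $\gamma^2\varepsilon(2) \le 4/(n_\mathrm{min}^2\sigma_3^2)$ makes $b$ cancel), the second branch $\widetilde\Theta(1/(b^2\varepsilon_\mathrm{DP}))$ is forced by the precondition $\varepsilon(\alpha)\le 1/3$, and the stated constraint on $b$ encodes the remaining preconditions $\log(1/(2\gamma\alpha))\ge 1$ and $\gamma \le \varepsilon(2)/\alpha$.
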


\begin{proof}
First, from the arguments of Lemma \ref{app_lem: l2_sensitivity_full_grad}, we know that $\widetilde v_{1, r} = \widetilde v_{r}$ and thus $x_{1, r-1}$ is $(\alpha, 2\alpha/(n_p^2P^2\sigma_1^2))$-RDP when $(r-1)\% T = 0$ and $(\alpha, 2\alpha/(n_p^2P^2\sigma_2^2))$-RDP when $(r-1)\%T \neq 0$ for every $\alpha \in (1, \infty)$ given the outputs of the previous mechanisms. 

Next, if $p = p_{r}$, from Lemmas \ref{app_lem: gaussian_mechanism} and \ref{app_lem: l2_sensitivity_BVR-L-SGD_routine}, given a subsampled minibatch, it holds that $\widetilde v_{k, r}$ is $(\alpha, \varepsilon(\alpha))$-RDP for $k \geq 2$, where $\varepsilon(\alpha) := 2\alpha/(b^2\sigma_3^2)$ for every $\alpha \in (1, \infty)$ since $\xi_{2, r} \sim \mathcal N(0, \sigma_3^2C_{3, k, r}^2 I)$. Then, applying Lemma \ref{app_lem: subsampling_amplification}, we can see that $\widetilde v_{k, r}$ is $(\alpha, \varepsilon'(\alpha))$-RDP for integer $\alpha \geq 2$, where $\varepsilon'(\alpha)$ is defined in Lemma \ref{app_lem: subsampling_amplification} given the outputs of the previous mechanisms. This implies that $\{x_{k, r-1}, \widetilde v_{k, r}\}$ is also $(\alpha, \varepsilon'(\alpha))$-RDP for $k \geq 2$ from Lemma \ref{app_lem: post_processing} given the outputs of the previous mechanisms. 

When $p\neq p_r$, $\widetilde v_{k, r}$ and thus $x_{k, r-1}$ are trivially $(\alpha, 0)$-RDP with respect to $D_p$ for $k \geq 2$.

Now, by Lemma \ref{app_lem: composition}, it holds that mechanism $\{x_{k, r-1}\}_{k \in [K]\cup\{0\}, r \in [R]}\cup\{\widetilde v_{k, r}\}_{k \in [K], r \in [R]}$ is $(\alpha, 2\alpha \lceil R/T\rceil/(n_p^2P^2\sigma_1^2) + 2\alpha (R - \lceil R/T\rceil)/(n_p^2P^2\sigma_2^2) + K\lceil R/P\rceil \varepsilon'(\alpha))$-RDP with respect to $D_p$.

Then, using Lemma \ref{app_lem: rdp_to_dp}, we can see that mechanism $\{x_{k, r-1}\}_{k \in [K]\cup\{0\}, r \in [R]}\cup\{\widetilde v_{k, r}\}_{k \in [K], r \in [R]}$ is $(2\alpha \lceil R/T\rceil/(n_p^2P^2\sigma_1^2) + 2\alpha (R - \lceil R/T\rceil)/(n_p^2P^2\sigma_2^2) + K\lceil R/P\rceil \varepsilon'(\alpha) + \log(1/\delta_\mathrm{DP})/(\alpha-1), \delta_\mathrm{DP})$-DP for every integer $\alpha \geq 2$. Note that $\{x_{r}, x_{r}^\mathrm{out}, \widetilde v_{r}\}_{r \in [R]} \subset \{x_{k, r-1}\}_{k \in [K]\cup\{0\}, r \in [R]}\cup\{\widetilde v_{k, r}\}_{k \in [K], r \in [R]}$.

Now, choosing $\alpha := 1 + \lceil 2\log(1/\delta_\mathrm{DP})/\varepsilon_\mathrm{DP} \rceil$, 
\begin{align*}
    \sigma_1^2 = \frac{4u_1\alpha \lceil \frac{R}{T}\rceil}{n_\mathrm{min}^2P^2\varepsilon_\mathrm{DP}} 
    = \widetilde \Theta\left(\frac{R}{Tn_\mathrm{min}^2 P^2\varepsilon_\mathrm{DP}^2}\right), 
\end{align*}
\begin{align*}
    \sigma_2^2 = \frac{4u_2\alpha (R - \lceil \frac{R}{T}\rceil)}{n_\mathrm{min}^2P^2\varepsilon_\mathrm{DP}} 
    = \widetilde \Theta\left(\frac{R}{n_\mathrm{min}^2 P^2\varepsilon_\mathrm{DP}^2}\right), 
\end{align*}
and
\begin{align*}        
    2K\left\lceil \frac{R}{P}\right\rceil\varepsilon'(\alpha) \leq \left(1 - \frac{1}{u_1} - \frac{1}{u_2}\right)\varepsilon_\mathrm{DP}
\end{align*}
guarantees $(\varepsilon_\mathrm{DP}, \delta_\mathrm{DP})$ of mechanism $\{x_{r}, x_{r}^\mathrm{out}, \widetilde v_{r}\}_{r \in [R]}$ for any constants $u_1, u_2 > 1$.

Finally, we derive a simple upper bound of $\sigma_3^2$.
From Lemma \ref{app_lem: subsampling_amplification_upper_bound}, we know that
\begin{align*}
    \varepsilon'(\alpha) 
    \leq&\ \frac{2}{3}\left(4 + e\right)\frac{\gamma^2\alpha^2\varepsilon(2)}{\alpha-1}
\end{align*}
under $\varepsilon(\alpha) \leq 1/3 \wedge \log(1/(2\gamma\alpha))$ and $\gamma \leq \varepsilon(2)/\alpha$\footnotemark.
\footnotetext{Here, we set $c = 1$ in Lemma \ref{app_lem: subsampling_amplification_upper_bound} for simple presentations. Using $c < 1$ is beneficial to relax the condition of $b$ at the expense of the noise level.}

Suppose that 
$$\sigma_3^2 \geq \frac{8}{3}\left(4 + e\right)\frac{\alpha^2}{\alpha-1}\frac{6K\lceil \frac{R}{P}\rceil}{n_\mathrm{min}^2\varepsilon_\mathrm{DP}}\vee \frac{6\alpha}{b^2}$$
and
$$b \leq \frac{n_\mathrm{min}}{2e\alpha} \wedge \left(\frac{4n_\mathrm{min}}{\alpha\sigma_3^2}\right)^\frac{1}{3}. $$ 


Since $2\gamma\alpha \leq 1/e$, it holds that $\log(1/(2\gamma\alpha)) \geq 1$. Thus, $\varepsilon(\alpha) = 2\alpha/(b^2\sigma_3^2) \leq 1/3 \wedge \log(1/(2\gamma\alpha))$ is satisfied. Next, note that $\gamma \leq b/n_\mathrm{min} \leq 4/(\alpha b^2\sigma_3^2) = \varepsilon(2)/\alpha$ holds. Finally, 
\begin{align*}
 \varepsilon'(\alpha) 
    \leq&\ \frac{2}{3}\left(4 + e\right)\frac{\gamma^2\alpha^2\varepsilon(2)}{\alpha-1} \\
    \leq&\ \frac{8}{3}\left(4 + e\right)\frac{\alpha^2}{\alpha-1}\frac{1}{n_\mathrm{min}^2\sigma_3^2},
\end{align*}
and thus $6K\lceil R/P\rceil\varepsilon'(\alpha) \leq \varepsilon_\mathrm{DP}$ is satisfied. 

Therefore, 
\begin{align*}
    \sigma_3^2 \geq \widetilde \Theta\left( \frac{KR}{n_\mathrm{min}^2P\varepsilon_\mathrm{DP}^2} \vee \frac{1}{b^2\varepsilon_\mathrm{DP}}\right).
\end{align*}
is sufficient to hold $6K\lceil R/P\rceil\varepsilon'(\alpha) \leq \varepsilon_\mathrm{DP}$.

This finishes all the proof.

\end{proof}

\section{Utility Analysis of DIFF2 with BVR-L-SGD Routine}
In this section, we give utility analysis of DIFF2 (Algorithm \ref{alg: DIFF2}) with BVR-L-SGD-Routine (Algorithm \ref{alg: BVR-L-SGD_routine}). 

First, we focus on analysing BVR-L-SGD-Routine (Algorithm \ref{alg: BVR-L-SGD_routine}). 

\begin{lemma}[Descent Lemma]
\label{app_lem: descent_lemma}
Suppose that Assumption \ref{assump: local_loss_smoothness} holds. Under $\eta \leq 1/(2L)$, BVR-L-SGD-Routine satisfies
\begin{align*}
    \mathbb{E}[f(x_{k, r-1})] \leq&\ 
    \mathbb{E}[f(x_{k-1, r-1})] - \frac{\eta}{2}\mathbb{E}\|\nabla f(x_{k-1, r-1})\|^2 - \frac{1}{4\eta} \mathbb{E}\|x_{k, r-1} -  x_{k-1, r-1}\|^2 \\
    &+ \eta\mathbb{E}\|\widetilde v_{k, r} - \nabla f(x_{k-1, r-1})\|^2, 
\end{align*}
for every round $r \in [R]$ and iteration $k \in [K]$. 
\end{lemma}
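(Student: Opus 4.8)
The plan is to reproduce, almost verbatim, the argument already used for the GD-routine descent lemma (Lemma \ref{app_lem: descent_lemma_DIFF2}), since the inequality in question concerns a single gradient-type step and is entirely agnostic to how the estimator $\widetilde v_{k, r}$ is actually built. The only structural input I need is that $f$ itself is $L$-smooth; this is inherited from Assumption \ref{assump: local_loss_smoothness}, because $f = (1/P)\sum_p f_p$ is an average of averages of $L$-smooth losses $\ell(\cdot, z)$. The relevant update from Algorithm \ref{alg: BVR-L-SGD_routine} (line 10) is $x_{k, r-1} = x_{k-1, r-1} - \eta \widetilde v_{k, r}$, so that $x_{k, r-1} - x_{k-1, r-1} = -\eta \widetilde v_{k, r}$, which is exactly the GD-routine step with the index relabelling $(r-1, r) \mapsto (k-1, r-1),(k, r-1)$ and $\widetilde v_r \mapsto \widetilde v_{k, r}$.

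First I would apply the standard smoothness upper bound
\[
 f(x_{k, r-1}) \leq f(x_{k-1, r-1}) + \langle \nabla f(x_{k-1, r-1}), x_{k, r-1} - x_{k-1, r-1}\rangle + \tfrac{L}{2}\|x_{k, r-1} - x_{k-1, r-1}\|^2.
\]
Next I would rewrite the inner product through the polarization identity $\langle a, b\rangle = \tfrac12(\|a+b\|^2 - \|a\|^2 - \|b\|^2)$ with $a = \eta\nabla f(x_{k-1, r-1})$ and $b = x_{k, r-1} - x_{k-1, r-1}$, after factoring out $1/\eta$. This produces $-\tfrac{\eta}{2}\|\nabla f(x_{k-1, r-1})\|^2$, the term $-\tfrac{1}{2\eta}\|x_{k, r-1} - x_{k-1, r-1}\|^2$, and $\tfrac{\eta}{2}\|\tfrac{1}{\eta}(x_{k-1, r-1} - x_{k, r-1}) - \nabla f(x_{k-1, r-1})\|^2$; since $\tfrac{1}{\eta}(x_{k-1, r-1} - x_{k, r-1}) = \widetilde v_{k, r}$, the last term is $\tfrac{\eta}{2}\|\widetilde v_{k, r} - \nabla f(x_{k-1, r-1})\|^2$. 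I would then merge the two step-quadratic terms: combining $\tfrac{L}{2}$ with $-\tfrac{1}{2\eta}$ gives the coefficient $-(\tfrac{1}{2\eta} - \tfrac{L}{2})$, and $\eta \leq 1/(2L)$ forces $\tfrac{1}{2\eta} - \tfrac{L}{2} \geq \tfrac{1}{4\eta}$, so this term is at most $-\tfrac{1}{4\eta}\|x_{k, r-1} - x_{k-1, r-1}\|^2$. Loosening the residual coefficient $\tfrac{\eta}{2} \leq \eta$ to match the stated form and taking expectation over all the randomness (the minibatch draws $I_{k, r}$ and the Gaussian perturbations $\xi_{k, r}$) yields the claimed inequality.

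There is essentially no genuine obstacle here: the descent lemma is a deterministic energy decomposition that merely isolates the bias-plus-variance term $\mathbb{E}\|\widetilde v_{k, r} - \nabla f(x_{k-1, r-1})\|^2$, and all the real difficulty of the BVR-L-SGD analysis is deferred to the subsequent lemmas that must bound this quantity under the combined effects of local subsampling noise, the bias-variance-reduced gradient-difference construction, the Hessian-heterogeneity Assumption \ref{assump: similarity}, and the accumulated DP noise. The sole point worth double-checking is that the lemma is a per-iteration statement conditioned only on the current iterate, so taking the full expectation is immediate and does not require any unbiasedness of $\widetilde v_{k, r}$ — which is important precisely because $\widetilde v_{k, r}$ is biased in this framework.
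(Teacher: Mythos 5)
Your proposal is correct and follows essentially the same route as the paper's proof: apply $L$-smoothness of $f$ (inherited from Assumption \ref{assump: local_loss_smoothness}), expand the inner product via the polarization identity so that $\frac{1}{\eta}(x_{k-1,r-1}-x_{k,r-1}) = \widetilde v_{k,r}$ appears in the residual term, absorb $\frac{L}{2}$ into $-\frac{1}{2\eta}$ using $\eta \leq 1/(2L)$ to get the $-\frac{1}{4\eta}$ coefficient, and take full expectation without needing unbiasedness of $\widetilde v_{k,r}$. You even correctly note the loosening of $\frac{\eta}{2}$ to $\eta$ on the residual term, which is exactly the (implicit) step by which the paper's proof, ending with coefficient $\frac{\eta}{2}$, yields the lemma as stated.
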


\begin{proof}
For simple presentations, $x_k$, $\xi_k$, $p$, $v_k$ and $\widetilde v_k$ denote $x_{k, r-1}$, $\xi_{k, r}$, $p_r$ $v_{k, r}$ and $\widetilde v_{k, r}$ respectively in this proof. 

Let $\eta$ be $\eta$ for $k \in \{2, \ldots, K\}$ and be $\eta$ for $k=1$.

From $L$-smoothness of $f$, we have
\begin{align*}
    f(x_k) \leq&\ f(x_{k-1}) + \langle \nabla f(x_{k-1}), x_k - x_{k-1}\rangle + \frac{L}{2}\|x_k - x_{k-1}\|^2 \\
    =&\ f(x_{k-1}) +  \frac{1}{\eta}\left(- \frac{\eta^2}{2}\|\nabla f(x_{k-1})\|^2 - \frac{1}{2}\|x_{k} - x_{k-1}\|^2 + \frac{1}{2}\|x_k - x_{k-1} + \eta \nabla f(x_{k-1})\|^2\right)\\
    &+ \frac{L}{2}\|x_k - x_{k-1}\|^2 \\
    =&\ f(x_{k-1}) - \frac{\eta}{2}\|\nabla f(x_{k-1})\|^2 - \left(\frac{1}{2\eta} - \frac{L}{2}\right)\|x_{k} -  x_{k-1}\|^2 + \frac{\eta}{2}\|\widetilde v_k - \eta\nabla f(x_{k-1})\|^2. 
\end{align*}
Taking expectation on both sides with respect to the randomness at iteration $k$, 
\begin{align*}
    \mathbb{E}[f(x_k)] \leq&\ f(x_{k-1}) - \frac{\eta}{2}\|\nabla f(x_{k-1})\|^2 - \left(\frac{1}{2\eta} - \frac{L}{2}\right)\mathbb{E}\|x_{k} -  x_{k-1}\|^2 + \frac{\eta}{2}\mathbb{E}\|\widetilde v_k - \eta\nabla f(x_{k-1})\|^2. 
\end{align*}
Taking expectation with respect to all the history of the randomness, under $\eta \leq 1/(2L)$ we get

\begin{align*}
    \mathbb{E}[f(x_k)] \leq&\ 
    \mathbb{E}[f(x_{k-1})] - \frac{\eta}{2}\mathbb{E}\|\nabla f(x_{k-1})\|^2 - \frac{1}{4\eta} \mathbb{E}\|x_{k} -  x_{k-1}\|^2 + \frac{\eta}{2}\mathbb{E}\|\widetilde v_k - \nabla f(x_{k-1})\|^2.
\end{align*}
This is the desired result. 
\end{proof}

\begin{proposition}
\label{app_prop: local_opt}
Suppose that assumptions \ref{assump: similarity} and \ref{assump: local_loss_smoothness} hold. Then, if $C_3 \geq L$ with $C_3 = \Theta(L)$ and $\eta \leq \min\{1/(2L),  1/(2\sqrt{2e(K^2\eta^2 + KL^2/b+ KC_3^2\sigma_3^2d)})\} = \Theta(\min\{1/L, 1/(K\zeta), \sqrt{b}/(\sqrt{K}L), 1/(\sqrt{K}L\sigma_3\sqrt{d})\})$, BVR-L-SGD-Routine (Algorithm \ref{alg: BVR-L-SGD_routine}) satisfies
\begin{align*}
    \mathbb{E}\|\nabla f(x_{r}^\mathrm{out})\|^2 \leq \frac{2(\mathbb{E}[f(x_{r-1})] - \mathbb{E}[f(x_{r})])}{\eta K} - \frac{1}{4\eta^2}\frac{1}{K}\sum_{k=1}^K \mathbb{E}\|x_{k, r-1} - x_{k-1, r}\|^2 + 2e\mathbb{E}\|\widetilde v_r - \nabla f(x_{r-1})\|^2.
\end{align*}
for every round $r \in [R]$.
\end{proposition}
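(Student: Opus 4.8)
The plan is to telescope the per-iteration Descent Lemma (Lemma~\ref{app_lem: descent_lemma}) over the local steps $k=1,\dots,K$ and then control the accumulated gradient-estimation error $\mathcal{E}_k := \widetilde v_{k,r} - \nabla f(x_{k-1,r-1})$. Writing $x_{0,r-1}=x_{r-1}$ and $x_{K,r-1}=x_r$, summing Lemma~\ref{app_lem: descent_lemma}, dividing by $\eta K/2$, and using that $x_r^\mathrm{out}$ is drawn uniformly so that $\mathbb{E}\|\nabla f(x_r^\mathrm{out})\|^2=\frac1K\sum_{k=1}^K\mathbb{E}\|\nabla f(x_{k-1,r-1})\|^2$, I obtain
\[
\frac1K\sum_{k=1}^K\mathbb{E}\|\nabla f(x_{k-1,r-1})\|^2 \le \frac{2(\mathbb{E}[f(x_{r-1})]-\mathbb{E}[f(x_r)])}{\eta K} - \frac{1}{2\eta^2K}\sum_{k=1}^K\mathbb{E}\|x_{k,r-1}-x_{k-1,r-1}\|^2 + \frac2K\sum_{k=1}^K\mathbb{E}\|\mathcal{E}_k\|^2 .
\]
The whole task then reduces to bounding $\frac2K\sum_k\mathbb{E}\|\mathcal{E}_k\|^2$ by $2e\,\mathbb{E}\|\mathcal{E}_1\|^2$ (note $\mathcal{E}_1=\widetilde v_r-\nabla f(x_{r-1})$) plus a term small enough to be swallowed by one half of the negative movement budget, leaving the stated $-\tfrac{1}{4\eta^2K}\sum_k\mathbb{E}\|x_{k,r-1}-x_{k-1,r-1}\|^2$.

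For the error, I would first note that $C_3\ge L$ deactivates the clipping in line~8: each per-sample gradient difference has norm at most $L\|x_{k-1,r-1}-x_{k-2,r-1}\| \le C_{3,k,r}$, so the clipped mean equals the plain minibatch mean $\bar g_k$, and the estimator obeys the exact recursion $\mathcal{E}_k = \mathcal{E}_{k-1}+\xi_{k,r}+\delta_k$ with $\delta_k:=\bar g_k-(\nabla f(x_{k-1,r-1})-\nabla f(x_{k-2,r-1}))$. Conditioning on the history $\mathcal F_{k-1}$, the DP noise is independent and mean zero with $\mathbb{E}\|\xi_{k,r}\|^2=\sigma_3^2C_3^2 d\,\|x_{k-1,r-1}-x_{k-2,r-1}\|^2$; the conditional bias $\mathbb{E}[\delta_k\mid\mathcal F_{k-1}]=(\nabla f_{p_r}-\nabla f)(x_{k-1,r-1})-(\nabla f_{p_r}-\nabla f)(x_{k-2,r-1})$ is bounded by $\zeta\|x_{k-1,r-1}-x_{k-2,r-1}\|$ through the Hessian-heterogeneity Assumption~\ref{assump: similarity} (fundamental theorem of calculus applied to $\nabla f_{p_r}-\nabla f$); and the conditional sampling variance of $\delta_k$ is at most $(L^2/b)\|x_{k-1,r-1}-x_{k-2,r-1}\|^2$ via $L$-smoothness of $\ell$ and the without-replacement variance bound.

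Splitting the cross term by Young's inequality, $2\langle \mathcal{E}_{k-1},\mathbb{E}[\delta_k\mid\mathcal F_{k-1}]\rangle\le \tfrac1K\|\mathcal{E}_{k-1}\|^2+K\|\mathbb{E}[\delta_k\mid\mathcal F_{k-1}]\|^2$, yields the one-step recursion
\[
\mathbb{E}[\|\mathcal{E}_k\|^2\mid\mathcal F_{k-1}] \le \Bigl(1+\tfrac1K\Bigr)\|\mathcal{E}_{k-1}\|^2 + A\,\|x_{k-1,r-1}-x_{k-2,r-1}\|^2 , \qquad A=O\!\bigl(K\zeta^2+L^2/b+C_3^2\sigma_3^2 d\bigr).
\]
Unrolling from $\mathcal{E}_1$ and using $(1+1/K)^K\le e$ gives $\mathbb{E}\|\mathcal{E}_k\|^2\le e\,\mathbb{E}\|\mathcal{E}_1\|^2+eA\sum_{j=2}^k\mathbb{E}\|x_{j-1,r-1}-x_{j-2,r-1}\|^2$; summing over $k$ and exchanging the order of summation produces $\frac2K\sum_k\mathbb{E}\|\mathcal{E}_k\|^2\le 2e\,\mathbb{E}\|\mathcal{E}_1\|^2 + 2eAK\cdot\frac1K\sum_k\mathbb{E}\|x_{k,r-1}-x_{k-1,r-1}\|^2$. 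The step-size condition $\eta\le 1/(2\sqrt{2eAK})$, which is exactly the second branch of the stated bound on $\eta$ (with $C_3=\Theta(L)$, and the $K^2$-coefficient carrying $\zeta^2$, matching the $1/(K\zeta)$ term of the simplified step size), guarantees $2eAK\le 1/(4\eta^2)$, so the residual movement term is absorbed and the claim follows.

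The main obstacle is the second and third paragraphs: propagating three qualitatively different error sources — accumulating DP noise, minibatch sampling variance, and the Hessian-heterogeneity bias — through the recursion, and in particular choosing the Young parameter $1/K$ so that the geometric growth collapses to the universal constant $e$ while keeping $A$ of the intended order $O(K\zeta^2+L^2/b+C_3^2\sigma_3^2 d)$. A secondary but delicate point is the bookkeeping when exchanging the order of summation, which must relate the $\|x_{j-1,r-1}-x_{j-2,r-1}\|^2$ increments appearing in the error recursion back to the movement budget $\sum_k\|x_{k,r-1}-x_{k-1,r-1}\|^2$ produced by the descent lemma.
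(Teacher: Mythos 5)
Your proposal is correct and follows essentially the same route as the paper's proof: telescope Lemma~\ref{app_lem: descent_lemma} over $k\in[K]$, use $C_3\geq L$ to deactivate the clipping, control $\mathbb{E}\|\widetilde v_{k,r}-\nabla f(x_{k-1,r-1})\|^2$ by the same conditional bias--variance split (DP noise $\sigma_3^2C_3^2d\,\|x_{k-1,r-1}-x_{k-2,r-1}\|^2$, sampling variance $L^2/b$, heterogeneity bias $\zeta$ via the integral representation of $\nabla f_{p_r}-\nabla f$), the same Young parameter $1/K$ with $(1+1/K)^{k-1}\leq e$, and the same absorption of the accumulated movement term through the step-size condition $\eta\leq 1/(2\sqrt{2e(K^2\zeta^2+KL^2/b+KC_3^2\sigma_3^2d)})$. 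The only differences are presentational — you phrase the error propagation through the exact recursion $\mathcal{E}_k=\mathcal{E}_{k-1}+\xi_{k,r}+\delta_k$ where the paper expands the squared error directly, and you correctly read the statement's $K^2\eta^2$ typo as $K^2\zeta^2$.
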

\begin{proof}
For simple presentations, $x_k$, $\xi_k$, $p$, $v_k$ and   $\widetilde v_k$denote $x_{k, r-1}$, $\xi_{k, r}$, $p_r$ $v_{k, r}$ and $\widetilde v_{k, r}$ respectively in this proof. 

Summing up the inequality in Lemma \ref{app_lem: descent_lemma} from $k=1$ to $K$ yields
\begin{align}
    \mathbb{E}\|\nabla f(x_{\hat k-1})\|^2 = \frac{1}{K}\sum_{k=1}^K \mathbb{E}\|\nabla f(x_{k-1})\|^2
    \leq&\ 
    \frac{2(\mathbb{E}[f(x_{0})] -  \mathbb{E}[f(x_K)])}{\eta K}  -  \frac{1}{2\eta^2}\frac{1}{K}\sum_{k=1}^K \mathbb{E}\|x_k - x_{k-1}\|^2 \notag \\
    &+ \frac{2}{K}\sum_{k=1}^K\mathbb{E}\|\widetilde v_k - \nabla f(x_{k-1})\|^2. \label{app_ineq: decent_lemma_sum}
\end{align}

Now, we bound $\mathbb{E}\|\widetilde v_k - \nabla f(x_{k-1})\|^2$ for $k \geq 2$. Since we assume $C_2 \geq L$, it always holds  $C_2\|x_{k-1} - x_{k-2}\|/\|\nabla \ell(x_{k-1}, z_l) - \nabla \ell(x_{k-2}, z_l)\| \geq 1$ under $L$-smoothness of $\ell$, and thus we have
\begin{align*}
    v_k = g_k(x_{k-1}) - g_k(x_{k-2}) + \widetilde v_{k-1},
\end{align*}
where $g_k(x_{k-1}) := \frac{1}{b}\sum_{z \in I_{k, r}} \nabla \ell(x_{k-1}, z)$ and $g_k(x_{k-2}) := \frac{1}{b}\sum_{z \in I_{k, r}} \nabla \ell(x_{k-2}, z)$.

Hence, we have
\begin{align*}
    &\mathbb{E}\|\widetilde v_k - \nabla f(x_{k-1})\|^2 \\
    =&\mathbb{E}\|v_k - \nabla f(x_{k-1})\|^2 + \mathbb{E}\|\xi_k\|^2 \\
    =&\ \mathbb{E}\|g_k(x_{k-1}) - g_k(x_{k-2}) + \widetilde v_{k-1} - \nabla f(x_{k-1})\|^2 + \mathbb{E}\|\xi_k\|^2\\
    =&\ \mathbb{E}\|\nabla f_{p}(x_{k-1}) - \nabla f_{p}(x_{k-2}) + \widetilde v_{k-1} - \nabla f(x_{k-1})\|^2 \\
    &+ \mathbb{E}\|g_k(x_{k-1}) - g_k(x_{k-2}) - \nabla f_{p}(x_{k-1}) + \nabla f_{p}(x_{k-2})\|^2  + \mathbb{E}\|\xi_k\|^2\\
    \leq&\ (1+1/K)\mathbb{E}\|\widetilde v_{k-1} - \nabla f(x_{k-2})\|^2 \\
    &+ (1+K)\mathbb{E}\|\nabla f_{p}(x_{k-1}) - \nabla f_{p}(x_{k-2}) - \nabla f(x_{k-1}) + \nabla f(x_{k-2})\|^2  \\
    &+ \mathbb{E}\|g_k(x_{k-1}) - g_k(x_{k-2}) - \nabla f_{p}(x_{k-1}) + \nabla f_{p}(x_{k-2})\|^2 \\
    &+ \mathbb{E}\|\xi_k\|^2.
\end{align*}

The second term can be bounded as follows:
\begin{align*}
    &\mathbb{E}\|\nabla f_{p}(x_{k-1}) - \nabla f_{p}(x_{k-2}) - \nabla f(x_{k-1}) + \nabla f(x_{k-2})\|^2 \\
    =&\ \mathbb{E}\|\nabla f_{p}(x_{k-1}) - \nabla f_{p}(x_{k-2}) - \nabla f(x_{k-1}) + \nabla f(x_{k-2})\|^2 \\
    =&\ \mathbb{E}\left\|\int_0^1 \nabla^2 f_{p}((1-\theta)x_{k-1} + \theta x_{k-2})(x_{k-1} - x_{k-2})d\theta - \int_0^1 \nabla^2 f((1-\theta)x_{k-1} + \theta x_{k-2})(x_{k-1} - x_{k-2})d\theta \right\|^2 \\
    \leq&\ \mathbb{E}\int_0^1\left\|(\nabla^2 f_{p}((1-\theta)x_{k-1} + \theta x_{k-2}) - \nabla^2 f((1-\theta)x_{k-1} + \theta x_{k-2}))(x_{k-1} - x_{k-2}) \right\|^2 d\theta \\
    \leq&\ \zeta^2 \mathbb{E}\|x_{k-1}-x_{k-2}\|^2. 
\end{align*}
Here, for the second inequality we used the mean value theorem. The last inequality holds from Assumption \ref{assump: similarity}. 

The third term can be bounded as follows:
\begin{align*}
    &\mathbb{E}\|g_k(x_{k-1}) - g_k(x_{k-2}) - \nabla f_{p}(x_{k-1}) + \nabla f_{p}(x_{k-2})\|^2\\
    =&\ \frac{1}{b}\mathbb{E}[\mathbb{E}_{z \sim D_{p}}\|\nabla \ell(x_{k-1}, z) - \nabla \ell(x_{k-2}, z) - \nabla f_{p}(x_{k-1}) + \nabla f_{p}(x_{k-2})\|^2] \\
    \leq&\ \frac{L^2}{b}\mathbb{E}\|x_{k-1} - x_{k-2}\|^2.
\end{align*}

Since $\xi_k \sim \mathcal N(0, \sigma_3^2 C_3^2\|x_{k-1} - x_{k-2}\|^2 I)$, we have $\mathbb{E}\|\xi_k\|^2 = C_3^2\sigma_3^2 \|x_{k-1} - x_{k-2}\|^2 d$ conditioned on iteration $k-1$.

Hence, we get
\begin{align*}
    \mathbb{E}\|\widetilde v_k - \nabla f(x_{k-1})\|^2 \leq&\ (1+1/K)\mathbb{E}\|\widetilde v_{k-1} - \nabla f(x_{k-2})\|^2 + \left((K+1)\zeta^2 + \frac{L^2}{b}\right)\mathbb{E}\|x_{k-1} - x_{k-2}\|^2 \\
    &+ C_3^2\sigma_3^2d\mathbb{E}\|x_{k-1}-x_{k-2}\|^2. 
\end{align*}

Recursively using this inequality results in 
\begin{align*}
    \mathbb{E}\|\widetilde v_k - \nabla f(x_{k-1})\|^2 \leq&\  (1+1/K)^{k-1}\mathbb{E}\|\widetilde v_{1} - \nabla f(x_{0})\|^2 \\
    &+ \left(K\zeta^2 + \frac{L^2}{b} + C_3^2\sigma_3^2d\right)\sum_{k'=2}^{k-1} (1+1/(K-1))^{k-1-k'}\mathbb{E}\|x_{k'-1} - x_{k'-2}\|^2 \\
    \leq&\ e\mathbb{E}\|\widetilde v_{1} - \nabla f(x_{0})\|^2 + e\left(K\zeta^2 + \frac{L^2}{b} + C_3^2\sigma_3^2d\right)\sum_{k=2}^{K}\mathbb{E}\|x_{k-1} - x_{k-2}\|^2. 
\end{align*}

Combined this inequality with (\ref{app_ineq: decent_lemma_sum}) results in

\begin{align*}
    \mathbb{E}\|\nabla f(x_{\hat k-1})\|^2 = \frac{1}{K}\sum_{k=1}^K \mathbb{E}\|\nabla f(x_{k-1})\|^2
    \leq&\ 
    \frac{2(\mathbb{E}[f(x_{0})] -  \mathbb{E}[f(x_K)])}{\eta K}  -  \frac{1}{2\eta^2}\frac{1}{K}\sum_{k=1}^K \mathbb{E}\|x_k - x_{k-1}\|^2  \\
    &+ 2e\mathbb{E}\|\widetilde v_1 - \nabla f(x_0)\|^2 \\
    &+  2e\left(K^2\zeta^2 + \frac{KL^2}{b} + KC_3^2\sigma_3^2d \right)\frac{1}{K}\sum_{k=1}^K\mathbb{E}\|x_k - x_{k-1}\|^2. 
\end{align*}

Suppose that $\eta \leq 1/(2\sqrt{2e(K^2\zeta^2 + KL^2/b + K\sigma_3^2C_3^2d)})$, which implies
\begin{align*}
    \frac{1}{4\eta^2} \geq 2e\left(K^2\zeta_2^2 + \frac{KL^2}{b} + KC_2^2\sigma_2^2d\right).
\end{align*}
Then, we have
\begin{align*}
    \mathbb{E}\|\nabla f(x_{\hat k-1})\|^2 \leq \frac{2(\mathbb{E}[f(x_0)] - \mathbb{E}[f(x_K)])}{\eta K} - \frac{1}{4\eta^2}\frac{1}{K}\sum_{k=1}^K \mathbb{E}\|x_k - x_{k-1}\|^2 +  2e\mathbb{E}\|\widetilde v_1 - \nabla f(x_0)\|^2.
\end{align*}
This is the desired result. 
\end{proof}

\begin{proposition}
\label{app_prop: convergence_rate}
Suppose that Assumptions \ref{assump: local_loss_smoothness}, \ref{assump: optimal_sol}, \ref{assump: local_loss_gradient_boundedness} and \ref{assump: similarity} hold. If $C_1 \geq G$ with $C_1 = \Theta(G)$, $C_2, C_3 \geq L$ with $C_2, C_3 = \Theta(L)$ and we appropriately choose $\eta = \Theta(\min\{1/L, 1/(K\zeta), \sqrt{b}/(\sqrt{K}L), 1/(K\sqrt{T}L\sigma_2\sqrt{d}), 1/(\sqrt{K}L\sigma_3 \sqrt{d}\})$, DIFF2-BVR-L-SGD satisfies
\begin{align*}
    \mathbb{E}\|\nabla f(x^\mathrm{out})\|^2 
    \leq&\  O\left(\frac{f(x_{0} - f(x_*)}{\eta KR} +  \sigma_1^2C_1^2d\right).
\end{align*}

\end{proposition}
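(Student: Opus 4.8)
The plan is to bolt the per-round guarantee of Proposition~\ref{app_prop: local_opt} onto the outer-loop bias--variance bookkeeping already performed for DIFF2-GD in Proposition~\ref{app_prop: convergence_rate_DIFF2_gd_routine}, since the construction of the \emph{outer} estimator $\widetilde v_r=\widetilde v_{1,r}$ (lines 2--16 of Algorithm~\ref{alg: DIFF2}) is identical in both algorithms and only the sub-routine in line 17 changes. First I would average the inequality of Proposition~\ref{app_prop: local_opt} over $r=1,\ldots,R$. Telescoping the descent terms through $\mathbb{E}[f(x_0)]-\mathbb{E}[f(x_R)]\leq f(x_0)-f(x_*)$ and using that $x^\mathrm{out}=x_{\hat r-1}^\mathrm{out}$ with $\hat r\sim\mathrm{Unif}[R]$ so that $\mathbb{E}\|\nabla f(x^\mathrm{out})\|^2=\frac{1}{R}\sum_{r}\mathbb{E}\|\nabla f(x_r^\mathrm{out})\|^2$, this gives
\begin{align*}
\mathbb{E}\|\nabla f(x^\mathrm{out})\|^2 \leq&\ \frac{2(f(x_0)-f(x_*))}{\eta K R} - \frac{1}{4\eta^2}\frac{1}{KR}\sum_{r=1}^R\sum_{k=1}^K \mathbb{E}\|x_{k,r-1}-x_{k-1,r-1}\|^2 \\
&+ \frac{2e}{R}\sum_{r=1}^R \mathbb{E}\|\widetilde v_r - \nabla f(x_{r-1})\|^2.
\end{align*}
The remaining task is to bound the last (outer-variance) term and then absorb it into the negative movement term.

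Next I would reuse the telescoping computation of Proposition~\ref{app_prop: convergence_rate_DIFF2_gd_routine} essentially verbatim: because $C_1\geq G$ and $C_2\geq L$ keep the outer clippings inactive, within a restart block $\widetilde v_r-\nabla f(x_{r-1})=(\widetilde v_{r-1}-\nabla f(x_{r-2}))+\xi_r$ and this resets to $\xi_r$ at each restart, so by independence and mean-zeroness of the $\xi_r$,
\begin{align*}
\frac{1}{R}\sum_{r=1}^R \mathbb{E}\|\widetilde v_r - \nabla f(x_{r-1})\|^2 \leq \sigma_1^2 C_1^2 d + T\sigma_2^2 C_2^2 d\,\frac{1}{R}\sum_{r=1}^R \|x_{r-1}-x_{r-2}\|^2,
\end{align*}
where the factor $T$ arises because each squared outer movement is double-counted in at most $T$ of the accumulating blocks. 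The one genuinely new ingredient relative to DIFF2-GD is that an outer step is now a sum of $K$ inner steps, $x_r-x_{r-1}=\sum_{k=1}^K(x_{k,r-1}-x_{k-1,r-1})$, so by Cauchy--Schwarz $\|x_{r-1}-x_{r-2}\|^2\leq K\sum_{k=1}^K\|x_{k,r-2}-x_{k-1,r-2}\|^2$, and re-indexing the round yields an extra factor $K$ multiplying the inner movements.

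Combining the two displays, the outer-variance contribution is at most $2e\sigma_1^2C_1^2 d$ plus $2e\,TK\,\sigma_2^2 C_2^2 d$ times the averaged inner movement $\frac{1}{KR}\sum_{r,k}\mathbb{E}\|x_{k,r-1}-x_{k-1,r-1}\|^2$. This is cancelled by the negative term already present provided $\tfrac{1}{4\eta^2}\geq 2e\,TK^2\sigma_2^2 C_2^2 d$, i.e. $\eta=O(1/(K\sqrt{T}L\sigma_2\sqrt d))$ using $C_2=\Theta(L)$; intersecting this with the inner-loop step-size constraints $\eta=O(\min\{1/L,\,1/(K\zeta),\,\sqrt b/(\sqrt K L),\,1/(\sqrt K L\sigma_3\sqrt d)\})$ from Proposition~\ref{app_prop: local_opt} reproduces exactly the stated $\eta=\Theta(\min\{\cdots\})$. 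After cancellation only the constant $2e\sigma_1^2C_1^2 d=O(\sigma_1^2C_1^2 d)$ and the descent term $2(f(x_0)-f(x_*))/(\eta KR)$ survive, which is the claimed bound.

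The main obstacle I expect is the bookkeeping of this absorption step rather than any isolated inequality: one must simultaneously express the DP variance of $\widetilde v_r$ in terms of outer iterate movements through the restart telescoping (factor $T$), translate those outer movements into inner movements via Cauchy--Schwarz (factor $K$), and then verify that the resulting coefficient $TK^2\sigma_2^2C_2^2 d$ is dominated by $1/(4\eta^2)$ for a step size that remains compatible with every constraint imposed by Proposition~\ref{app_prop: local_opt}. The entire effect of replacing the single GD step by the $K$-step BVR-L-SGD routine on the outer convergence is thus concentrated in that one extra factor $K$ tightening the admissible step size.
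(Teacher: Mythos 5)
Your proposal matches the paper's proof essentially step for step: averaging Proposition~\ref{app_prop: local_opt} over rounds, reusing the restart-telescoping bound from Proposition~\ref{app_prop: convergence_rate_DIFF2_gd_routine} to get $\frac{1}{R}\sum_{r=1}^R\mathbb{E}\|\widetilde v_r-\nabla f(x_{r-1})\|^2 \leq \sigma_1^2C_1^2d + T\sigma_2^2C_2^2d\,\frac{1}{R}\sum_{r=1}^R\|x_{r-1}-x_{r-2}\|^2$, converting outer to inner movements via $\|x_r-x_{r-1}\|^2\leq K\sum_{k=1}^K\|x_{k,r-1}-x_{k-1,r-1}\|^2$, and absorbing into the negative term under $\frac{1}{4\eta^2}\geq 2eK^2T\sigma_2^2C_2^2d$, which is exactly the paper's argument. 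The only blemish is a transient slip where you quote the coefficient of the averaged inner movement as $2e\,TK\,\sigma_2^2C_2^2d$ instead of $2e\,TK^2\,\sigma_2^2C_2^2d$, but since your stated absorption condition and the resulting step-size constraint $\eta=O(1/(K\sqrt{T}L\sigma_2\sqrt{d}))$ carry the correct $TK^2$ factor, this is a typo rather than a gap.
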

\begin{proof}
From Proposition \ref{app_prop: local_opt}, we know that
\begin{align*}
    \mathbb{E}\|\nabla f(x_{r}^\mathrm{out})\|^2 \leq \frac{2(\mathbb{E}[f(x_{r-1})] - \mathbb{E}[f(x_{r})])}{\eta K} - \frac{1}{4\eta^2}\frac{1}{K}\sum_{k=1}^K \mathbb{E}\|x_{k, r-1} - x_{k-1, r}\|^2 + 2e\mathbb{E}\|\widetilde v_r - \nabla f(x_{r-1})\|^2.
\end{align*}

Averaging this inequality from $r=1$ to $R$, we get
\begin{align*}
    \mathbb{E}\|\nabla f(x^\mathrm{out})\|^2 = \frac{1}{R}\sum_{r=1}^R \mathbb{E}\|\nabla f(x_{\hat r-1})\|^2 \leq&\ \frac{2(f(x_0) - \mathbb{E}[f(x_*)])}{\eta K R} - \frac{1}{4\eta^2}\frac{1}{KR}\sum_{r=1}^R\sum_{k=1}^K \mathbb{E}\|x_{k, r-1} - x_{k-1, r}\|^2\\
    &+ \frac{2e}{R}\sum_{r=1}^R\mathbb{E}\|\widetilde v_r - \nabla f(x_{r-1})\|^2.
\end{align*}

We bound the last term. Similar to the arguments in the proof of 
Proposition \ref{app_prop: convergence_rate_DIFF2_gd_routine}, we can show that
\begin{align*}
    \frac{1}{R}\sum_{r=1}^R\left\|\widetilde v_r - \nabla f(x_{r-1})\right\|^2 
    \leq&\ \sigma_1^2C_1^2d + T\sigma_2^2C_2^2d\frac{1}{R}\sum_{r=1}^R \|x_{r-1} - x_{r-2}\|^2. 
\end{align*}

Moreover, since $\|x_r - x_{r-1}\|^2 \leq K\sum_{k=1}^K \|x_{k, r-1} - x_{k-1, r-1}\|^2$, we have

\begin{align*}
    \mathbb{E}\|\nabla f(x^\mathrm{out})\|^2 = \frac{1}{R}\sum_{r=1}^R \mathbb{E}\|\nabla f(x_{\hat r-1})\|^2 \leq&\ \frac{2(f(x_0) - f(x_*))}{\eta K R} - \frac{1}{4\eta^2}\frac{1}{KR}\sum_{r=1}^R\sum_{k=1}^K \mathbb{E}\|x_{k, r-1} - x_{k-1, r}\|^2\\
    &+ 2e\left(\sigma_1^2C_1^2d + K^2T\sigma_2^2C_2^2d\frac{1}{KR}\sum_{r=1}^R\sum_{k=1}^K\mathbb{E}\|x_{k, r-1} - x_{k-1, r}\|^2\right).
\end{align*}

Hence, under $\eta \leq 1/(2\sqrt{2eK^2T\sigma_2^2C_2^2d})$, which implies
\begin{align*}
    \frac{1}{4\eta^2} \geq 2eK^2T\sigma_2^2C_2^2d
\end{align*}
Using this fact and averaging the above inequality from $r=1$ to $R$, we obtain
\begin{align*}
    \mathbb{E}\|\nabla f(x^\mathrm{out})\|^2 
    \leq&\  \frac{2(f(x_{0} - f(x_*))}{\eta KR} + 2e \sigma_1^2C_1^2d.
\end{align*}
This is the desired result. 
\end{proof}

\begin{theorem}[Utility Bound]
\label{app_theorem: utility}
Suppose that Assumptions \ref{assump: local_loss_smoothness}, \ref{assump: optimal_sol}, \ref{assump: local_loss_gradient_boundedness} and \ref{assump: similarity} hold. Assume that $C_1 \geq G$ with $C_1 = \Theta(G)$, $C_2, C_3 \geq L$ with $C_2, C_3  = \Theta(L)$, $f(x_0) - f(x_*) = \Theta(1)$ and $n_\mathrm{min} P = \Omega( G^2\sqrt{d}/(L\varepsilon_\mathrm{DP}))$. Also, assume that the condition of $b$ in Proposition \ref{app_prop: dp_noise_level_DIFF2_BVR-L-SGD_routine} holds.  
Under the choices of $\sigma_1^2$, $\sigma_2^2$ and $\sigma_3^2$ in Proposition \ref{app_prop: dp_noise_level_DIFF2_BVR-L-SGD_routine}, if we appropriately choose $\eta = \Theta(\min\{1/L, 1/(K\zeta), \sqrt{b}/(\sqrt{K}L), 1/(K\sqrt{T}L\sigma_2\sqrt{d}), 1/(\sqrt{K}L\sigma_3 \sqrt{d}\})$ and $T = \Theta(\max\{1,\tau R\})$ with $\tau := (G^2\sqrt{d}/(Ln_\mathrm{min}P\varepsilon_\mathrm{DP}))^{2/3}$, DIFF2-BVR-L-SGD satisfies
\begin{align*}
    \mathbb{E}\|\nabla f(\widetilde x^\mathrm{out})\|^2  \leq&\ 
    \left(O\left(\frac{L}{K} + \zeta + \frac{L}{\sqrt{K}b}\right) + \widetilde O\left(\frac{L\sqrt{d}}{\sqrt{K}b\sqrt{\varepsilon_\mathrm{DP}}}\right)\right)\frac{1}{R} 
    + \widetilde O\left(\frac{L\sqrt{d}}{n_\mathrm{min}\sqrt{P}\varepsilon_\mathrm{DP}}\right)\frac{1}{\sqrt{R}}
    + \widetilde O\left(\frac{(LGd)^\frac{2}{3}}{(n_\mathrm{min}P\varepsilon_\mathrm{DP})^\frac{4}{3}}
    \right)
\end{align*}
In particular, setting 
\begin{align*}
    R = 1 + \left(\Theta\left(\frac{L}{K} + \zeta + \frac{L}{\sqrt{K}b}\right) + \widetilde \Theta\left( \frac{L\sqrt{d}}{\sqrt{K}b\sqrt{\varepsilon_\mathrm{DP}}}\right)\right)\frac{1}{\varepsilon_\mathrm{opt}} + \widetilde \Theta\left(\frac{L^2d}{n_\mathrm{min}^2P\varepsilon_\mathrm{DP}^2}\right)\frac{1}{\varepsilon_\mathrm{opt}^2}
\end{align*}
results in utility
\begin{align*}
    \mathbb{E}\|\nabla f(x^\mathrm{out})\|^2 \leq \varepsilon_\mathrm{opt} := \widetilde \Theta\left(\frac{(LGd)^\frac{2}{3}}{(n_\mathrm{min}P\varepsilon_\mathrm{DP})^\frac{4}{3}}\right).
\end{align*}
\end{theorem}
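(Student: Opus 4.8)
The plan is to mirror the proof of Theorem~\ref{theorem: utility_DIFF2_gd_routine}, feeding the per-round guarantee of Proposition~\ref{app_prop: convergence_rate} through the DP noise levels of Proposition~\ref{app_prop: dp_noise_level_DIFF2_BVR-L-SGD_routine} and then optimizing the restart interval $T$. Proposition~\ref{app_prop: convergence_rate} has already collapsed the entire BVR-L-SGD inner loop into the clean bound $\mathbb{E}\|\nabla f(x^\mathrm{out})\|^2 \leq O(1/(\eta K R) + \sigma_1^2 C_1^2 d)$, so once I use $f(x_0) - f(x_*) = \Theta(1)$ and $C_1 = \Theta(G)$, the only quantities left to unpack are $1/\eta$, $\sigma_1$, $\sigma_2$, and $\sigma_3$.

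First I would expand $1/\eta$ as the maximum of the five reciprocals in the prescribed step size, which splits the dominant term into
\begin{align*}
    \frac{1}{\eta K R} = \Theta\!\left(\frac{L}{KR} + \frac{\zeta}{R} + \frac{L}{\sqrt{K}\sqrt{b}\,R} + \frac{\sqrt{T}L\sigma_2\sqrt{d}}{R} + \frac{L\sigma_3\sqrt{d}}{\sqrt{K}\,R}\right).
\end{align*}
Then I would substitute $\sigma_2 = \widetilde\Theta(\sqrt{R}/(n_\mathrm{min}P\varepsilon_\mathrm{DP}))$ and $\sigma_3 = \widetilde\Theta(\sqrt{KR}/(n_\mathrm{min}\sqrt{P}\varepsilon_\mathrm{DP}) \vee 1/(b\sqrt{\varepsilon_\mathrm{DP}}))$ from Proposition~\ref{app_prop: dp_noise_level_DIFF2_BVR-L-SGD_routine}. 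This turns the $\sigma_2$-term into the bias contribution $\widetilde O(\sqrt{T}L\sqrt{d}/(n_\mathrm{min}P\varepsilon_\mathrm{DP}\sqrt{R}))$, and the $\sigma_3$-term splits (via the $\vee$) into $\widetilde O(L\sqrt{d}/(n_\mathrm{min}\sqrt{P}\varepsilon_\mathrm{DP}\sqrt{R}))$ plus $\widetilde O(L\sqrt{d}/(\sqrt{K}b\sqrt{\varepsilon_\mathrm{DP}}R))$. The DP variance term remains $\sigma_1^2 G^2 d = \widetilde O(RG^2 d/(Tn_\mathrm{min}^2 P^2\varepsilon_\mathrm{DP}^2))$.

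The crux is the $T$-optimization, which is structurally identical to the DIFF2-GD case: only the bias term $\sqrt{T}L\sqrt{d}/(n_\mathrm{min}P\varepsilon_\mathrm{DP}\sqrt{R})$ grows with $T$, while only the variance term $RG^2 d/(Tn_\mathrm{min}^2 P^2\varepsilon_\mathrm{DP}^2)$ shrinks with it; every other term is $T$-independent. Balancing the two gives $T = \Theta(\tau R)$ with $\tau = (G^2\sqrt{d}/(Ln_\mathrm{min}P\varepsilon_\mathrm{DP}))^{2/3}$, and taking $T = \Theta(\max\{1, \tau R\})$ respects $T \geq 1$ under the assumption $n_\mathrm{min}P = \Omega(G^2\sqrt{d}/(L\varepsilon_\mathrm{DP}))$. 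A short computation then shows the common value of the two balanced terms is exactly $\widetilde\Theta((LGd)^{2/3}/(n_\mathrm{min}P\varepsilon_\mathrm{DP})^{4/3}) = \varepsilon_\mathrm{opt}$. Collecting what remains gives the three-part bound of the form $c_1/R + c_3/\sqrt{R} + \varepsilon_\mathrm{opt}$, with $c_3 = \widetilde\Theta(L\sqrt{d}/(n_\mathrm{min}\sqrt{P}\varepsilon_\mathrm{DP}))$, and setting $R = \Theta(1) + c_1/\varepsilon_\mathrm{opt} + c_3^2/\varepsilon_\mathrm{opt}^2$ forces each term below $\varepsilon_\mathrm{opt}$.

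The main obstacle I anticipate is bookkeeping rather than any new idea. I must verify that the prescribed $\eta = \Theta(\min\{\dots\})$ is genuinely compatible with the step-size constraint of Proposition~\ref{app_prop: convergence_rate} once $T$ and the $\sigma$'s are fixed, and check that the conditions on $b$ and on $n_\mathrm{min}P$ place us in the subsampling-amplification regime of Proposition~\ref{app_prop: dp_noise_level_DIFF2_BVR-L-SGD_routine}. The genuinely new feature relative to DIFF2-GD is that the extra local-optimization, minibatch, and subsampled-noise contributions (the $\zeta$, $L/(\sqrt{K}b)$, and $\sigma_3$-induced terms) all land inside the $1/R$ coefficient $c_1$ and never touch the $T$-tradeoff, so the care required is simply to track these separately and confirm they stay at order $O(1/R)$ rather than contaminating the $\varepsilon_\mathrm{opt}$ floor.
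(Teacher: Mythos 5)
Your proposal is correct and follows essentially the same route as the paper's proof: start from Proposition~\ref{app_prop: convergence_rate}, expand $1/(\eta K R)$ into the five step-size terms, substitute the noise levels of Proposition~\ref{app_prop: dp_noise_level_DIFF2_BVR-L-SGD_routine}, balance the $\sqrt{T}\sigma_2$-bias term against the $\sigma_1^2$-variance term to get $T = \Theta(\max\{1,\tau R\})$ and the $\varepsilon_\mathrm{opt}$ floor, then choose $R$ to suppress the $c_1/R$ and $c_3/\sqrt{R}$ remainders. Your observation that the $\zeta$, minibatch, and $\sigma_3$-induced contributions are $T$-independent and land only in the $1/R$ and $1/\sqrt{R}$ coefficients is exactly the structure the paper's computation exhibits.
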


\begin{proof}
From Proposition \ref{app_prop: convergence_rate}, we know that 
\begin{align}
    \mathbb{E}\|\nabla f(\widetilde x^\mathrm{out})\|^2 \leq&\ O\left( \frac{1}{\eta KR} + G^2\sigma_1^2d\right). \label{app_ineq: convergence_rate}
\end{align}
since $C_1 = \Theta(G)$.

By Proposition \ref{app_prop: dp_noise_level_DIFF2_BVR-L-SGD_routine}, the necessary noise levels for $(\varepsilon_\mathrm{DP}, \delta_\mathrm{DP})$-DP with respect to $D_p$ are 
$\sigma_1^2 = \widetilde \Theta(R/(T n_\mathrm{min}^2P^2 \varepsilon_\mathrm{DP}^2))$, $\sigma_2^2 = \widetilde \Theta(R/(n_\mathrm{min}^2P^2\varepsilon_\mathrm{DP}^2))$ and $\sigma_3^2 =  KR/(n_\mathrm{min}^2P\varepsilon_\mathrm{DP}^2) + 1/(b^2\varepsilon_\mathrm{DP})$ 
respectively. Substituting the definition of $\sigma_1^2$ to (\ref{app_ineq: convergence_rate}), we have
\begin{align*}
    \mathbb{E}\|\nabla f(\widetilde x^\mathrm{out})\|^2 \leq&\  O\left( \frac{1}{\eta KR}\right) + \widetilde O\left(\frac{RG^2d}{Tn_\mathrm{min}^2P^2\varepsilon_\mathrm{DP}^2}\right). 
\end{align*}
Substituting the definition of $\eta$, $\sigma_2^2$, $\sigma_3^2$ to the obtained result, it holds that 
\begin{align*}
    &\mathbb{E}\|\nabla f(\widetilde x^\mathrm{out})\|^2 \\ \leq&\ O\left( \frac{L + K\zeta + \frac{\sqrt{K}}{\sqrt{b}}L}{KR}\right) + \widetilde O\left( \frac{K\sqrt{T}L\sigma_2\sqrt{d} + \sqrt{K}L\sigma_3\sqrt{d}}{KR} + \frac{RG^2d}{Tn_\mathrm{min}^2P^2\varepsilon_\mathrm{DP}^2}\right) \\
    =&\ O\left( \frac{\frac{L}{K} + \zeta + \frac{L}{\sqrt{Kb}}}{R}\right) + \widetilde O\left(\frac{\sqrt{T}L\sigma_2\sqrt{d} + \frac{L\sigma_3\sqrt{d}}{\sqrt{K}}}{R} + \frac{RG^2d}{Tn_\mathrm{min}^2P^2\varepsilon_\mathrm{DP}^2}\right) \\
    =&\  O\left( \frac{\frac{L}{K} + \zeta + \frac{L}{\sqrt{Kb}}}{R}\right) + \widetilde O\left( \frac{\frac{\sqrt{TR}L\sqrt{d}}{n_\mathrm{min}P\varepsilon_\mathrm{DP}} + \frac{L\sqrt{R}\sqrt{d}}{n_\mathrm{min}\sqrt{P}\varepsilon_\mathrm{DP}} + \frac{L\sqrt{d}}{\sqrt{K}b\sqrt{\varepsilon_\mathrm{DP}}}}{R} + \frac{RG^2d}{Tn_\mathrm{min}^2P^2\varepsilon_\mathrm{DP}^2}\right) \\
    =&\ \left(O\left(\frac{L}{K} + \zeta + \frac{L}{\sqrt{K}b}\right) + \widetilde O\left( \frac{L\sqrt{d}}{\sqrt{K}b\sqrt{\varepsilon_\mathrm{DP}}}\right)\right)\frac{1}{R} 
    + \widetilde O\left(\frac{\sqrt{T}L\sqrt{d}}{n_\mathrm{min}P\varepsilon_\mathrm{DP}} 
    + \frac{L\sqrt{d}}{n_\mathrm{min}\sqrt{P}\varepsilon_\mathrm{DP}}\right)\frac{1}{\sqrt{R}}
    + \widetilde O\left( \frac{RG^2d}{Tn_\mathrm{min}^2P^2\varepsilon_\mathrm{DP}^2}
    \right)
\end{align*}

Assume that $n_\mathrm{min} P = \Omega( G^2\sqrt{d}/(L\varepsilon_\mathrm{DP}))$.
We define
\begin{align*}
    T := \Theta\left( 1 \vee  \left(\frac{G^2\sqrt{d}}{Ln_\mathrm{min}P\varepsilon_\mathrm{DP}}\right)^\frac{2}{3} R\right)
\end{align*}
with $1 \leq T \leq R$.

Then, we have
\begin{align*}
    \mathbb{E}\|\nabla f(\widetilde x^\mathrm{out})\|^2  \leq&\ 
    \left(O\left(\frac{L}{K} + \zeta + \frac{L}{\sqrt{K}b}\right) + \widetilde O\left(\frac{L\sqrt{d}}{\sqrt{K}b\sqrt{\varepsilon_\mathrm{DP}}}\right)\right)\frac{1}{R} 
    + \widetilde O\left(\frac{L\sqrt{d}}{n_\mathrm{min}\sqrt{P}\varepsilon_\mathrm{DP}}\right)\frac{1}{\sqrt{R}}
    + \widetilde O\left(\frac{(LGd)^\frac{2}{3}}{(n_\mathrm{min}P\varepsilon_\mathrm{DP})^\frac{4}{3}}
    \right)
\end{align*}

Hence, setting
\begin{align*}
    R = 1 + \left(\Theta\left(\frac{L}{K} + \zeta + \frac{L}{\sqrt{K}b}\right) + \widetilde \Theta\left( \frac{L\sqrt{d}}{\sqrt{K}b\sqrt{\varepsilon_\mathrm{DP}}}\right)\right)\frac{1}{\varepsilon_\mathrm{opt}} + \widetilde \Theta\left(\frac{L^2d}{n_\mathrm{min}^2P\varepsilon_\mathrm{DP}^2}\right)\frac{1}{\varepsilon_\mathrm{opt}^2}
\end{align*}
with $\varepsilon_\mathrm{opt} := \widetilde \Theta\left(\frac{(LGd)^\frac{2}{3}}{(n_\mathrm{min}P\varepsilon_\mathrm{DP})^\frac{4}{3}}\right) $ 
gives the desired result. 
\end{proof}

\end{document}